\newcommand{\cmark}{\ding{51}}
\definecolor{light-gray}{gray}{0.8}
\newcommand{\xmark}{\color{light-gray}\ding{55}}
\theoremstyle{plain}
\newtheorem{theorem}{Theorem}
\newtheorem{proposition}{Proposition}
\newtheorem{lemma}{Lemma}
\theoremstyle{definition}
\newtheorem{definition}{Definition}
\theoremstyle{remark}
\pgfplotsset{compat=1.17}
\newcommand*{\citeincell}[1]{{\citep{#1}}}
\tikzset{
  partial ellipse/.style args={#1:#2:#3}{
    insert path={+ (#1:#3) arc (#1:#2:#3)}
  }
}
\tikzset{
  block/.style   ={draw, thick, rectangle, minimum height=1.5em,
    minimum width=5em},
  sum/.style     ={draw, circle, node distance=1cm},
  input/.style   ={coordinate},
  output/.style  ={coordinate},
  el/.style={inner sep=2pt, align=left},
}
\DeclareMathOperator*{\argmin}{argmin}
\DeclareMathOperator*{\minimize}{minimize}
\DeclareMathOperator*{\maximize}{maximize}
\DeclareMathOperator*{\diag}{diag}
\DeclareMathOperator{\tr}{tr}
\newcommand*{\barpiEt}{\bar{\pi}_{\!{\scriptscriptstyle E},t}}
\newcommand*{\barpiEn}{\bar{\pi}_{\!{\scriptscriptstyle E},n}}
\newcommand*{\barpiEone}{\bar{\pi}_{\!{\scriptscriptstyle E},\hspace*{-.2pt}\scriptscriptstyle 1}}
\newcommand*{\barpiET}{\bar{\pi}_{\!{\scriptscriptstyle E},{\scriptscriptstyle T}}}
\newcommand*{\barpiETT}{\bar{\pi}_{\!{\scriptscriptstyle E},{\scriptscriptstyle T}+1}}
\newcommand*{\barpiEnn}{\bar{\pi}_{\!{\scriptscriptstyle E},n+1}}
\newcommand*{\barpiEst}{\bar{\pi}^s_{\!{\scriptscriptstyle E},t}}
\newcommand*{\barpiEsn}{\bar{\pi}^s_{\!{\scriptscriptstyle E},n}}
\newcommand*{\barpiEstt}{\bar{\pi}^s_{\!{\scriptscriptstyle E},t+1}}
\newcommand*{\barpiEsTT}{\bar{\pi}^s_{\!{\scriptscriptstyle E},{\scriptscriptstyle T}+1}}
\newcommand*{\barpiEsnn}{\bar{\pi}^s_{\!{\scriptscriptstyle E},n+1}}
\newcommand*{\rVertast}{\rVert_{\!\ast}}
\newcommand*{\bigrVertast}{\bigr\rVert_{\!\ast}}
\newcommand*{\bigranglecA}{\bigr\rangle_{\!\!\mathcal{A}}}
\newcommand*{\tauonet}{\tau_{\hspace*{-.3pt}1\hspace*{-.5pt}:\hspace*{-.2pt}t}}
\newcommand*{\tauonett}{\tau_{\hspace*{-.3pt}1\hspace*{-.5pt}:\hspace*{-.2pt}t+1}}
\newcommand*{\tauoneT}{\tau_{\hspace*{-.3pt}1\hspace*{-.5pt}:\hspace*{-.2pt}T}}
\newcommand*{\tauoneTT}{\tau_{\hspace*{-.3pt}1\hspace*{-.5pt}:\hspace*{-.2pt}T+1}}
\newcommand*{\tauonen}{\tau_{\hspace*{-.3pt}1\hspace*{-.5pt}:\hspace*{-.2pt}n}}
\newcommand*{\tauonenn}{\tau_{\hspace*{-.3pt}1\hspace*{-.5pt}:\hspace*{-.2pt}n+1}}
\newcommand*{\piE}{\pi_{\!\scriptscriptstyle E}}
\newcommand*{\hatqE}{\hat{q}_{\scriptscriptstyle E}}
\newcommand*{\qE}{q_{\scriptscriptstyle E}}
\newcommand*{\rE}{r_{\!\scriptscriptstyle E}}
\newcommand*{\hatrE}{\hat{r}_{\!\scriptscriptstyle E}}
\newcommand*{\piT}{\pi_{\hspace{-0.05em}\scriptscriptstyle T}}
\newcommand*{\tauT}{\tau_{\hspace{-0.05em}\scriptscriptstyle T}}
\newcommand*{\psiE}{\psi_{\!\scriptscriptstyle E}}
\newcommand*{\alphaT}{\alpha_{\scriptscriptstyle T}}
\newcommand*{\etaT}{\eta_{\scriptscriptstyle T}}
\newcommand*{\PsiOmega}{\Psi_{\!\scriptscriptstyle\Omega}}
\newcommand*{\JOmega}{J_{\scriptscriptstyle\Omega}}
\newcommand*{\cPOmega}{\mathcal{P}_{\!\scriptscriptstyle\Omega}}
\newcommand*{\Pis}{\Pi^{\hspace*{-.5pt}\textrm{\raisebox{.5pt}{$s$}}}}
\newcommand*{\piinPi}{{\pi\hspace*{-.5pt}{\textrm{\raisebox{.3pt}{$\scriptscriptstyle\in$}}}\Pi}}
\newcommand*{\pisinPis}{{\pi^{\hspace*{-.5pt}s}\hspace*{-.5pt}{\textrm{\raisebox{1pt}{$\scriptscriptstyle\in$}}}\Pi^{\hspace*{-.5pt}\textrm{\raisebox{.5pt}{$s$}}}}}
\newcommand*{\pisinDeltacA}{{\pi^{\hspace*{-.5pt}s}\hspace*{-.5pt}{\textrm{\raisebox{.7pt}{$\scriptscriptstyle\in$}}}\Delta_{\mathcal{A}}}}
\newcommand*{\tildepisinDeltacA}{{\tilde{\pi}^{\hspace*{-.5pt}s}\hspace*{-.5pt}{\textrm{\raisebox{.7pt}{$\scriptscriptstyle\in$}}}\Delta_{\mathcal{A}}}}
\newcommand*{\sincS}{{s\hspace*{-.1pt}{\textrm{\raisebox{.45pt}{$\scriptscriptstyle\in$}}}\hspace*{-.1pt}\mathcal{S}}}
\newcommand*{\rp}{\mathrm{p}}
\renewcommand*{\rq}{\mathrm{q}}
\newcommand*{\dprime}{\prime\prime}
\newcommand*{\rI}{\mathrm{I}}
\newcommand*{\cA}{\mathcal{A}}
\newcommand*{\cT}{\mathcal{T}}
\newcommand*{\cW}{\mathcal{W}}
\newcommand*{\cO}{\mathcal{O}}
\newcommand*{\cP}{\mathcal{P}}
\newcommand*{\cS}{\mathcal{S}}
\newcommand*{\cF}{\mathcal{F}}
\newcommand*{\bE}{\mathbb{E}}
\newcommand*{\bN}{\mathbb{N}}
\newcommand*{\bR}{\mathbb{R}}
\newcommand*{\bI}{\mathbb{I}}
\newcommand*{\cN}{\mathcal{N}}
\newcommand*{\cL}{\mathcal{L}}
\newcommand*{\cJ}{\mathcal{J}}
\newcommand*{\sT}{\mathsf{T}}
\newcommand*{\rd}{\mathrm{d}}
\newcommand*{\lnq}{\ln q}
\newcommand*{\Rm}{Riemannian}
\newcommand*{\Sh}{Shannon}
\newcommand*{\Bz}{Boltzmann}
\newcommand*{\Bn}{Banach}
\newcommand*{\Ts}{Tsallis}
\newcommand*{\Bg}{Bregman}
\newcommand*{\Gs}{Gaussian}
\newcommand*{\Ec}{Euclidean}
\newcommand*{\Mkv}{Markovian}
\newcommand*{\KuLi}{Kullback-Leibler}
\newcommand*{\CaSc}{Cauchy-Schwarz}
\newcommand*{\LeFe}{Legendre-Fenchel}
\newcommand*{\Ch}{Cholesky}
\newcommand*{\Lip}{Lipschitz}
\newcommand*{\KL}{\mathrm{KL}}
\newcommand*{\MJ}{MuJoCo}
\newcommand*{\Lp}{$L^\mathrm{p}$}
\newcommand*{\Lq}{$L^\mathrm{q}$}
\newcommand*{\suma}{$+$}
\newcommand*{\rew}{$r_{\phi}(s,\cdot)$}
\newcommand*{\rewtwo}{$r_{\phi}(s,a)$}
\newcommand*{\dervone}{$\psi_{\phi}(s,\cdot)$}
\newcommand*{\dervtwo}{$\psi_{\phi}(s,a)$}
\newcommand*{\bfn}{$d_\xi(s)$}
\definecolor{superlightgray}{RGB}{235,235,235}
\definecolor{verylightgray}{RGB}{165,165,165}
\newcommand*{\alg}[1]{Algorithm~\ref{#1}}
\newcommand*{\fig}[1]{Fig.~\ref{#1}}
\newcommand*{\tab}[1]{Tab.~\ref{#1}}
\newcommand*{\eq}[1]{Eq.~(\ref{#1})}
\newcommand*{\eqbrief}[1]{Eq.~(\ref{#1})}
\newcommand*{\thm}[1]{Theorem~\ref{#1}}
\newcommand*{\defn}[1]{Definition~\ref{#1}}
\newcommand*{\prop}[1]{Proposition~\ref{#1}}
\newcommand*{\lem}[1]{Lemma~\ref{#1}}
\newcommand*{\sect}[1]{Section~\ref{#1}}
\newcommand*{\appsect}[1]{Appendix~\ref{#1}}
\newcommand*{\topicquad}{\ }
\newcommand*{\enumone}{{\raisebox{.53pt}{\small\textcircled{\raisebox{-.23pt} {\hskip.06pt\scriptsize1}}}}}
\newcommand*{\enumtwo}{{\raisebox{.53pt}{\small\textcircled{\raisebox{-.2pt} {\hskip.1pt\scriptsize2}}}}}
\newcommand*{\enumthree}{{\raisebox{.53pt}{\small\textcircled{\raisebox{-.25pt} {\hskip.15pt\scriptsize3}}}}}
\newcommand*{\enumfour}{{\raisebox{.53pt}{\small\textcircled{\raisebox{-.25pt} {\scriptsize4}}}}}
\newcommand*{\enumfive}{{\raisebox{.53pt}{\small\textcircled{\raisebox{-.25pt} {\hskip.1pt\scriptsize5}}}}}
\newcommand*{\subfigrefA}{(a)}
\newcommand*{\subfigrefB}{(b)}
\newcommand*{\subfigrefC}{(c)}
\title{Robust Imitation via\\Mirror Descent Inverse Reinforcement Learning}
\author{
  Dong-Sig Han,\hskip5ptHyunseo Kim,\hskip5ptHyundo Lee,\hskip4.5ptJe-Hwan Ryu,\hskip4ptByoung-Tak Zhang\\
  Artificial Intelligence Institute, Seoul National University\\
  \texttt{\{dshan\hskip-1.5pt,\,hskim\hskip-1.5pt,\,hdlee\hskip-1.5pt,\,jhryu\hskip-1.5pt,\,btzhang\}@bi.snu.ac.kr}
}
\begin{document}

\maketitle

\begin{abstract}
Recently, adversarial imitation learning has shown a scalable reward acquisition method for inverse reinforcement learning (IRL) problems. However, estimated reward signals often become uncertain and fail to train a reliable statistical model since the existing methods tend to solve hard optimization problems directly. Inspired by a first-order optimization method called mirror descent, this paper proposes to predict a sequence of reward functions, which are iterative solutions for a constrained convex problem. IRL solutions derived by mirror descent are tolerant to the uncertainty incurred by target density estimation since the amount of reward learning is regulated with respect to local geometric constraints. We prove that the proposed mirror descent update rule ensures robust minimization of a Bregman divergence in terms of a rigorous regret bound of $\mathcal{O}(1/T)$ for step sizes $\{\eta_t\}_{t=1}^{T}$. Our IRL method was applied on top of an adversarial framework, and it outperformed existing adversarial methods in an extensive suite of benchmarks.
\end{abstract}

\section{Introduction}\label{sect:intro}
One crucial requirement of practical imitation learning methods is \textit{robustness}, often described as learning expert behavior for a finite number of demonstrations, overcoming various realistic challenges \citep{robust_il}. In real-world problems such as motor control tasks, the demonstration size can be insufficient to create a precise model of an expert \citep{deepmindgrasp}, and even in some cases, demonstrations can be noisy or suboptimal to solve the problem \citep{NIPS2013_fd5c905b}. For such challenging scenarios, imitation learning algorithms inevitably struggle with unreliable statistical models; thus, the way of handling the uncertainty of estimated cost functions dramatically affects imitation performance. Therefore, a thorough analysis of addressing these issues is required to construct a robust algorithm.

Inverse reinforcement learning (IRL) is an algorithm for learning ground-truth rewards from expert demonstrations where the expert acts optimally with respect to an unknown reward function \citep{irl, pwil}. Traditional IRL studies solve the imitation problem based on iterative algorithms \citep{maxmgirl1, maxentirl}, alternating between the reward estimation process and a reinforcement learning (RL)  \citep{rl} algorithm. In contrast, newer studies of adversarial imitation learning (AIL)  \citep{gail, airl} rather suggest learning reward functions of a certain form ``directly,'' by using adversarial learning objectives \citep{gan} and nonlinear discriminative neural networks \citep{airl}. Compared to classical approaches, the AIL methods have shown great success on control benchmarks in terms of scalability for challenging control tasks \citep{dac}.

Technically, it is well known that  AIL formulates a divergence minimization problem with its discriminative signals, which incorporates fine-tuned estimations of the target densities \citep{fairl}. Through the lens of differential geometries, the limitation of AIL naturally comes from the implication that minimizing the divergence does not guarantee unbiased progression due to constraints of the underlying space \citep{bregdist}. In order to ensure further stability, we argue that an IRL algorithm's progress needs to be regulated, yielding gradual updates with respect to local geometries of policy distributions.

We claim that there are two issues leading to unconstrained policy updates: \enumone{} a statistical divergence often cannot be accurately obtained for challenging problems, and \enumtwo{} an immediate divergence between agent and expert densities does not guarantee unbiased learning directions. Our approach is connected to a collection of optimization processes called mirror descent (MD) \citep{md}. For a sequence of parameters $\{w_t\}_{t=1}^T$ and a convex function $\Omega$, an MD update for a cost function $F_t$ is derived as
\begin{equation}\label{eq:md}
  \nabla\Omega\bigl(\hspace*{.2pt}w_{t+1}\hspace*{-.5pt}\bigr)=\nabla\Omega\bigl(\hspace*{.2pt}w_t\bigr)-\eta_t\nabla\hspace*{-1pt} F_t\bigl(\hspace*{.2pt}w_t\hspace*{-.1pt}\bigr).
\end{equation}
In the equation, the gradient $\nabla\Omega(\hspace*{1pt}\cdot\hspace*{1pt})$ creates a transformation that links a parametric space to its dual space. Theoretically, MD is a first-order method for solving constrained problems, which enjoys rigorous regret bounds for various geometries \citep{omd_univ, omd_converge} including probability spaces. Thus, applying MD to the reward estimation process can be efficient in terms of the number of learning phases.

In this paper, we derive an MD update rule in IRL upon a postulate of nonstationary estimations of the expert density, resulting in convergent reward acquisition even for challenging problems. Compared to MD algorithms in optimization studies, our methodology draws a sequence of functions on an alternative space induced by a reward operator $\PsiOmega$ (\defn{defn:rop}). To this end, we propose an AIL algorithm called mirror descent adversarial inverse reinforcement learning (MD-AIRL). Our empirical evidence showed that MD-AIRL outperforms the regularized adversarial IRL (RAIRL) \citep{rairl} methods. For example, MD-AIRL showed higher performance in 30 distinct cases among 32 different configurations in challenging MuJoCo \citep{mujoco} benchmarks, and it also clearly showed higher tolerance to suboptimal data. All of these results are strongly aligned with our theoretical analyses.

\vskip-7pt
\begin{table}[h]
\centering
  \caption{A technical overview. Traditional IRL methods lack scalability, and RAIRL does not guarantee convergence of its solution for realistic cases. MD-AIRL combines desirable properties.}\label{tab:overview}
  \begin{adjustbox}{width=0.99\textwidth, center}
\begin{tabular}{lcccccc}
\toprule
   Method&
   Reference&
   Scalability&
   Rewards&
   \Bg{} divergence &
   Iterative solutions&
   Convergence analyses\\
  \midrule
  BC {\small(\citeyear{bc})}&\small\citeincell{bc}&\cmark&\xmark&\xmark&\xmark&\cmark\\
  MM-IRL {\small(\citeyear{maxmgirl1})}&\small\citeincell{maxmgirl1}&\xmark&\cmark&\xmark&\cmark&\cmark\\
  GAIL {\small(\citeyear{gail})}&\small\citeincell{gail}&\cmark&\cmark&\xmark&\xmark&\xmark\\
  RAIRL {\small(\citeyear{rairl})}&\small\citeincell{rairl}&\cmark&\cmark&\cmark&\xmark&\xmark\\
\midrule
   \textbf{MD-AIRL (ours)}&\textbf{--}&\cmark&\cmark&\cmark&\cmark&\cmark\\
\bottomrule
\end{tabular}
\end{adjustbox}
\end{table}

\textbf{Our contributions.\topicquad}
Our work is complementary to previous IRL studies; the theoretical and technical contributions are built upon a novel perspective of considering iterative RL and IRL algorithms as a combined optimization process with dual aspects. Comparing MD-AIRL and RAIRL, both are highly generalized algorithms in terms of a variety of choices of divergence functions. \tab{tab:overview} shows that MD-AIRL brings beneficial results in realistic situations of limited time and data, since our approach is more aligned with earlier theoretical IRL studies providing formalized reward learning schemes and convergence guarantees. In summary, we list our main contributions below:
\begin{itemize}[leftmargin=*, noitemsep, partopsep=0pt, topsep=0pt, parsep=0pt]
  \item Instead of a monolithic estimation process of a global solution in AIL, we derive a sequence of reward functions that provides iterative local objectives (\sect{sect:reward}).
  \item We formally prove that rewards derived by an MD update rule guarantee the robust performance of divergence minimization along with a rigorous regret bound (\sect{sect:theory}).
  \item We propose a novel adversarial algorithm that is motivated by mirror descent, which is tolerant of unreliable discriminative signals of the AIL framework (Sections \ref{sect:mdirl} and \ref{sect:expr}).
\end{itemize}

\section{Related Works}
\textbf{Mirror descent.\topicquad}
We are interested in a family of statistical divergences called the \Bg{} divergence \citep{bregman1967relaxation}. The divergence generalizes constrained optimization problems such as least squares \citep{cvx, fundamentals_cvx}, and it also has been applied in various subfields of machine learning \citep{cluster_breg, bregmn}. In differential geometries, the \Bg{} divergence is a first-order approximation for a metric tensor and satisfies metric-like properties \citep{bregdist, bregman_traingle}. MD is also closely related to optimization methods regarding non-\Ec{} geometries with a discretization of steps such as natural gradients \citep{ng, md_info}. In the primal space, training with the infinitesimal limit of MD steps corresponds to a \Rm{} gradient flow \citep{gf, mirrorless}. In the RL domain, MD has been recently studied for policy optimization \citep{policy_md1, policy_md2, mdpo}. In this paper, we focus on learning with suboptimal representations of policy, and our distinct goal is to draw a robust reward learning scheme based on MD for the IRL problem.

\textbf{Imitation learning.\topicquad}
As a statistical model for the information geometry \citep{infogeo}, energy-based policies (i.e., \Bz{} distributions) appeared in early IRL studies, such as Bayesian IRL, natural gradient IRL, and maximum likelihood IRL \citep{bayesirl, ngirl, mlirl} for modeling expert distribution to parameterized functions. Notably, MaxEnt IRL \citep{maxentirl, maxcausalirl} is one of the representative classical IRL algorithms based on an information-theoretic perspective toward IRL solutions. Also, discriminators of AIL are trained by logistic regression; thus, the logit score of the discriminator defines an energy function that approximates the truth data density for the expert distribution \citep{ebgan}. Other statistical entropies have also been applied to AIL, such as the \Ts{} entropy \citep{tsallis_il}. On the one hand, our approach is closely related to RAIRL \citep{rairl}, which defined its AIL objective using the \Bg{} divergence. On the other hand, this work further employs the \Bg{} divergence to derive iterative MD updates for reward functions, resulting in theoretically pleasing properties while retaining the scalability of AIL.

\textbf{Learning theory.\topicquad}
There have been considerable achievements in dealing with temporal costs $\{F_t\}_{t=1}^\infty$, often referred to as \textit{online learning} \citep{online}. The most ordinary approach is stochastic gradient descent (SGD): $w_{t+1}\!=w_t-\eta_t \nabla F_t(w_t)$. In particular, SGD is a desirable algorithm when the parameter $w_t$ resides in the \Ec{} space since it ensures unbiased minimization of the expected cost. Apparently, policies appear in geometries of probabilities; thus, an incurred gradient may not be the direction of the steepest descent due to geometric constraints \citep{ng, infogeo}. An online form of MD in \eq{eq:md} is analogous to SGD for non-\Ec{} spaces, where each local metric is specified by a \Bg{} divergence \citep{mirrorless}. Our theoretical findings and proofs follow the results of online mirror descent (OMD) that appeared in previous literature for general aspects  \citep{md,omd_univ,md_nonlin,md_info,omd_converge,mirrorless}. Our analyses extend existing theoretical results to IRL; at the same time, they are also highly general to cover various online imitation learning problems which require making decisions sequentially.

\section{Background}\label{sect:prem}
For sets $X$ and $Y$, let $Y^X$ be a set of functions from $X$ to $Y$ and $\Delta_X$ ($\Delta^Y_X$) be a set of (conditional) probabilities over $X$ (conditioned on $Y$). We consider an MDP defined as a tuple $(\cS,\cA,P,r,\gamma)$ with the state space $\cS$, the action space $\cA$, the \Mkv{} transition kernel $P\in\Delta^{\cS\times\cA}_\cS$, the reward function $r\in\bR^{S\times A}$ and the discount factor $\gamma\in[0,1)$. Let a function $\Omega\!: \Delta_{\hspace*{-1.5pt}\cA}\!\to\!\bR$ be strongly convex. Using $\Omega$,
the \Bg{} divergence is defined as
\begin{equation*}
  D_\Omega(\pi^s\Vert\,\hat{\pi}^s)\coloneqq\Omega(\pi^s)-\Omega(\hat{\pi}^s)-\!\bigl\langle\nabla\Omega(\hat{\pi}^s),\;\pi^s-\hat{\pi}^s\bigr\rangle_{\!\!\scriptscriptstyle\cA},
\end{equation*}
where $\pi^s$ and $\hat{\pi}^s$ denote arbitrary policies for a given state $s$. For a representative divergence, one can consider the popular \KuLi{} (KL) divergence. The KL divergence is a \Bg{} divergence when $\Omega$ is specified as the negative \Sh{}  entropy: $\Omega(\pi^s) = \sum_{a} \pi^s(a)\ln \pi^s(a)$.

Regularization of the policy distribution with respect to convex $\Omega$ brings distinct properties to the learning agent \citep{reg_mdp, rac}. The objective of regularized RL is to find $\pi\,\textrm{\raisebox{.5pt}{$\textstyle\in$}}\,\Pi$ that maximizes the expected value of discounted cumulative returns along with a causal convex regularizer $\Omega$, i.e.,
\begin{equation}\label{eq:reg_obj}
  \JOmega(\pi, r)\coloneqq\bE_\pi\!\Bigl[\,\sum\nolimits_{i=0}^\infty\gamma^i\hspace*{-.2pt}\Bigl\{r(s_i, a_i)\!-\!\Omega\bigl(\pi(\,\cdot\,|s_i)\bigr)\!\Bigr\}\hspace*{-.5pt}\Bigr],
\end{equation}
where the subscript $\pi$ on the expectation indicates that each action is sampled by $\pi(\hspace*{.5pt}\cdot\hspace*{.5pt}|s_i)$ for the given MDP.
In this setup, a regularized RL algorithm finds a unique solution in a subset of the conditional probability space denoted as $\Pi\! \coloneqq\! [\Pis\hspace*{-.5pt}]_{\hspace*{-.8pt}\sincS}\! \subset\! \Delta^\cS_\cA$ constrained by the parameterization of a policy.

The objective of IRL is to find a function $\rE$ that rationalizes the behavior of an expert policy $\piE$. For an inner product $\langle\hspace*{.2pt}\cdot,\cdot\rangle_{\!\scriptscriptstyle\cA}$, consider $\Omega^\ast$\!, the \LeFe{} transform (convex conjugate) of $\Omega$:
\begin{equation}\label{eq:conj}
 \forall\,q^s{\textrm{\raisebox{.5pt}{$\textstyle\in$}}}\,\bR^{\cA}\!,\hspace*{10pt}  \Omega^\ast\bigl(q^s\bigr)\, = \!\max_{\pisinDeltacA}\!\bigl\langle\hspace*{.2pt}\pi^s\!,\,\,q^s\hspace*{.5pt}\bigr\rangle_{\!\!\scriptscriptstyle\cA} \!\!- \Omega\bigl(\hspace*{1pt}\pi^s\hspace*{-.5pt}\bigr),
\end{equation}
where $q^s$ and $\pi^s$ denote the shorthand notation of $q(s,\cdot\,)$ and $\pi(\hspace*{.5pt}\cdot\hspace*{.5pt}|s)$. Differentiating both sides with respect to $q^s$, the gradient of conjugate $\nabla\Omega^\ast$ maps $q^s$ to a policy distribution.
One fundamental property in \textit{regularized} IRL \citep{reg_mdp} is that $\piE$ is the maximizing argument of $\Omega^\ast$ for $\qE$, where $\qE$ is the regularized state-action value function $\qE(s,a)\!=\!\bE_{\piE}\![\sum_{i=0}^\infty \gamma^i \{ \rE(s,a)\! -\! \Omega(\piE^s)\}\!|s_0\!\!=\!\!s,\hspace*{-.1pt}a_0\!\!=\!\!a]$. Note that the problem is ill-posed, and every $\hatrE$ that makes its value function $\hatqE$ satisfy $\piE^s\!=\!\!\nabla\Omega^\ast(\hatqE^s)\ \forall s\in\cS$ is a valid solution. Addressing this issue, \citet{rairl} proposed a reward operator $\PsiOmega\!:\!\Delta^\cS_\cA\hspace*{-3pt}\to\!\bR^{\cS\times\cA}$, providing a unique IRL solution by $\PsiOmega(\piE)$.
\begin{definition}[Regularized reward operators]\label{defn:rop}
Define the regularized reward operator $\PsiOmega$ as $\psi_\pi(s,a)\coloneqq \Omega^\prime(s,a;\pi)-\bigl\langle \pi^s\!,\, \nabla\Omega(\pi^s)\hspace*{-1pt}\bigr\rangle_{\!\!\scriptscriptstyle\cA}\!\!+ \Omega(\pi^s)$, for $\Omega^\prime(s,\cdot\,;\pi)\coloneqq\nabla\Omega(\pi^s)=\bigl[\nabla_p \Omega(p)\bigr]_{\! p=\pi(\cdot|s)}$.
\end{definition}
The reward function $\psiE\coloneqq\PsiOmega(\piE)$ replaces its state-action value function, since the sum of composite \Bg{} divergences derived from \eq{eq:reg_obj} allows reward learning in a greedy manner \citep{rairl}.

\begin{wrapfigure}{r}{0.47\textwidth}
\vspace*{-36pt}
\newcommand*{\PrimalSpace}{Policy Space}
\newcommand*{\DualSpace}{Reward Space}
\newcommand*{\PrimalGeometry}{$\Pi$}
\newcommand*{\PrimalGeometryU}{$\Delta^{\!\cS}_{\!\cA}$}
\newcommand*{\DualGeometry}{$\bR^{\cS\!\times\!\cA}$}
\newcommand*{\Convert}{$\PsiOmega$}
\newcommand*{\Reverse}{$\nabla\Omega^\ast$}
\newcommand*{\yslant}{0.09}
\newcommand*{\xslant}{0}
\newcommand*{\don}{2pt}
\newcommand*{\doff}{1.8pt}
\centering
\begin{adjustbox}{width=.405\textwidth,center}
\begin{tikzpicture}[tight background, every node/.style={inner sep=0,outer sep=0,minimum size=1cm},on grid]
\clip (-3.23, -1.59) rectangle+(6.46, 3.53);
\begin{scope}[xshift=-1.85cm, every node/.append style={yslant=\yslant}, yslant=\yslant, xslant=\xslant]
  \begin{scope}[thin]
    \begin{scope}
      \clip (-1.3,-1.45) rectangle (1.3,1.45);
      \draw[black] (-1.3,-1.45) rectangle (1.3,1.45);
    \end{scope}
    \begin{scope}
      \clip (-1.3,-1.45) -- (-1.35,-1.4) -- (-1.35,1.5) -- (-1.3,1.45) -- cycle;
      \draw[black,fill=gray] (-1.3,-1.45) -- (-1.35,-1.4) -- (-1.35,1.5) -- (-1.3,1.45) -- cycle;
    \end{scope}
    \begin{scope}
      \clip (-1.3,1.45) -- (1.3,1.45) -- (1.25,1.5) -- (-1.35,1.5) -- cycle;
      \draw[black,fill=gray] (-1.3,1.45) -- (1.3,1.45) -- (1.25,1.5) -- (-1.35,1.5) -- cycle;
    \end{scope}
  \end{scope}
  \draw[semithick,fill=superlightgray, rotate around={-50:(0.3,0.3)}] (0.3,0.3) ellipse (0.7 and 0.8);
  \draw[thin, rotate around={145:(1.58,2.1)}] (1.58,2.1) [partial ellipse=48:129:2.5cm and 3.5cm];
  \node at (-0.05, 1.68) {\PrimalSpace};
  \node at (-1.07, 1.17) {\PrimalGeometry};
  \node[yslant=\yslant,line width=0.2mm,draw,fill=black,circle,minimum size=3pt] (PA) at (0.6,0.6) {};
  \node[xslant=\yslant,line width=0.2mm,draw,fill=black,circle,minimum size=3pt] (PB) at (-0,0) {};
  \node[draw,fill=white,circle,minimum size=3pt] (PC) at (-0.8,-0.8) {};
  \fill[black] (PA) node [left=-0.2, scale=.9, xslant=-\yslant] {$\pi_{\hspace*{-.5pt}t}$} (PB) node [right, scale=.9, xslant=-\yslant] {$\pi_{\hspace*{-.3pt}t+1}$};
  \node[scale=.9] at (-.92,-1) {$\tilde{\pi}_{\hspace*{-.3pt}t+1}$};
  \node[scale=.8] at (-.85,-.05) {$\cP_\Omega$};
  \draw[-latex,  dash pattern={on \don off \doff}, verylightgray, very thick, shorten >=1pt] (PC) to[out=90,in=180] (PB);
\end{scope}
\begin{scope}[xshift=1.85cm, yslant=-\yslant,xslant=\xslant,every node/.append style={yslant=-\yslant}]
  \begin{scope}[thin]
    \begin{scope}
      \clip (-1.3,-1.45) rectangle (1.3,1.45);
      \draw[black] (-1.3,-1.45) rectangle (1.3,1.45);
    \end{scope}
    \begin{scope}
      \clip (1.3,-1.45) -- (1.35,-1.4) -- (1.35,1.5) -- (1.3,1.45) -- cycle;
      \draw[black,fill=gray] (1.3,-1.45) -- (1.35,-1.4) -- (1.35,1.5) -- (1.3,1.45) -- cycle;
    \end{scope}
    \begin{scope}
      \clip (-1.3,1.45) -- (1.3,1.45) -- (1.35,1.5) -- (-1.25,1.5) -- cycle;
      \draw[black,fill=gray] (-1.3,1.45) -- (1.3,1.45) -- (1.35,1.5) -- (-1.25,1.5) -- cycle;
    \end{scope}
  \end{scope}
  \node at (0.05,1.68) {\DualSpace};
  \draw[semithick,-{latex}] (-1.3,0.95) .. controls (0.1,0.8) and (0.7,-0.45) .. (1.,-1.25);
  \node[draw,xslant=-\yslant,fill=black,circle,minimum size=3.1pt] (DA) at (-0.55,0.73) {};
  \node[draw,xslant=-\yslant,fill=black,circle,minimum size=3.1pt] (DB) at (0.35,0.02) {};
  \node[draw,xslant=-\yslant,fill=white,circle,minimum size=3pt] (DC) at (0.69,0.5) {};
  \fill[black] (DA)  node [above=-0.2, scale=.9, xslant=\yslant] {$\psi_{\hspace*{-.3pt}t}$};
  \fill[black] (DB)  node [right, scale=.9, xslant=\yslant] {$\psi_{\hspace*{-.3pt}t+1}$};
  \node[scale=0.8] at (.92, 1.25) {\DualGeometry};
  \draw[-latex, dash pattern={on \don off \doff}, verylightgray, very thick,  shorten >=1pt] (DA) -- (DC);
\end{scope}
\draw[thin,-latex,shorten >=2pt](PA) to[out=18,in=168] (DA);
\draw[thin,-latex,shorten >=2pt](DB) to[out=-150,in=-30] (PB);
\draw[-latex,  dash pattern={on \don off \doff}, verylightgray, very thick, shorten >=1pt](DC) to[out=-45,in=-33] (PC);
\node at (0.03,1.16) {\Convert};
\node[scale=.85] at (0,-0.835) {\Reverse};
\end{tikzpicture}
\end{adjustbox}
\vskip-3pt
\caption{A schematic illustration. MD is locally constrained by a divergence (gray area), i.e., $D_\Omega(\,\cdot\, \Vert\hspace*{-.2pt} \pi_t\hspace*{-1pt})$. An MD update is performed for the reward function $\psi_t$ in an associated reward space of defined by $\PsiOmega$, and $\pi_{t+1}$ is achieved in the desired space of $\Pi$ by applying $\nabla\Omega^\ast$ for the function $\psi_{t+1}$. The gray dashed lines provide another interpretation of MD with $\tilde{\pi}_{t+1}$ and the  projection operator $\cPOmega$.} \label{fig:mdschema}
\vskip-23pt
\end{wrapfigure}
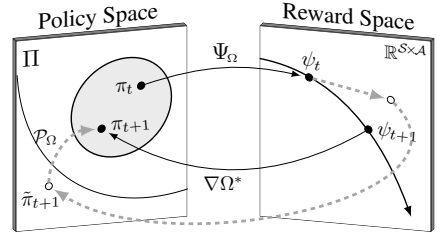
\section{RL-IRL as a Proximal Method}\label{sect:reward}

\textbf{Associated reward functions.\topicquad}
We consider the RL-IRL processes as a sequential algorithm with local constraints and define sequences $\{\pi_t\}_{t=1}^\infty$ and $\{\psi_t\}_{t=1}^\infty$ that denote policies and associated reward functions, respectively. The associated reward functions are in a space $\PsiOmega(\Pi)$, which is an alternative space of the dual space, defined by the regularized reward operator $\PsiOmega$. Formally, we provide \lem{lem:psi_unique}, which shows a bijective relation between the operators $\nabla\Omega^\ast$ and $\PsiOmega$ in the set $\Pi$. The proof is in \appsect{appsect:proof}.
\begin{lemma}[Natural isomorphism]\label{lem:psi_unique}
Let $\psi\in\PsiOmega(\Pi)$ for $\PsiOmega(X)\!\coloneqq\!\{\,\psi\mid\psi(s,a)=\psi_\pi(s,a),\ \forall s\!\in\!\cS, a\!\in\!\cA,\pi\!\in\!X\}$. Then, $\nabla\Omega^\ast(\psi)$ is unique and for every $\pi\! =\!\nabla\Omega^{\hspace*{-1pt}\ast}\!(\psi)$, $\pi\in\Pi$.
\end{lemma}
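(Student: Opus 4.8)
The plan is to show that $\nabla\Omega^\ast$ restricted to $\PsiOmega(\Pi)$ is a well-defined left inverse of $\PsiOmega$, so that $\nabla\Omega^\ast(\psi_\pi)=\pi$ for every $\pi\in\Pi$; uniqueness of $\nabla\Omega^\ast(\psi)$ and the membership $\pi\in\Pi$ then follow immediately. The first observation I would record is that, for a fixed state $s$, the reward $\psi_\pi(s,\cdot)$ differs from $\nabla\Omega(\pi^s)$ only by an additive constant in $a$: writing $c_\pi(s):=\Omega(\pi^s)-\langle\pi^s,\nabla\Omega(\pi^s)\rangle_{\!\scriptscriptstyle\cA}$, \defn{defn:rop} gives $\psi_\pi(s,a)=[\nabla\Omega(\pi^s)]_a+c_\pi(s)$, where $c_\pi(s)$ is independent of $a$.

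Next I would evaluate the maximization defining $\Omega^\ast(\psi_\pi^s)$ in \eq{eq:conj} directly. For any $\mu\in\Delta_\cA$ one has $\langle\mu,\mathbf{1}\rangle_{\!\scriptscriptstyle\cA}=1$, so the additive constant $c_\pi(s)$ only shifts the objective by a fixed amount and leaves the argmax unchanged. It then remains to maximize $g(\mu):=\langle\mu,\nabla\Omega(\pi^s)\rangle_{\!\scriptscriptstyle\cA}-\Omega(\mu)$ over $\mu\in\Delta_\cA$. The key algebraic step is to substitute the \Bg{} identity $\Omega(\mu)=\Omega(\pi^s)+\langle\nabla\Omega(\pi^s),\mu-\pi^s\rangle_{\!\scriptscriptstyle\cA}+D_\Omega(\mu\Vert\pi^s)$ into $g$; the linear terms cancel and one is left with $g(\mu)=\langle\nabla\Omega(\pi^s),\pi^s\rangle_{\!\scriptscriptstyle\cA}-\Omega(\pi^s)-D_\Omega(\mu\Vert\pi^s)$, i.e.\ a constant minus the divergence to $\pi^s$.

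Because $\Omega$ is strongly convex, $D_\Omega(\mu\Vert\pi^s)\ge 0$ with equality if and only if $\mu=\pi^s$, so $g$ attains its maximum at the unique point $\mu=\pi^s$. Hence the maximizer in \eq{eq:conj} is uniquely $\pi^s$, i.e.\ $\nabla\Omega^\ast(\psi_\pi^s)=\pi^s$; since this holds at every $s\in\cS$ and $\pi\in\Pi$ was arbitrary, $\nabla\Omega^\ast(\psi)=\pi\in\Pi$, which establishes both the uniqueness claim and membership. As a by-product, the relation $\nabla\Omega^\ast\circ\PsiOmega=\mathrm{id}_\Pi$ shows $\PsiOmega$ is injective on $\Pi$, so $\PsiOmega\colon\Pi\to\PsiOmega(\Pi)$ and $\nabla\Omega^\ast$ are mutually inverse bijections---the ``natural isomorphism'' of the statement.

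I expect the delicate points to be bookkeeping rather than conceptual. Verifying that the additive constant genuinely drops out of the argmax uses the simplex constraint $\sum_a\mu(a)=1$ in an essential way, since it would fail on an unconstrained domain; this is precisely why the reward operator $\PsiOmega$ can afford the extra constant term without affecting the recovered policy. Concluding the strict inequality $D_\Omega(\mu\Vert\pi^s)>0$ for $\mu\neq\pi^s$ requires the strong convexity of $\Omega$ together with its differentiability so that $\nabla\Omega(\pi^s)$ is well-defined. If $\Omega$ is only essentially smooth on the relative interior of $\Delta_\cA$, I would additionally note that $\pi^s$ lies in that interior, so the gradient and the \Bg{} identity apply without boundary complications.
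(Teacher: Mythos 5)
Your proposal is correct and follows essentially the same route as the paper's own proof: both reduce the conjugate maximization $\Omega^\ast(\psi^s_\pi)=\max_{\mu\in\Delta_\cA}\langle\mu,\psi^s_\pi\rangle_{\!\scriptscriptstyle\cA}-\Omega(\mu)$ to a constant minus $D_\Omega(\mu\Vert\pi^s)$ and invoke nonnegativity and uniqueness of the \Bg{} divergence minimizer to conclude $\nabla\Omega^\ast(\psi^s_\pi)=\pi^s\in\Pis$. Your explicit isolation of the additive constant $c_\pi(s)$ and the remark that $\langle\mu,\mathbf{1}\rangle_{\!\scriptscriptstyle\cA}=1$ is what makes it drop out is a slightly more transparent bookkeeping of the same cancellation the paper performs inline.
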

\fig{fig:mdschema} illustrates that there is a unique $\psi_t$ for $\pi_t$ in every time step. Note that $\PsiOmega(\pi_t)$ is different from $\nabla\Omega(\pi_t)$; it is shifted by a vector $\mathbf{1}c$ with a constant $c=\Omega(\pi^s_t)- \langle\pi^s_t, \nabla\Omega(\pi^s_t)\rangle_{\!\scriptscriptstyle\cA}$. Since the underlying space is a probability simplex, the operator $\nabla\Omega^\ast$ reconstructs the original point for both $\PsiOmega$ and $\nabla\Omega$, as the distributivity \citep{reg_mdp} $\Omega^\ast(y+\mathbf{1}c)=\Omega^\ast(y)+c$ holds (so $\nabla\Omega^\ast(y+\mathbf{1}c)=\nabla\Omega^\ast(y)$). An alternative interpretation is of considering a projection (gray dashed line in \fig{fig:mdschema}).  Suppose that a policy $\pi_t$ is updated to $\tilde{\pi}_{t+1}\,\textrm{\raisebox{.5pt}{$\textstyle\in$}}\,\bR^{\cS\times\cA}$. The \Bg{} projection operator $\cPOmega$ is applied that locates the subsequent update $\pi_{t+1}$ to the ``feasible'' region, i.e., $\cPOmega(\tilde{\pi}_{t+1}) \coloneqq \argmin_\piinPi[D_\Omega(\pi^s\Vert \tilde{\pi}^s_{t+1})]_\sincS$.

Consequently, one can consider an updated reward function $\psi_{t\hspace*{-.5pt}+\hspace*{-.5pt}1}$ as a projected target of MD associated with an alternative parameterization of $\Pi$. For instance, the parameters of $\psi_{t}$ can construct a softmax policy for a discrete space, or a \Gs{} policy for a continuous space. Using the reward function $\psi_{t\hspace*{-.5pt}+\hspace*{-.5pt}1}$, an arbitrary regularized RL process maximizing \eq{eq:reg_obj} at the $t$-th step \citep{rairl}
\begin{equation}\label{eq:mdlearn}
  \JOmega\hspace*{-.5pt}(\pi, \psi_{t+1})\!=\!-\hspace*{-1pt}\bE_{\pi}\!\Bigl[\sum\nolimits_{i=0}^\infty\hspace*{-1pt}\gamma^iD_\Omega\hspace*{-1pt}\Bigl(\hspace*{-1.1pt}\pi(\,\cdot\,|{s_i}\hspace*{-1pt})\Big\Vert\pi_{t+1}\hspace*{-1pt}(\,\cdot\,|s_i\hspace*{-1pt})\!\Bigr)\!\Bigr]\!
\end{equation}
becomes finding the next iteration $\pi_{t+1}=\nabla\Omega^\ast(\psi_{t+1})$ by maximizing the expected cumulative return. The equation shows that a regularized RL algorithm with the regularizer $\Omega$ forms a cumulative sum of \Bg{} divergences; thus, the policy $\pi_{t+1}$ is uniquely achieved by the property of divergence.

\begin{figure}[b]
  \centering
  \vskip-7pt
  \begin{adjustbox}{width=\textwidth,center}
  \begin{tikzpicture}[tight background, every node/.style={inner sep=0,outer sep=0},on grid]
  \clip (-1.81,-1.64) rectangle+(15.35, 3.286);

  \begin{scope}
    \node (vis) at (0, 0) {\includegraphics[height=91pt]{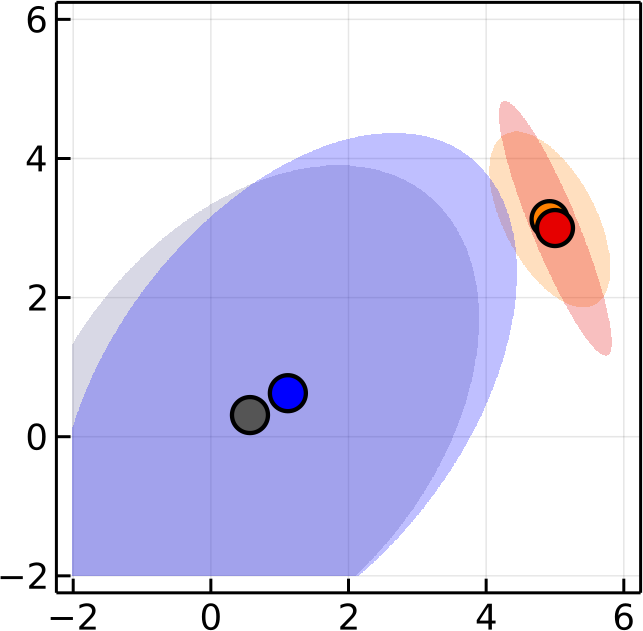}};
    \draw[fill={rgb:red,0.4;green,0.4;blue,0.6}] (-1.15,1.37) circle (1.4pt) node[xshift=0.27cm,black] {$\pi_t$};
    \draw[fill={rgb:red,0;green,0;blue,1}] (-.58,1.37) circle (1.4pt) node[xshift=0.42cm, yshift=-0.02cm, black] {$\pi_{t+1}$};
    \draw[fill=orange] (0.89,1.37) circle (1.4pt) node[xshift=0.37cm, yshift=-0.02cm, black] {$\barpiEt$};
    \draw[fill={rgb:red,0.9;green,0;blue,0}] (0.27,1.37) circle (1.4pt) node[xshift=0.27cm, yshift=-0.01cm, black] {$\piE$};
  \end{scope}
  \begin{scope}[xshift=3.65cm]
    \begin{scope}[xshift=0.4cm, yshift=0.14cm]
      \clip (-1.71, 0.11) rectangle+(0.99, 1.02);
      \node (demo1) {\includegraphics[height=65pt]{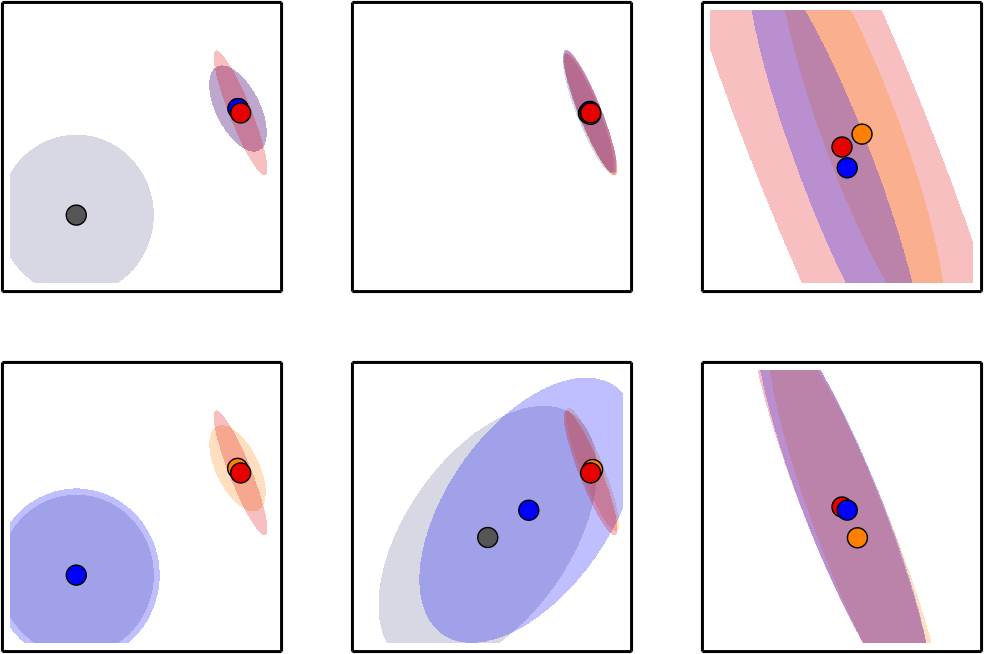}};
    \end{scope}
    \begin{scope}[xshift=0.28cm, yshift=0.14cm]
      \clip (-0.495, 0.11) rectangle+(0.99, 1.02);
      \node (demo2) at (0,0) {\includegraphics[height=65pt]{fig/toy_demo.png}};
    \end{scope}
    \begin{scope}[xshift=-0.07cm, yshift=0.11cm]
      \clip (0.95, 0.15) rectangle+(1.29, 1.34);
      \node (demo5) at (0,0) {\includegraphics[height=85pt]{fig/toy_demo.png}};
    \end{scope}
    \begin{scope}[xshift=0.4cm, yshift=-0.25cm]
      \clip (-1.71, -1.145) rectangle+(0.99, 1.02);
      \node (demo1) {\includegraphics[height=65pt]{fig/toy_demo.png}};
    \end{scope}
    \begin{scope}[xshift=0.28cm, yshift=-0.26cm]
      \clip (-0.495, -1.145) rectangle+(0.99, 1.02);
      \node (demo5) at (0,0) {\includegraphics[height=65pt]{fig/toy_demo.png}};
    \end{scope}
    \begin{scope}[xshift=-0.07cm, yshift=0.1cm]
      \clip (0.95, -1.49) rectangle+(1.29, 1.34);
      \node (demo5) at (0,0) {\includegraphics[height=85pt]{fig/toy_demo.png}};
    \end{scope}
    \draw[fill=black] (-1.2,1.43) circle (1.2pt) node[xshift=0.7cm,black] {\small\textsf{$\mathsf{\eta}=1.0$}};
    \draw[fill=black] (-1.2,-0.2) circle (1.2pt) node[xshift=0.7cm,black] {\small\textsf{$\mathsf{\eta}=0.2$}};
    \node at (-.76,0.12){\scriptsize\textsf{t=1}};
    \node at (0.3,0.12){\scriptsize\textsf{t=10}};
    \node at (1.6,0.12){\tiny\textsf{t=100 (4X)}};
    \node at (-.76,-1.53){\scriptsize\textsf{t=1}};
    \node at (.3,-1.535){\scriptsize\textsf{t=10}};
    \node at (1.6,-1.53){\tiny\textsf{t=100 (4X)}};
  \end{scope}
  \begin{scope}[xshift=8.1cm,yshift=0cm]
    \node at (0.06,-0.14) {\includegraphics[height=81pt]{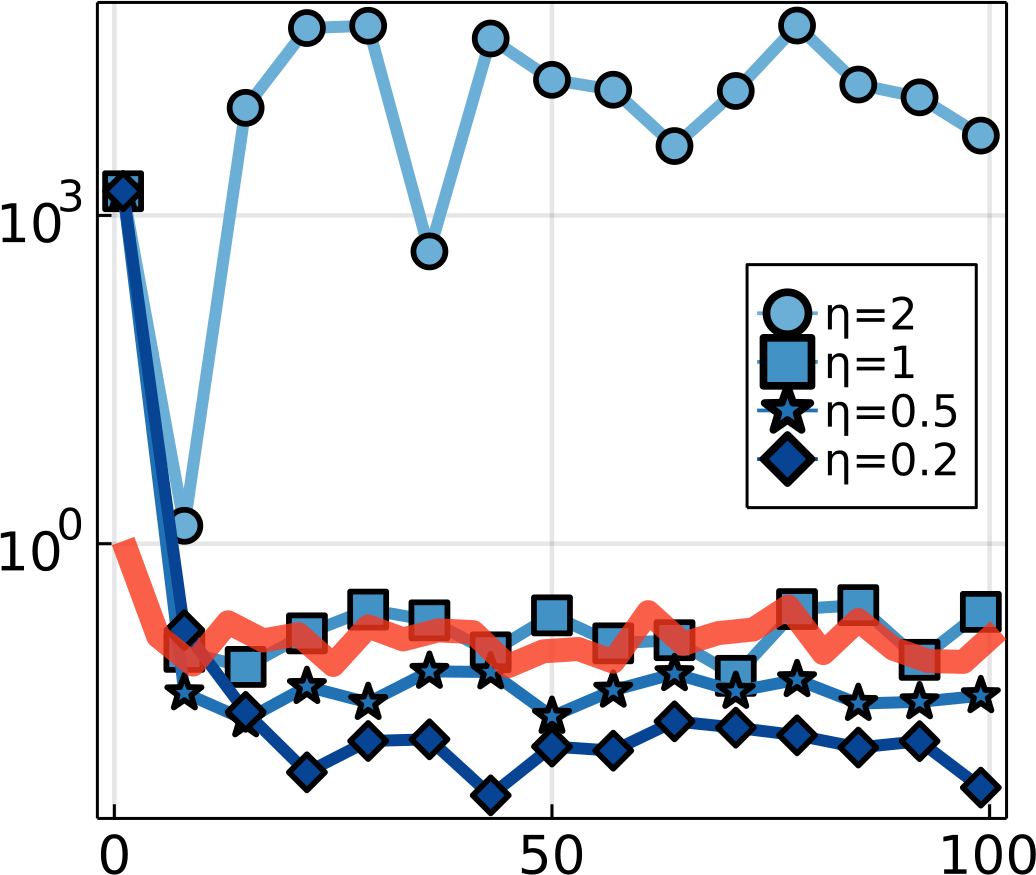}};
    \node[scale=0.9,rotate=90] at (-1.8,0){\small\textsf{\Bg{} div.}};
    \node[scale=0.9] at (0.15,1.455){{\textsf{\Sh{} regularizer}}};
    \node[scale=0.9,fill=white] at (0.19,-1.53){\scriptsize{\textsf{Time Step}}};
  \end{scope}
  \begin{scope}[xshift=11.6cm, yshift=0cm]
    \node at (0,-0.14) {\includegraphics[height=81pt]{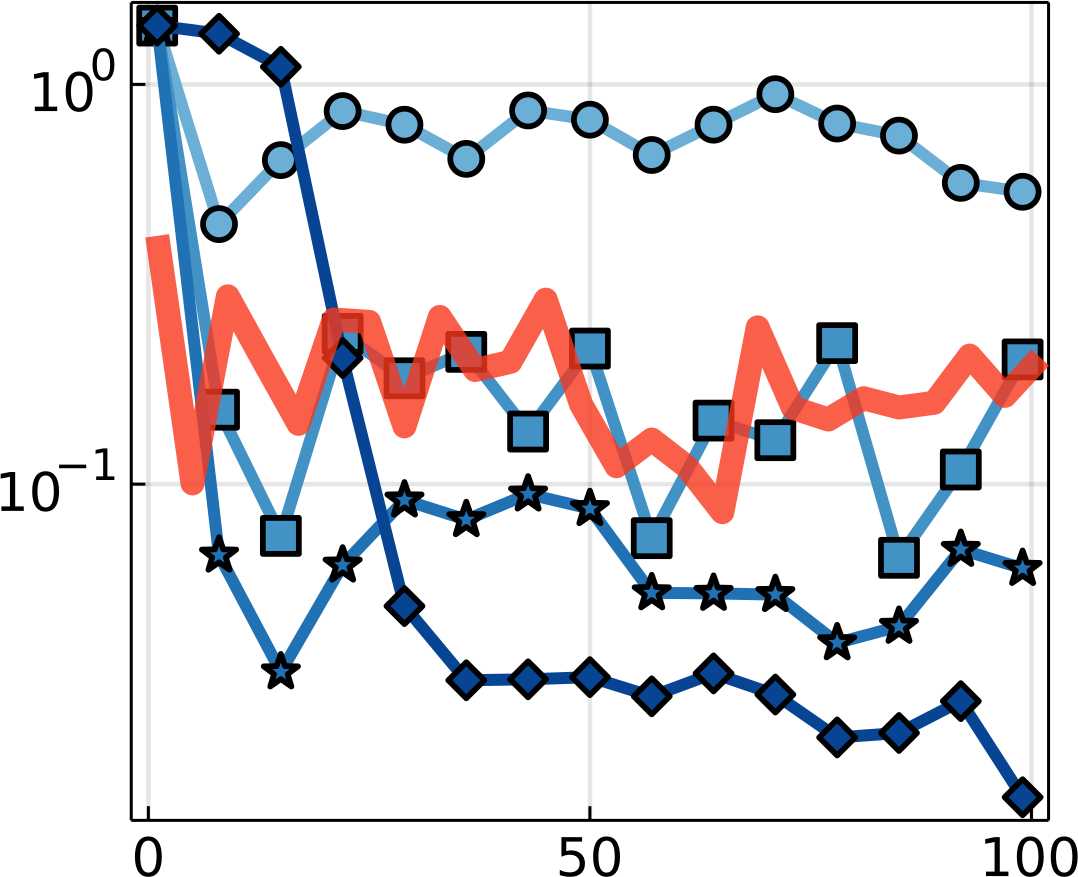}};
    \node[scale=0.9] at (0.2,1.455){{\textsf{\Ts{} regularizer}}};
    \node[scale=0.9,fill=white] at (0.19,-1.53){\scriptsize{\textsf{Time Step}}};
  \end{scope}

  \node[scale=1.28] at (-1.69,1.43) {\textsf{a}};
  \node[scale=1.28] at (2.12,1.43) {\textsf{b}};
  \node[scale=1.28] at (6.3,1.43) {\textsf{c}};
  \end{tikzpicture}
  \end{adjustbox}
  \caption{\subfigrefA{} A policy $\pi_t$ learns from MD updates for temporal costs
  $D_\Omega(\,\cdot\,\Vert \barpiEt)$. \subfigrefB{} The updates of $\pi_t$ vary by $\eta$, and the distance between $\pi_t$ and $\piE$ can be closer than the distance between $\barpiEt$ and $\piE$ when $t$ is sufficiently large and the $\eta$ is effectively low. \subfigrefC{} Two plots show $D_\Omega(\pi_t, \Vert \piE)$ associated with entropic regularizers for four different $\eta$ (10 trials), with the red baselines $D_\Omega(\barpiEt \Vert \piE)$.}\label{fig:md_toy}
\end{figure}

\textbf{Online imitation learning.\topicquad}
Our setup starts from the apparent yet vital premise that an imitation learning algorithm does not retain the global target $\piE$ during training. That is, it is fundamentally uncertain to model global objectives (such as $\JOmega\hspace*{-.5pt}(\pi\hspace*{-.5pt},\psiE\hspace*{-1pt})$), which are not attainable for both RL and IRL. Instead, we hypothesize on the existence of a random process $\{\barpiEt\}_{t=1}^\infty$ where each estimation $\barpiEt$ resides in a closed, convex neighborhood of $\piE$, generated by an arbitrary estimation algorithm. Substituting $\psiE$ to $\psi_{\barpiEt}\!=\!\PsiOmega(\barpiEt)$, the nonstationary objective $\JOmega\hspace*{-.5pt}(\pi\hspace*{-.5pt},\psi_{\barpiEt}\hspace*{-1pt})$ forms a temporal cost:
\begin{equation}\label{eq:F_def}
  F_t(\pi) = \,\bE_{\pi}\!\Bigl[\,\sum\nolimits_{i=0}^\infty \gamma^iD_\Omega\hspace*{-1pt}\bigl(\hspace*{-1pt}\pi(\,\cdot\,|s_i)\big\Vert\hspace*{1pt}\barpiEt\hspace*{-.5pt}(\,\cdot\,|s_i)\hspace*{-1pt}\bigr)\!\Bigr]\!.
\end{equation}
For the sake of better understanding, we considered an actual experiment depicted in \fig{fig:md_toy}. Suppose that the policies of the learning agent and the expert follow multivariate \Gs{} distributions at $\cN({\hspace*{-1pt}\textrm{\scriptsize \raisebox{.8pt}{$[0,0]$}}^{\hspace*{-1pt}\scriptscriptstyle\sT}}\hspace*{-2.5pt},\rI\hspace*{-.1pt})$ and $\cN({\hspace*{-1pt}\textrm{\scriptsize \raisebox{.8pt}{$[5,3]$}}^{\hspace*{-1pt}\scriptscriptstyle\sT}}\hspace*{-2.5pt},\Sigma_{\!\scriptscriptstyle E}\hspace*{-1pt})$ with $\lvert\Sigma_{\!\scriptscriptstyle E}\hspace*{-1pt}\rvert \!<\!1$. Let a (suboptimal) reference policy $\barpiEt$ be independently fitted with a maximum likelihood estimator with a relatively high learning rate, starting from $\barpiEone\!=\!\pi_1$. The policy $\pi_t$ was trained by a cost function $D_\Omega(\cdot \Vert \barpiEt)$ using the MD update rule in \eq{eq:md}. In \fig{fig:md_toy}, we first observed that choosing a high step size constant $\eta$ accelerated the training speed mainly in the early phase. The results also showed that the performance of MD ($D_\Omega(\pi_t\Vert\piE)$) outperformed that of referenced maximum likelihood estimation ($D_\Omega(\barpiEt\Vert\piE)$) by choosing an effectively low step size. This empirical evidence suggests that there are clear advantages in formalization of the training steps and scheduling the step sizes, especially for unreliable statistical model $\barpiEt$.

\textbf{MD update rules.\topicquad}
As a result of these findings, we formulate subsequent MD steps with a regularized reward function. Let $w_t$ be a parameter on a set $\cW$ and $F_t:\cW\to\bR$ be a convex cost function from a class of functions $\cF$ at the $t$-th step. Replacing the L2 proximity term of proximal gradient descent with a \Bg{} divergence, the proximal form of the MD update for \eq{eq:md} is written as \citep{smd}
\begin{equation}\label{eq:prox_bregman}
  \minimize_{w\in\cW}\,\bigl\langle\hspace*{-.5pt}\nabla F_t(w_t),\,w\!-\!w_t\hspace*{-1.pt}\bigr\rangle_{\hspace*{-1.5pt}\scriptscriptstyle\cW}\hspace*{-4pt}+\alpha_t D_\Omega\hspace*{-1.pt}\bigl(\hspace*{1.pt}w\hspace*{.5pt}\big\Vert\hspace*{1.pt} w_t\hspace*{-.5pt}\bigr),
\end{equation}
where $\alpha_t\coloneqq\nicefrac{1}{\eta_t}$ denotes an inverse of the current step size $\eta_t$ \citep{bregman_proximal}. Plugging each divergence of the cumulative cost $F_t$ to \eq{eq:prox_bregman}, the MD-IRL update for the subsequent reward function $\psi_{t+1} = \PsiOmega(\pi_{t+1})$ is derived by solving a problem
\begin{align}
  &\minimize_\pisinPis\hspace*{1.pt} \bigl\langle \hspace*{-2.2pt}\underbrace{\nabla\hspace*{-.5pt} D_\Omega\hspace*{-1pt}\bigl(\hspace*{1pt}\pi^s_{t}\hspace*{.5pt}\big\Vert\hspace*{1pt}\barpiEst\hspace*{-.5pt}\bigr)}_{\nabla \Omega(\pi^s_t)-\nabla\Omega(\barpiEst)}\hspace*{-2pt},\,\pi^s\hspace*{-2pt}-\pi^s_t\bigr\rangle_{\!\!\scriptscriptstyle\cA}\hspace*{-2pt}+\alpha_t\hspace*{1.pt}  D_\Omega\hspace*{-1.pt}\bigl(\hspace*{1pt}\pi^s\hspace*{.5pt}\big\Vert\hspace*{1.pt}\hspace*{1pt}\pi^s_t\hspace*{.5pt}\bigr)\nonumber\\
  &\qquad\iff \quad \minimize_\pisinPis D_\Omega\hspace*{-1pt}\bigl(\pi^s\hspace*{-.5pt}\big\Vert\hspace*{.2pt}\barpiEst\hspace*{-.7pt}\bigr)\hspace*{-.5pt}\!-\!D_\Omega\hspace*{-1pt}\bigl(\pi^s\hspace*{-.7pt}\big\Vert\hspace*{.1pt}\pi^s_t\hspace*{-.7pt}\bigr)\hspace*{-.5pt}\!+\!\alpha_tD_\Omega\hspace*{-1pt}\bigl(\pi^s\hspace*{-.8pt}\big\Vert\pi^s_t\hspace*{-.6pt}\bigr)\nonumber\\
  &\qquad\iff\quad\minimize_\pisinPis\hspace*{1pt}\eta_t\!\underbrace{D_\Omega\bigl(\pi^s\big\Vert\barpiEst\bigr)}_{\textrm{estimated expert}}\!+\,(\hspace*{-.3pt}1\!-\!\eta_t\hspace*{-.3pt})\underbrace{D_\Omega\bigl(\pi^s\big\Vert\pi^s_t\bigr)}_{\textrm{learning agent}} \qquad \forall s \in \cS, \label{eq:mdobj}
\end{align}
where the gradient of $D_\Omega$ is taken with respect to its first argument $\pi^s_t$. Note that
solving the optimization \eq{eq:mdobj} requires interaction between $\pi_t$ and the dynamics of the given environment in order to minimize $F_t$; thus, the corresponding RL process plays an essential role in sequential learning by the induced the value measures. At a glance, the objective is analogous to finding an interpolation at each iteration where the point is controlled $\eta_t$. \fig{fig:omd_demo} shows that the uncertainty of $\pi_t$ (blue region) gets minimal regardless of persisting uncertainty of $\barpiEt$ (red region).

\begin{figure}[H]
  \vskip-5.2pt
  \newcommand*{\secondxshift}{8cm}
  \centering
  \begin{adjustbox}{width=0.905\textwidth,center}
  \begin{tikzpicture}[tight background, every node/.style={inner sep=0,outer sep=0},on grid]
  \clip (-3.55, -1.11) rectangle+(15.7, 2.45);
  \begin{scope}
    \clip (-2.8,-0.61) rectangle+(6.06, 1.905);
    \node at (0,0) {\includegraphics[height=82pt]{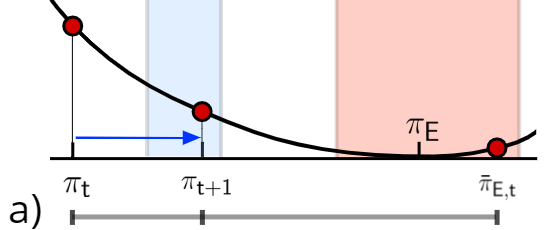}};
  \end{scope}
  \begin{scope}[yshift=0.167cm]
    \clip (-2.8,-1.04) rectangle+(6.06, 0.3);
    \node at (0,0) {\includegraphics[height=82pt]{fig/stepsize_1.png}};
  \end{scope}
  \begin{scope}[yshift=0.32cm]
    \clip (-2.8,-1.36) rectangle+(6.06, 0.18);
    \node at (0,0) {\includegraphics[height=82pt]{fig/stepsize_1.png}};
  \end{scope}
  \begin{scope}[xshift=\secondxshift]
  \begin{scope}
    \clip (-2.8,-0.61) rectangle+(6.06, 1.905);
    \node at (0,0) {\includegraphics[height=82pt]{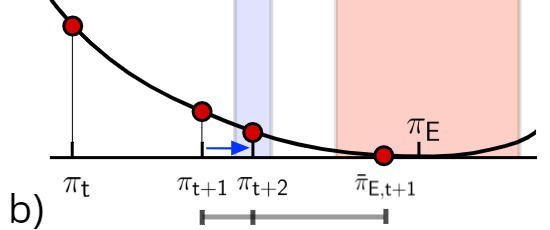}};
  \end{scope}
  \begin{scope}[yshift=0.167cm]
    \clip (-2.8,-1.04) rectangle+(6.06, 0.3);
    \node at (0,0) {\includegraphics[height=82pt]{fig/stepsize_2.png}};
  \end{scope}
  \begin{scope}[yshift=0.32cm]
    \clip (-2.8,-1.36) rectangle+(6.06, 0.18);
    \node at (0,0) {\includegraphics[height=82pt]{fig/stepsize_2.png}};
  \end{scope}
  \end{scope}
  \filldraw[fill=teal, very thick] (-2.53,1.11) circle (3.1pt);
  \filldraw[fill=teal, very thick] (-.918,0.04) circle (3.1pt);
  \filldraw[fill={rgb:red,202;green,8;blue,14}, very thick] (2.782,-.42) circle (3.1pt);
  \node[scale=1.38] at (-3.4,1.1) {\textsf{a}};
  \node[scale=0.65, rotate=90] at (-2.95,0.39) {\textsf{Ground-truth cost}};
  \node[scale=0.65] at (3.64,-1) {\color{darkgray}\textsf{\textbf{MD\,with}\,\textrm{\large\raisebox{.8pt}{$\bm{\eta}_{\hspace*{.4pt}\bm{t}}$}}}};
  \begin{scope}[xshift=\secondxshift, yshift=2.63cm]
    \filldraw[fill=teal, very thick] (-2.53,-1.514) circle (3.1pt);
    \filldraw[fill=teal, very thick] (-.918,-2.59) circle (3.1pt);
    \filldraw[fill=teal, very thick] (-.28,-2.84) circle (3.1pt);
    \filldraw[fill={rgb:red,202;green,8;blue,14}, very thick] (1.36,-3.14) circle (3.1pt);
    \node[scale=1.38] at (-3.4,-1.53) {\textsf{b}};
    \node[scale=0.65, rotate=90] at (-2.95,-2.233) {\textsf{Ground-truth cost}};
    \node[scale=0.65] at (2.5,-3.59) {\color{darkgray}\textsf{\textbf{MD\,with}\,\textrm{\large\raisebox{.88pt}{$\bm{\eta}_{\, \bm{t\textrm{\raisebox{1.5pt}{$\scriptscriptstyle+$}}1}}$}}}};
  \end{scope}
  \end{tikzpicture}
  \end{adjustbox}
  \vskip-1pt
  \caption{Illustrations of MD at the \subfigrefA{} $t$-th iteration and \subfigrefB{} $(t+1)$-th iteration where  $\eta_t\!>\!\eta_{t+1}$. $\{\hspace*{-1pt}\barpiEt\!\}_{t=1}^\infty$ is an arbitrary estimation process attained from a neighborhood of $\piE$ with respect to a norm. The MD update is taken inside the interval of $\pi_t$ and $\barpiEt$ using \eq{eq:mdobj}.}\label{fig:omd_demo}
  \vskip-7.8pt
\end{figure}

\section{Convergence Analyses}\label{sect:theory}
In this section, we present our theoretical results. The main goals of the following arguments are to address \enumone{} the convergence of MD updates for various cases and \enumtwo{} the necessity of scheduling the amount of learning. Suppose that state instances of $s^{(t)}_{i} \textrm{\scriptsize \raisebox{.3pt}{$\textstyle\in$}}\, \tau_t$ cover the entire $\cS$ by executing the policy $\pi_t$ in an infinite horizon. From this assumption, we define a temporal cost function at the time step $t$:
\begin{equation}\label{eq:f_def}
  \textstyle f(\pi_t,\tau_t) \hspace*{-1pt}\coloneqq\! \sum_{i=0}^\infty\gamma^i D_\Omega\hspace*{-1pt}\bigl(\hspace*{-.5pt}\pi_t(\;\cdot\;|\,s^{(t)}_i)\big\Vert\,\barpiEt(\;\cdot\;|\,s^{(t)}_i)\bigr),
\end{equation}
that involves $\pi_t$, and additionally a trajectory $\tau_t$ as inputs. We refer to the global objective as finding a unique fixed point $\pi_{\hspace*{-1pt}\ast}\,\textrm{\scriptsize \raisebox{.8pt}{$\textstyle\in$}}\,\Pi$ that minimizes a total cost $F(\pi) \coloneqq \bE[f(\pi,\tau_t)]$, where the expectation is taken over trajectories of entire steps, i.e., $\lim_{t\to\infty}\!\bE_{\tauonet}\hspace*{-1pt}[f(\pi, \tau_t)]$.
 Taking the (stepwise) gradient for each $\pi(\cdot|s)$, an optimal policy $\pi_{\hspace*{-1pt}\ast}$ is found by $ \bE[\nabla\Omega(\pi_\ast(\cdot|s))-\nabla\Omega(\barpiEt(\cdot|s))]=0$ when $t\to\infty$; hence, $\nabla\Omega(\pi_{\hspace*{-1pt}\ast}(\cdot|s))=\lim_{t\to\infty}\!\bE[\nabla\Omega(\barpiEt(\cdot|s))\hspace*{-.5pt}]$. Introducing the optimal policy $\pi_{\hspace*{-1pt}\ast}$ allows not only the specific situation when \enumone{} $\piE=\pi_{\hspace*{-1pt}\ast}\,\textrm{\scriptsize \raisebox{.8pt}{$\textstyle\in$}}\,\Pi$ and the estimation algorithm of $\barpiEt$ is actually convergent with $t\to\infty$, but also more general situations where \enumtwo{} $\piE\,\textrm{\scriptsize \raisebox{.8pt}{$\textstyle\notin$}}\,\Pi$ or the estimated expert policy $\barpiEt$ does not converge; the algorithm finds convergence to a fixed point by scheduling  updates.

We state two conditions of $\{\eta_t\}_{\!t=1}^{\!\infty}$ to guarantee convergence justified in Theorems~\ref{thm:step_size}~and~\ref{thm:conv_optimal}.\\
$\bullet$\hspace*{5pt}Convergent sequence \& divergent series:
\vskip-13pt
\begin{equation}\label{eq:step_cond}
  \textstyle\lim_{t\to \infty }\eta_t=0\qquad\text{and}\qquad\sum\nolimits_{t=1}^\infty \eta_t=\infty.
\end{equation}
$\bullet$\hspace*{5pt}Divergent series \& convergent series of squared terms:
\vskip-13pt
\begin{equation}\label{eq:step_cond2}
  \hskip3pt\textstyle\sum\nolimits_{t=1}^\infty \eta_t=\infty\hskip22.5pt\textrm{and}\qquad\sum\nolimits_{t=1}^\infty \eta^2_t < \infty.
\end{equation}
Let us assume \Lip{} continuity of $\nabla\Omega$ and boundedness of $D_\Omega$ in a \Bn{} space. In some $\Omega$, these two assumptions do not necessarily hold for extreme cases in $\Delta^\cS_\cA$, e.g., a distribution that $\pi(a|s) = 0$ for some entries. Nevertheless, these outliers can be left out if  the parametrization is constrained to satisfy the assumptions. For example, one can either \enumone{} prevent a policy from having non-zero entries of probabilities for a discrete policy or \enumtwo{} prevent a policy from having too low entropy for a continuous policy, by enforcing certain constraints on its parametric representations.

\thm{thm:step_size} argues that the sequence $\{\eta_t\}_{\! t=1}^{\!\infty}$ shall diverge for its series; therefore, \eq{eq:step_cond} is satisfied.
\begin{theorem}[Stepsize considerations]\label{thm:step_size}
Let $\Omega$ be strongly convex, $\nabla\Omega$ be \Lip{} continuous, and the associated \Bg{} divergence $D_\Omega$ is bounded. Assume a general condition of the problem that $\inf_{\!\piinPi}\!\bE\bigl[f(\pi,\tau_t)\bigr]\! >\!0$. Then we get $\lim_{T\to\infty}\!\bE_{\tauoneT}\!\bigl[\hspace*{1pt}\sum_{\scriptscriptstyle i=0}^\infty\hspace*{-1pt} D_\Omega\hspace*{-1pt}\bigl(\hspace*{1pt}\pi_{\hspace*{-1pt}\ast}(\hspace*{1pt}\cdot\hspace*{1pt}| {s_i})\big \Vert  \pi_{\!\scriptscriptstyle T}(\hspace*{1pt}\cdot\hspace*{1pt}|s_i))\hspace*{-1pt}\bigr]=0$ if and only if
\eqbrief{eq:step_cond} is satisfied.
\vspace*{-4pt}
\begin{enumerate}[leftmargin=*,label=(\alph*), noitemsep, partopsep=0pt, topsep=0pt, parsep=0pt]
  \item If $\lim_{t\to\infty}\eta_t=0$, then $T\,\textrm{\raisebox{.6pt}{$\textstyle\in$}}\,\bN$, $n\!<\!T$, and $c\!>\!0$ exist such that $\textstyle{\bE_{\tauoneT} \!\bigl[\hspace*{.5pt}f_T(\piT,\tauT)\bigr] \ge \frac{c}{T-n}}$.
  \item If the step size is in the form of  $\eta_t=\frac{4}{t+1}$, then $\textstyle{\bE_{\tauoneT}\!\bigl[\hspace*{1pt}\sum_{\scriptscriptstyle i=0}^\infty\hspace*{-1pt} D_\Omega\hspace*{-1pt}\bigl(\hspace*{1pt}\pi_{\hspace*{-1pt}\ast}(\hspace*{1pt}\cdot\hspace*{1pt}| {s_i})\big \Vert  \pi_{\!\scriptscriptstyle T}(\hspace*{1pt}\cdot\hspace*{1pt}|s_i))\hspace*{-1pt}\bigr]\!=\!\cO(1/T\hspace*{-.5pt})}$.
\end{enumerate}
\end{theorem}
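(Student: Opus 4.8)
The plan is to rewrite the coordinatewise update \eq{eq:mdobj} as a linear recursion in the dual (reward) space and then analyze it as a stochastic-approximation scheme. Setting the gradient of the objective in \eq{eq:mdobj} to zero and invoking \lem{lem:psi_unique} to guarantee that the minimizer is a genuine $\pi_{t+1}=\nabla\Omega^\ast(\psi_{t+1})\in\Pi$, the update reads $\nabla\Omega(\pi^s_{t+1})=(1-\eta_t)\nabla\Omega(\pi^s_t)+\eta_t\nabla\Omega(\barpiEst)$ for every $\sincS$. I would then introduce the dual residual $e^s_t\coloneqq\nabla\Omega(\pi^s_t)-\nabla\Omega(\pi_\ast^s)$ and the noise $\xi^s_t\coloneqq\nabla\Omega(\barpiEst)-\nabla\Omega(\pi_\ast^s)$, whose mean vanishes, $\bE[\xi^s_t]\to0$, precisely by the defining relation $\nabla\Omega(\pi_\ast^s)=\lim_t\bE[\nabla\Omega(\barpiEst)]$. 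The recursion collapses to $e^s_{t+1}=(1-\eta_t)e^s_t+\eta_t\xi^s_t$. Finally, since $\Omega$ is $\mu$-strongly convex with $L$-\Lip{} gradient, $\Omega^\ast$ dually inherits both properties, so the Bregman-conjugacy identity $D_\Omega(\pi_\ast^s\Vert\pi^s_t)=D_{\Omega^\ast}(\nabla\Omega(\pi^s_t)\Vert\nabla\Omega(\pi_\ast^s))$ sandwiches the target as $\tfrac{1}{2L}\lVert e^s_t\rVert^2\le D_\Omega(\pi_\ast^s\Vert\pi^s_t)\le\tfrac{1}{2\mu}\lVert e^s_t\rVert^2$. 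It therefore suffices to control $\bE\lVert e^s_t\rVert^2$ uniformly in $s$ and aggregate over the trajectory.

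For the equivalence I would unroll the recursion into the convex combination $e^s_T=\bigl(\prod_{t=1}^{T-1}(1-\eta_t)\bigr)e^s_1+\sum_{t=1}^{T-1}w_{t,T}\,\xi^s_t$ with weights $w_{t,T}=\eta_t\prod_{k=t+1}^{T-1}(1-\eta_k)$. Sufficiency ($\Leftarrow$, \eq{eq:step_cond}) rests on two facts: $\sum_t\eta_t=\infty$ forces $\prod_t(1-\eta_t)\to0$, so the arbitrary initialization $e^s_1$ is washed out; and $\eta_t\to0$ concentrates the weights on large indices $t$, where both the bias $\bE[\xi^s_t]$ and the per-step injected variance $\eta_t^2\bE\lVert\xi^s_t\rVert^2$ (bounded via boundedness of $D_\Omega$) are small, giving $\bE\lVert e^s_T\rVert^2\to0$. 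Necessity ($\Rightarrow$) I would argue by contraposition: if $\sum_t\eta_t<\infty$ then $\prod_t(1-\eta_t)$ stays bounded away from $0$ and a nonzero $e^s_1$ never decays; if $\eta_t\not\to0$ the variance term keeps reinjecting irreducible noise. The latter is exactly where $\inf_{\piinPi}\bE[f(\pi,\tau_t)]>0$ enters: applying the upper sandwich to $\pi_\ast$ and $\barpiEt$ gives $\sum_i\gamma^i\bE\lVert\xi^{s_i}_t\rVert^2\ge2\mu\,\bE[f(\pi_\ast,\tau_t)]\ge2\mu\inf_{\piinPi}\bE[f(\pi,\tau_t)]>0$, so the noise variance cannot vanish and a persistent step size blocks convergence.

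Part (a) is the quantitative face of this noise floor. Under $\eta_t\to0$ the heavily averaged residual $e_T$ decorrelates from the fresh estimation noise $\xi_T$, so the per-step cost obeys $\bE[f_T(\piT,\tauT)]\ge\tfrac{1}{2L}\sum_i\gamma^i\bE\lVert e^{s_i}_T-\xi^{s_i}_T\rVert^2\gtrsim\sum_i\gamma^i\bE\lVert\xi^{s_i}_T\rVert^2>0$; isolating the transient indices $t\le n$ on which $\eta_t\ge1$ and tracking how slowly the averaged part of $\bE\lVert e_T\rVert^2$ can shrink then yields the stated lower bound $c/(T-n)$, which in particular shows that $\eta_t\to0$ alone cannot drive the cost to zero faster than $\cO(1/T)$ and motivates adding $\sum_t\eta_t=\infty$. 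Part (b) is a direct specialization: for $\eta_t=\tfrac{4}{t+1}$ one computes $\prod_{k=t+1}^{T-1}(1-\eta_k)=\frac{(t+1)t(t-1)(t-2)}{T(T-1)(T-2)(T-3)}$, hence $w_{t,T}=\frac{4\,t(t-1)(t-2)}{T(T-1)(T-2)(T-3)}$, the initialization weight is $\cO(T^{-4})$, and the variance aggregates as $\sum_t w_{t,T}^2\,\bE\lVert\xi^s_t\rVert^2=\cO\bigl(T^{-8}\sum_t t^6\bigr)=\cO(1/T)$. Substituting into the upper sandwich bound and summing the (discounted) per-state divergences over the trajectory delivers $\bE_{\tauoneT}[\sum_i D_\Omega(\pi_\ast(\cdot|s_i)\Vert\pi_T(\cdot|s_i))]=\cO(1/T)$.

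The main obstacle is the noise $\xi^s_t$, which is neither centered nor independent across $t$. I expect to split $e^s_T$ into a deterministic bias part $\sum_t w_{t,T}\bE[\xi^s_t]$ and a martingale part, controlling the former using only $\bE[\xi^s_t]\to0$ together with the weight concentration, and the latter using the variance bound from boundedness of $D_\Omega$; obtaining a clean $\cO(1/T)$ in part (b) then requires the bias to decay fast enough relative to the cubically growing weights. The second delicate point is making the necessity direction and the $c/(T-n)$ shape in part (a) rigorous, since both hinge on turning the qualitative hypothesis $\inf_{\piinPi}\bE[f]>0$ into a quantitative, iterate-level nonvanishing-variance statement.
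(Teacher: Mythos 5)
Your starting point is exactly the paper's Lemma~\ref{lem:gradient_equiv}: the update is the dual-space averaging $\nabla\Omega(\pi^s_{t+1})=(1-\eta_t)\nabla\Omega(\pi^s_t)+\eta_t\nabla\Omega(\barpiEst)$, and the conjugate sandwich $\tfrac{1}{2L}\lVert e^s_t\rVert_\ast^2\le D_\Omega(\pi_\ast^s\Vert\pi^s_t)\le\tfrac{1}{2\mu}\lVert e^s_t\rVert_\ast^2$ is also implicit there via strong convexity/smoothness of $\Omega^\ast$. From that point on, however, your unroll-and-decompose strategy has a gap that you yourself flag but do not close, and it is fatal to part (b) as written. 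The noise $\xi^s_t=\nabla\Omega(\barpiEst)-\nabla\Omega(\pi_\ast^s)$ is only known to satisfy $\bE[\xi^s_t]\to0$ with \emph{no rate}, and it is dependent across $t$; hence in $\bE\bigl\lVert\sum_tw_{t,T}\xi^s_t\bigr\rVert_\ast^2$ the cross terms do not vanish, and the bias part $\sum_tw_{t,T}\bE[\xi^s_t]$ is merely $o(1)$, not $\cO(1/\sqrt{T})$. Your variance computation $\sum_tw_{t,T}^2\,\bE\lVert\xi^s_t\rVert_\ast^2=\cO(1/T)$ is correct but controls only the martingale part, so the asserted $\cO(1/T)$ does not follow. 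The paper avoids this entirely by never unrolling: Lemma~\ref{lem:further} derives the one-step inequality $A_{t+1}\le(1-\eta_t/2)A_t+z\eta_t^2$ for $A_t=\sup_{s}\bE_{\tauonet}[D_\Omega(\pi_\ast^s\Vert\pi_t^s)]$, using the three-point identity and cocoercivity of $\nabla\Omega$, with the problematic cross term $\eta_t\langle\nabla\Omega(\pi_\ast^s)-\nabla\Omega(\barpiEst),\pi_\ast^s-\pi_t^s\rangle$ annihilated in expectation by the defining relation of $\pi_\ast$; the noise then enters only as a \emph{bounded} constant $z$ multiplying $\eta_t^2$, and $\eta_t=4/(t+1)$ telescopes as $t(t+1)A_{t+1}\le(t-1)tA_t+16z$ to give $\cO(1/T)$ with no decay requirement on the bias.

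Part (a) and the necessity direction are likewise asserted rather than derived. ``Decorrelation'' of $e_T$ from $\xi_T$ and ``tracking how slowly the averaged part can shrink'' do not produce the $c/(T-n)$ shape; the paper's Lemma~\ref{lem:assume} obtains it from a matching \emph{lower}-bound recursion $\bE[D_\Omega(\pi^s_{t+1}\Vert\barpiEstt)]\ge(1-a\eta_t)\,\bE[D_\Omega(\pi^s_t\Vert\barpiEst)]+(2L)^{-1}(\eta_t\ell)^2$ with $\ell=\inf\bE[\lVert\nabla\Omega(\pi_t)-\nabla\Omega(\barpiEt)\rVert_\ast]>0$ extracted from $\inf_{\piinPi}\bE[f(\pi,\tau_t)]>0$, followed by a Cauchy--Schwarz/telescoping estimate showing $\sum_{t=n+1}^{T}\eta_t^2\prod_{k=t+1}^{T}(1-a\eta_k)\ge\eta_{n+1}^2/(T-n)$. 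That computation is the substance of part (a) and has no counterpart in your plan; similarly, the paper rules out the degenerate case $e^s_n=0$ by deriving a contradiction with $\inf_{\piinPi}\bE[f(\pi,\tau_t)]>0$, which your contraposition skips. To salvage the unrolling route you would need (i) an explicit conditional-centering or rate hypothesis on $\barpiEt$ to control the cross terms, and (ii) a genuine lower-bound recursion in place of the decorrelation heuristic.
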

Next, we present \thm{thm:conv_optimal}, which  addresses the convergence in a specific case when $\piE$ resides in $\Pi$. Additionally, the theorem addresses the bounds of the performance for fixed size update $\eta_t \equiv \eta_1$.
\begin{theorem}[Optimal cases]\label{thm:conv_optimal}
Let $\Omega$ be strongly convex, $\nabla\Omega$ be \Lip{} continuous, and the associated \Bg{} divergences be bounded. Assume $\pi_1 \ne \piE$ and $\inf_{\!\piinPi}\bE[f(\pi,\tau_t)]=0$. Then, $\bE\bigl[f(\pi_t,\tau_t)\bigr]=0$ if and only if $\sum_{t=1}^\infty \eta_t=\infty$. If $\eta_t \equiv \eta_1$, then there exist $c_1,c_2\in(0,1)$ such that $c_1^{T-1}\!\cdot\!A_1 \le A_T \le c_2^{T-1}\!\cdot\! A_1$, for $A_t= \sup_{s\in\cS}\bE_{\tauonet}\bigl[D_\Omega(\piE^s\Vert\pi^s_t)\bigr]$.
\end{theorem}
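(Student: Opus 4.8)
The plan is to pass to the dual (reward) variables, where the constant-step update is an affine contraction toward $\nabla\Omega(\piE^s)$, and then transport the geometric decay back to $A_t$ through the \Bg{} duality $D_\Omega(\piE^s\Vert\pi_t^s)=D_{\Omega^\ast}\!\bigl(\nabla\Omega(\pi_t^s)\Vert\nabla\Omega(\piE^s)\bigr)$. First I would read off the stationarity condition of the per-state objective \eq{eq:mdobj}: zeroing its gradient in $\pi^s$ gives $\nabla\Omega(\pi_{t+1}^s)=(1-\eta_t)\nabla\Omega(\pi_t^s)+\eta_t\nabla\Omega(\barpiEst)$, and by the shift-invariance $\nabla\Omega^\ast(y+\mathbf{1}c)=\nabla\Omega^\ast(y)$ underlying \lem{lem:psi_unique} this recursion in the associated reward functions reconstructs $\pi_{t+1}$ exactly. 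Writing $g_t^s:=\nabla\Omega(\pi_t^s)$ and $g_\ast^s:=\nabla\Omega(\piE^s)$, I then use that $\inf_{\piinPi}\bE[f(\pi,\tau_t)]=0$ with $D_\Omega\ge 0$ forces $D_\Omega(\piE^s\Vert\barpiEst)=0$ on every visited state, i.e.\ $\nabla\Omega(\barpiEst)=g_\ast^s$ almost surely. Hence the update collapses to the noise-free map $g_{t+1}^s-g_\ast^s=(1-\eta_t)(g_t^s-g_\ast^s)$, so that $g_t^s-g_\ast^s=\prod_{k=1}^{t-1}(1-\eta_k)\,(g_1^s-g_\ast^s)$.

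For the equivalence I would invoke the constants from the hypotheses: $\mu$-strong convexity of $\Omega$ and $L$-\Lip{} continuity of $\nabla\Omega$ make $\Omega^\ast$ $\tfrac1L$-strongly convex and $\tfrac1\mu$-smooth, so $D_{\Omega^\ast}(g_t^s\Vert g_\ast^s)\to0$ exactly when $\lVert g_t^s-g_\ast^s\rVert\to0$. Since $\pi_1\ne\piE$ gives $\lVert g_1^s-g_\ast^s\rVert>0$ for some $s$, and $\prod_k(1-\eta_k)\to0\iff\sum_k\eta_k=\infty$ for $\eta_k\in(0,1)$, the convergence $\lim_t\bE[f(\pi_t,\tau_t)]=0$ holds if and only if $\sum_t\eta_t=\infty$.

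For the constant-step rate I fix $\eta_t\equiv\eta_1$ and study $h(\lambda):=D_{\Omega^\ast}(g_\ast^s+\lambda\delta\Vert g_\ast^s)$ with $\delta:=g_t^s-g_\ast^s$, which is convex with $h(0)=0$, $h(1)=D_\Omega(\piE^s\Vert\pi_t^s)$, and $h(1-\eta_1)=D_\Omega(\piE^s\Vert\pi_{t+1}^s)$. Convexity together with $h(0)=0$ yields $h(1-\eta_1)\le(1-\eta_1)h(1)$, the upper one-step contraction with $c_2=1-\eta_1$. For the matching floor I combine the strong-convexity lower bound $h(1-\eta_1)\ge\tfrac{1}{2L}(1-\eta_1)^2\lVert\delta\rVert^2$ with the smoothness upper bound $h(1)\le\tfrac1{2\mu}\lVert\delta\rVert^2$, obtaining $D_\Omega(\piE^s\Vert\pi_{t+1}^s)\ge\tfrac{\mu}{L}(1-\eta_1)^2D_\Omega(\piE^s\Vert\pi_t^s)$, i.e.\ $c_1=\tfrac{\mu}{L}(1-\eta_1)^2$; both lie in $(0,1)$ since $0<\mu\le L$ and $\eta_1\in(0,1)$. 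Iterating the two per-state inequalities from $1$ to $T$, taking $\bE_{\tauoneT}$ (the deterministic multipliers factor out), and then $\sup_{\sincS}$ delivers $c_1^{T-1}A_1\le A_T\le c_2^{T-1}A_1$.

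I expect the lower bound to be the main obstacle. Convexity alone only contracts $D_\Omega$ from above, so securing a geometric floor requires routing through the condition number $L/\mu$ via the two-sided strong-convexity/smoothness sandwiches of $\Omega^\ast$, and verifying that $c_1=\tfrac\mu L(1-\eta_1)^2$ is strictly below $1$ rather than merely positive. A second, interpretive subtlety is justifying that the hypothesis $\inf_{\piinPi}\bE[f]=0$ eliminates all estimator noise, forcing $\barpiEst=\piE^s$ on visited states; this exactness is precisely what renders the dual recursion affine and makes the two-sided geometric bound possible, whereas a merely concentrating $\barpiEt$ would preserve only the upper contraction.
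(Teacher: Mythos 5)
Your proof is correct, but it follows a genuinely different route from the paper's. The paper proves necessity by iterating the exponential lower bound of its Eq.~(\ref{eq:hence}) (obtained from the three-point-identity machinery of Lemmas~\ref{lem:three_identity}--\ref{lem:two_identity2} with the constant $a=2L/\omega$), proves sufficiency by specializing the recursion $A_{t+1}\le(1-\eta_t/2)A_t+z\eta_t^2$ of Lemma~\ref{lem:further} to the optimal case $z=0$, and reads off the constant-step rates $c_1=1-2L\eta_1/\omega$ and $c_2=1-\eta_1/2$ from those two inequalities. You instead observe that the optimality hypothesis forces $\barpiEst=\piE^s$ almost surely on visited states, so the dual recursion of Lemma~\ref{lem:gradient_equiv} becomes the exact noise-free affine map $g^s_{t+1}-g^s_\ast=(1-\eta_t)(g^s_t-g^s_\ast)$ with closed-form product $\prod_k(1-\eta_k)$; the equivalence then reduces to the elementary fact that the product vanishes iff the series diverges, and the two-sided geometric bounds follow from convexity of $\lambda\mapsto D_{\Omega^\ast}(g^s_\ast+\lambda\delta\,\Vert\, g^s_\ast)$ together with the strong-convexity/smoothness sandwich on $\Omega^\ast$. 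What your route buys: an exact characterization rather than one-sided estimates, a tighter upper constant $c_2=1-\eta_1$, and a lower constant $c_1=\tfrac{\mu}{L}(1-\eta_1)^2$ that is positive for every $\eta_1\in(0,1)$, whereas the paper's $c_1=1-2L\eta_1/\omega$ requires $\eta_1<\omega/(2L)$ (a real restriction when the condition number $L/\omega$ is large, and one the theorem statement does not make explicit). What the paper's route buys is uniformity: the same lemmas carry over unchanged to the noisy setting of Lemma~\ref{lem:further} and Proposition~\ref{prop:conv_general}, where your exact product formula is unavailable and only the upper contraction survives --- a limitation you correctly anticipate in your closing remarks. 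Both arguments share the same implicit assumptions (step sizes in a range making $1-c\,\eta_t$ positive, and attainment of the unconstrained per-state minimizer inside $\Pi^s$), so neither of these is a gap relative to the paper.
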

Lastly, \prop{prop:conv_general} provides the sufficient condition for the almost certain convergence of the algorithm by imposing the stronger condition of step size in \eq{eq:step_cond2}. The proofs are in \appsect{appsect:proof}.
\begin{proposition}[General cases]\label{prop:conv_general}
Assume that $\piE\notin\Pi$, hence $\inf_{\piinPi}\bE[f(\pi,\tau_t)]>0$. If the step sizes satisfies \eqbrief{eq:step_cond2}, then $\lim_{t\to\infty}\!\sum_{\scriptscriptstyle i=0}^\infty\!\gamma^iD_\Omega\hspace*{-1pt}\bigl(\pi_{\hspace*{-1pt}\ast}(\,\cdot\,|{s_i})\big\Vert\pi_{t}(\,\cdot\,|{s_i})\hspace*{-1pt}\bigr)$ converges to 0 almost surely.
\end{proposition}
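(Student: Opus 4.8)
The plan is to recast the mirror-descent recursion in dual (mirror) coordinates and then run an almost-supermartingale convergence argument. Writing $\theta^s_t \coloneqq \nabla\Omega(\pi_t(\cdot|s))$ and $\mu^s_t \coloneqq \nabla\Omega(\barpiEst)$, the first-order optimality conditions for the interpolation \eq{eq:mdobj} collapse to the linear recursion $\theta^s_{t+1} = (1-\eta_t)\,\theta^s_t + \eta_t\,\mu^s_t$, i.e. a convex combination in the dual space driven by the estimated expert. By the definition of the fixed point, the target is the asymptotic mean $\mu^s_\ast \coloneqq \nabla\Omega(\pi_\ast(\cdot|s)) = \lim_{t\to\infty}\bE[\mu^s_t]$ of this driving sequence. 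Introducing the error $e^s_t \coloneqq \theta^s_t - \mu^s_\ast$ and the Lyapunov function $V^s_t \coloneqq \lVert e^s_t\rVert^2$, I would first record that strong convexity of $\Omega$ and \Lip{} continuity of $\nabla\Omega$ make $V^s_t$ comparable (up to constants) to $D_\Omega(\pi_\ast(\cdot|s)\Vert\pi_t(\cdot|s))$, so that controlling $V^s_t$ controls the Bregman divergence in the statement.

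Next I would derive the one-step inequality. Decomposing $\mu^s_t - \mu^s_\ast = \zeta^s_t + b^s_t$ into the martingale-difference noise $\zeta^s_t \coloneqq \mu^s_t - \bE[\mu^s_t\mid\cF_t]$ and the conditional bias $b^s_t \coloneqq \bE[\mu^s_t\mid\cF_t] - \mu^s_\ast$, grouping the $\cF_t$-measurable part, and using $\bE[\zeta^s_t\mid\cF_t]=0$ together with convexity of $\lVert\cdot\rVert^2$, one obtains
\begin{equation*}
  \bE\bigl[V^s_{t+1}\mid\cF_t\bigr]\le(1-\eta_t)\,V^s_t+\eta_t\lVert b^s_t\rVert^2+\eta_t^2\,\bE\bigl[\lVert\zeta^s_t\rVert^2\mid\cF_t\bigr].
\end{equation*}
Boundedness of $D_\Omega$ and \Lip{} continuity of $\nabla\Omega$ bound the conditional noise variance by a constant, so the last term is summable by $\sum_t\eta_t^2<\infty$. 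This is the shape required by the Robbins--Siegmund almost-supermartingale theorem, which then yields that $V^s_t$ converges almost surely to a finite limit and that $\sum_t\eta_t V^s_t<\infty$ a.s. Because $\sum_t\eta_t=\infty$, this summability forces $\liminf_t V^s_t=0$, and combined with the existence of the limit it gives $V^s_t\to 0$ almost surely.

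The main obstacle is the bias term $b^s_t$: the fixed point is defined only through $\lim_t\bE[\mu^s_t]=\mu^s_\ast$ rather than exact unbiasedness, so $\sum_t\eta_t\lVert b^s_t\rVert^2$ need not be finite a priori and the divergent step-size sum cannot absorb it. I would resolve this by handling the bias through the unrolled representation of the recursion, which exhibits $\theta^s_t$ as a weighted average of $\{\mu^s_k\}_{k<t}$ whose weights form a regular summability method: their total mass tends to $1$ by $\sum_t\eta_t=\infty$, the initial-condition weight vanishes, and the sum of their squares tends to $0$ by $\sum_t\eta_t^2<\infty$. A Silverman--Toeplitz argument then sends the weighted average of the vanishing-bias means to $\mu^s_\ast$, while an $L^2$ martingale bound controls the weighted noise sum; this decouples bias from noise and is precisely where both conditions of \eq{eq:step_cond2} are used simultaneously.

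Finally I would lift the per-state a.s. convergence to the full discounted objective. Carrying the supremum $\sup_{\sincS}$ through the recursion, as in the quantity $A_t$ of \thm{thm:conv_optimal}, gives uniform control $\sup_{\sincS} D_\Omega(\pi_\ast(\cdot|s)\Vert\pi_t(\cdot|s))\to 0$ a.s. under the state-coverage assumption. Bounding the discounted sum by $\tfrac{1}{1-\gamma}\sup_{\sincS} D_\Omega(\pi_\ast(\cdot|s)\Vert\pi_t(\cdot|s))$ using $\gamma\in[0,1)$ and the uniform boundedness of $D_\Omega$, dominated convergence then yields $\lim_{t\to\infty}\sum_{i=0}^\infty\gamma^i D_\Omega(\pi_\ast(\cdot|s_i)\Vert\pi_t(\cdot|s_i))=0$ almost surely, which is the claim.
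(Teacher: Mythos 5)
Your proposal follows the same overall architecture as the paper's proof: a one-step recursive inequality exhibiting the distance to $\pi_\ast$ as an almost-supermartingale with perturbations of order $\eta_t^2$, a martingale convergence theorem to obtain an almost-sure limit, and a separate mean-convergence argument (using $\sum_t\eta_t=\infty$) to identify that limit as zero. The technical execution differs in three ways. First, you run the recursion on the squared dual norm $\lVert\nabla\Omega(\pi_t)-\nabla\Omega(\pi_\ast)\rVert_{\ast}^2$, whereas the paper works with $D_\Omega(\pi_\ast^s\Vert\pi_t^s)$ directly via \eq{eq:thus}; by the $\omega$-strong convexity and $L$-smoothness these are equivalent up to constants, so nothing is lost. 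Second, you invoke Robbins--Siegmund, which additionally yields $\sum_t\eta_t V_t^s<\infty$ and hence $\liminf_t V_t^s=0$ on its own, while the paper applies plain Doob convergence to $X_t=\sup_{\sincS}D_\Omega(\pi_\ast^s\Vert\pi_{t+1}^s)+C\sum_{i>t}\eta_i^2$ and identifies the limit through Fatou's lemma combined with the $L^1$ convergence already established in \lem{lem:further}; both identifications are legitimate. Third, and most substantively, you isolate the conditional bias $b_t^s$ and propose a Silverman--Toeplitz argument on the unrolled recursion; the paper instead silently discards the corresponding cross term $\eta_t\bigl\langle\nabla\Omega(\pi_\ast^s)-\nabla\Omega(\barpiEst),\,\pi_\ast^s-\pi_t^s\bigr\rangle$ in passing from \eq{eq:thus} to \eq{eq:implies2} with only a ``for $t\ge n$'' qualifier, so your treatment is the more honest of the two on this point.

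The one joint that does not close as written is the interaction between those last two steps. Robbins--Siegmund requires $\sum_t\eta_t\lVert b_t^s\rVert^2<\infty$ to deliver the almost-sure limit of $V_t^s$, and your Toeplitz argument does not supply that summability: it only gives $\theta_t^s\to\mu_\ast^s$ in $L^2$, which pins down the value of the limit but not its almost-sure existence. You need to say explicitly how the a.s.\ limit survives the non-summable bias perturbation --- for instance by absorbing $2\eta_t\langle e_t^s,b_t^s\rangle$ into the negative drift $-\eta_t V_t^s$ when $\lVert b_t^s\rVert$ is small relative to $\lVert e_t^s\rVert$ and running a $\limsup$ argument using $b_t^s\to0$ otherwise --- and then combine that with the $L^2$ identification. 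Since this is precisely the gap the paper's own proof glosses over, it is not grounds for rejecting your route, but it is the step demanding the most care. Your final lift from per-state convergence to the discounted objective via $\sup_{\sincS}$, boundedness of $D_\Omega$, and $\gamma<1$ is fine and parallels the paper's use of Fatou's lemma.
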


\textbf{Regrets.\topicquad}For a sequence of state trajectories $\{\tau_t\}_{\hspace*{-1pt}t\textrm{\raisebox{.7pt}{$\scriptstyle\in$}}\bN}$, let us define a regret at the $t$-th iteration as
\begin{equation}
  \textstyle\frac{1}{t}\,{\textstyle\!\sum_{i=1}^t}f\hspace*{-.5pt}(\pi_i\hspace*{-1pt}, \tau_i\hspace*{-.5pt}) - \inf_{\!\piinPi}\bigl\{\hspace*{-1pt}\frac{1}{t}\,{\textstyle\!\sum_{j=1}^t}f\hspace*{-.5pt}(\pi\hspace*{-1pt}, \tau_j\hspace*{-.6pt} )\hspace*{-1.2pt}\bigr\}.
\end{equation}
In the optimal case of $\inf_{\!\piinPi}\bE[f(\pi,\tau_t)]=0$, the cost $f$ inherits the property of \Bg{} divergence so that the infimum is achieved by $0$ at $\piE$. In this case, the regret is bounded to $\cO(1/T)$ by the theorems. By \prop{prop:conv_general}, the MD updates converge for the case of $\inf_{\!\piinPi} \bE[f(\pi,\tau_t)]>0$ when the step sizes abide by \eq{eq:step_cond2}. Thus, the regret is bounded to $\cO(1/T)$  even for the general case.

\section{Algorithm: MD-IRL on an Adversarial Framework}\label{sect:mdirl}
In this section, we propose MD-AIRL, a novel AIL algorithm which trains a parameterized reward function with adversarial learning and the MD update rule. Neural network parameters $\theta$, $\phi$, and $\nu$ are newly presented representing agent policy, reward, and expert policy functions respectively.

\textbf{Dual discriminators.\ }
In order to bridge the gap between theory and practice, we propose a novel discriminative architecture, motivated by GAN studies regarding multiple discriminators \citep{dual, triple}. Basically, the proposed discriminators separate two concepts in AIL: matching overall state densities and imitating specific behavior. Given a learning agent policy $\pi_\theta$, an estimation policy $\pi_\nu$, and a discriminative neural network for states $d_\xi:\cS\to\bR$, the two discriminators are defined as
\begin{equation*}
  \,D_{\hspace*{-.5pt}\nu}\hspace*{-.5pt}(\hspace*{-.2pt}s\hspace*{-.3pt},a;\hspace*{1pt}{\scriptstyle\theta}\hspace*{-.5pt},{\scriptstyle\xi})=\sigma\hspace*{-.5pt}\bigl(\hspace*{.5pt}\log\bigl\{\hspace*{-.5pt}\pi_\nu(a|s)\big/\pi_\theta(a|s)\hspace*{-1pt}\bigr\}+d_\xi(s)\bigr) \textrm{\quad and\quad}
  D_{\hspace*{-.5pt}\xi}\hspace*{-.5pt}(s)=\sigma\hspace*{-.5pt}\bigl(\hspace*{.5pt}d_\xi(s)\bigr),\quad \forall\ s\in\cS\hspace*{-.5pt},\ a\in\cA,
\end{equation*}
where $\sigma(\cdot)$ denotes the sigmoid function.  The discriminators are trained using binary logistic regression losses with respect to mini-batch adversarial samples:
\begin{align}
  \maximize \cJ_{d_\xi} &=  \bE_{\pi_E}\bigl[\log D_\xi (s)\bigr] + \bE_{\pi_\theta}\bigl[\log (1 - D_\xi(s))\bigr],\\
  \maximize \cJ_{\pi_\nu} &= \bE_{\pi_E}\bigl[\log D_\nu (s, a)\bigr] + \bE_{\pi_\theta}\bigl[\log (1 - D_\nu(s,a)) \bigr],
\end{align}

\vskip-4pt
\begin{minipage}[b]{0.75\linewidth}
\centering
\begin{algorithm}[H]
\footnotesize
  \caption{\small\hspace*{3pt} Mirror Descent Adversarial Inverse Reinforcement Learning.}\label{alg:mdirl}
\begin{algorithmic}[1]
  \vspace*{-3.5pt}
  \State {\bfseries Input:} trajectories $\{\tau^\ast_t\hspace*{-1pt}\}_{\!\scriptscriptstyle t=1}^{\!\textrm{\raisebox{-2.5pt}{$\scriptscriptstyle T$}}}$, an agent $\pi_\theta$, a reference policy $\pi_\nu$, a neural network $d_\xi\hspace*{-3pt}:\!\cS\!\to\!\bR$, a regularized reward function $\psi_\phi\!\in\!\PsiOmega\hspace*{-1pt}(\Pi)$, $\alpha_1$,$\alphaT$, and $\lambda$.\vspace*{-.5pt}
  \For{$t \gets 1$ to $T$}
    \vspace*{-.5pt}
    \State $\alpha_t \gets \alpha_1+(t\!-\!1)(\nicefrac{\alpha_T-\alpha_1}{T-1})$\hspace*{7pt}and then\hspace*{7pt}$\eta_t \gets 1/\alpha_t$.
    \State Optimize $d_\xi$ and $\pi_\nu$ via binary logistic regression for $D_\xi$ and $D_\nu$.
    \vspace*{-.5pt}
    \State Optimize $\psi_\phi$ with the objective in \eq{eq:mdairlobj} using both $\tau^\ast_t$ and $\tau_t$.
    \vspace*{-.5pt}
    \State Train $\pi_\theta$ via RL to maximize $\psi^\lambda_\phi(s,a)$ with regularizer $\lambda\Omega(\cdot)$.
    \vspace*{-.5pt}
  \EndFor
  \vspace*{-3pt}
  \State {\bfseries Output:} $\pi_\theta$, $\psi^\lambda_\phi$.
  \vspace*{-1pt}
\end{algorithmic}
\vskip-2pt
\end{algorithm}
\end{minipage}\hfill
\begin{minipage}[b]{0.23\linewidth}
\centering
\begin{tikzpicture}[scale=0.4, transform shape,tight background, every node/.style={inner sep=0,outer sep=0},on grid]
\clip (-0.8,-0.34) rectangle+(7.98,6.0);
\begin{axis}[grid=both, mark=none, xmin=0.85, ymin=0, xmax=3.4, ymax=2.4,
  axis line style=ultra thick,
  axis lines=middle,
  enlargelimits=upper,
  clip=false,
  xtick={1,1.6,...,3.4},
  ytick={0,0.6,...,2.4},
  yticklabel=\empty,
  xticklabel=\empty,
  every axis plot/.append style={very thick},
  legend style={at={(0.65,0.53)},anchor=west}]
  \addplot[line width=0.65mm,color={rgb:red,202;green,8;blue,14}, domain=1:3.6,restrict y to domain=0:2.6,samples=100]{x-0.5};
  \addplot[line width=0.65mm,domain=1:3.6,restrict y to domain=0:2.6, samples=100]{1/(x-0.5)};
  \addplot[line width=1mm,dashed, domain=1:3.6,restrict y to domain=0:2.6, samples=100]{1-1/(x-0.5))};
  \node[circle,fill,inner sep=3.5pt] at (axis cs:1,2) {};
  \node[fill={rgb:red,202;green,8;blue,14},circle,fill,inner sep=3.5pt] at (axis cs:1,0.5) {};
  \legend{\LARGE$\alpha_t$,\LARGE$\eta_t$,\LARGE$1\!-\!\eta_t$};
\end{axis}
  \node at (7.05,-0.1) {\huge $t$};
  \node at (-0.44,1) {\huge $\alpha_1$};
  \node at (-0.44,4.3) {\huge $\eta_1$};
\end{tikzpicture}
\vskip-4.5pt

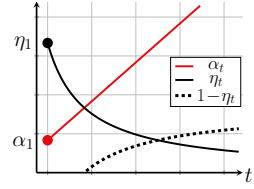
\captionof{figure}{A sequence example of $(\alpha_t, \eta_t)$.}\label{fig:step_eg}
\end{minipage}

where $d_\xi$ and $\pi_\theta$ are not trained for learning $D_\nu$.
Let $\rho_{\pi}\in\Delta_\cS$ denote the state visitation density of $\pi$, which is defined as $\rho_\pi(s)\coloneqq(1\!-\hspace*{-1pt}\gamma)\,\bE_\pi\!\bigl[\hspace*{1pt}\sum_{\scriptscriptstyle i=0}^\infty\gamma^i\,\textrm{\raisebox{-1.pt}{$\bI$}}\{s_i\!=\!s\}\!\hspace*{1pt}\bigr]$, where $\textrm{\raisebox{-1.pt}{$\bI$}}\{\cdot\}$ is an indicator function. The convergence of functions in an ideal case is found at $\pi_\nu\!=\!\piE$ and $d_\xi(s)\!=\!\log\{\hspace*{-.5pt}\rho_{\piE}(s)/\rho_{\pi_\theta}(s)\hspace*{-1pt}\}$.

\textbf{Learning with MD-based rewards.\topicquad}Based on the MD solution for a regularized reward function, we focus on developing an MD-based learning objective. Let $\psi_\phi\in\Psi(\Pi)$ denote a parameterized regularized reward function, and $\pi_{\phi}$ denotes a corresponding policy from $\phi$. Note that the transformation between $\psi_\phi$ and $\pi_\phi$ can be performed with shared $\phi$ without the additional computational costs under specific parameterizations \citep{rairl}. Using a step size $\eta_t$, the RL agent  $\pi_\theta$, and the estimated expert policy $\pi_\nu$, we define the objective of $\phi$ as a direct interpretation of the update rule of \eq{eq:mdobj}:
\begin{equation}\label{eq:mdairlobj}
  \minimize \cL_{\psi_\phi} = \bE_{s\sim\bar{\tau}_t}\hspace*{-1pt} \bigl[\eta_t\,D_\Omega\hspace*{-1pt}\bigl(\pi_\phi(\,\cdot\,|\hspace*{1pt}s\hspace*{1pt})\big\Vert\pi_\nu(\,\cdot\,|\hspace*{1pt}s\hspace*{1pt})\bigr) + (1\!-\!\eta_t) D_\Omega\hspace*{-1pt}\bigl(\pi_\phi(\,\cdot\,|\hspace*{1pt}s\hspace*{1pt})\big\Vert\pi_\theta(\,\cdot\,|\hspace*{1pt}s\hspace*{1pt})\bigr)\bigr],
\end{equation}
where the trajectory $\bar{\tau}_t$ denotes sample states using both agent and expert trajectories. As shown in \fig{fig:step_eg}, $\eta_t$ is adjusted by linearly increasing $\alpha_t$, which originated from the analyses in \sect{sect:theory}.

Another important consideration is the way of handling covariate shifts \citep{NEURIPS2021_07d59386} since it is likely that state densities between the expert and the agent are misaligned. Thus, we define the IRL reward function as linear combinations of $\psi_\phi$ and the state density discriminative signal:
\begin{equation}\label{eq:linear}
  \psi^\lambda_\phi(s,a)=\lambda\,\psi_\phi(s,a)+d_\xi(s),
\end{equation}
with a coefficient $\lambda\in\bR^+$. Utilizing an arbitrary regularized RL algorithm with a regularizer $\lambda\Omega(\cdot)$, the reward learning regarding agent policy $\pi_\theta$ is decomposed into the following:
\begin{equation*}
\begin{aligned}
  \bE_{\pi_\theta}\!\bigl[\hspace*{.5pt}\psi^\lambda_\phi(s,a)\!-\!\lambda\Omega\bigl(\pi_\theta(\,\cdot\,|\hspace*{.5pt}s\hspace*{.5pt})\bigr)\bigr]
  &=\lambda\bE_{\pi_\theta}\!\bigl[\hspace*{.5pt}\psi_\phi(s,a)\!-\!\Omega\bigl(\pi_\theta(\,\cdot\,|\hspace*{.5pt}s\hspace*{.5pt})\bigr)\bigr]\!-\!D_\KL\bigl(\hspace*{1pt}\rho_{\pi_\theta}\hspace*{-1pt}\big\Vert\hspace*{1pt}\rho_{\piE}\hspace*{-1pt}\bigr)\\
  &=-\lambda\bE_{\pi_\theta}\!\bigl[\hspace*{.5pt}D_\Omega\hspace*{-.5pt}\bigl(\pi_\theta(\,\cdot\,|s)\big\Vert\pi_\phi(\,\cdot\,|s)\bigr)\bigr]\!-\!D_\KL\bigl(\hspace*{1pt}\rho_{\pi_\theta}\hspace*{-1pt}\big\Vert\hspace*{1pt}\rho_{\piE}\hspace*{-1pt}\bigr).
\end{aligned}
\end{equation*}
Minimizing the first term of $\bE_{\pi_\theta}\!\bigl[D_\Omega(\pi^s_\theta\Vert\pi^s_\phi)\bigr]$ represents learning with the MD formulation. Minimizing the second term $D_\KL(\rho_{\pi_\theta}\Vert \rho_{\piE})$ plays an auxiliary role in facilitating the supports of state visitation densities to be correctly matched. With the hyperparameter $\lambda$, we report that learning the second term is helpful when the state densities are heavily misaligned in certain benchmarks. \alg{alg:mdirl} summarizes the entire procedure. We defer additional details to Appendices~\ref{appsect:full}~and~\ref{appsect:implement}.

\section{Experimental Results}\label{sect:expr}
The aim of our experiments was to identify whether MD-AIRL facilitates robustness for various $\Omega$ while retaining the scalability of AIL. The comparative method was RAIRL with density-based models (RAIRL-DBM) which contained comparable expressiveness as MD-AIRL. For RL, we used RAC \citep{rac}, which is a generalization of the SAC algorithm \citep{sac}. We considered a class of regularizers $\Omega(p)\!=\!-\bE_{x\sim p}[\varphi\hspace*{.1pt}(p(x)\hspace*{-.6pt})\hspace*{-.4pt}]$ with \enumone{} \Sh{} \ ($\varphi(x)\!=\!\log(x)$), \enumtwo{} \Ts{}   ($\varphi(x;q)=\frac{1}{q-1}\hspace*{-1pt}(x^{q-1}\hspace*{-1pt}-\hspace*{-1pt}1\hspace*{-1pt})$, $q=2$ by default), \enumthree{} $\exp$ ($\varphi(x)\!=\!e\!-\!e^x$), \enumfour{} $\cos$ ($\varphi(x)\!=\!\cos(\frac{\pi}{2}x)$), and \enumfive{} $\sin$ ($\varphi(x)\!=\!1\!-\sin(\frac{\pi}{2}x)$).

\subsection{Large scale multiarmed bandits}\label{subsect:bandit}
To measure the performance of IRL, we first considered multiarmed bandit problems, where the cardinality of action spaces varies largely. Learning the optimal distribution of $\piE$ becomes challenging as the cardinality of the space $\lvert\cA\vert$ increases, because the frequency of each sample becomes sparse due to the curse of dimensionality. The stateless expert distribution $\piE$ was generated by the parameters of softmax distribution $\piE(a)=\nicefrac{\exp(z_a)}{\sum_{i}\exp(z_i)}$, where the logits $z_i$ were randomly initialized. We set the action size to $\lvert\cA\rvert=\mathtt{10^2},\mathtt{10^3},\mathtt{10^4}$ and restricted each sample size of 16.

\begin{minipage}[b]{0.58\linewidth}
\centering
\captionof{table}{The training results of $\lvert\cA\rvert\cdot D_\Omega(\piT\Vert\piE)$
with five types of regularization (five runs with different seeds).}\label{tab:bandit}
\begin{adjustbox}{width=0.99\textwidth,center}
  \begin{tabular}{c c c c c c c}
    \toprule
    &\multicolumn{2}{c}{$\lvert\cA\rvert=\mathtt{10^2}$}&\multicolumn{2}{c}{$\lvert\cA\rvert=\mathtt{10^3}$}&\multicolumn{2}{c}{$\lvert\cA\rvert=\mathtt{10^4}$}\\\hline\rule{0pt}{\normalbaselineskip}
    {\footnotesize Method}&RAIRL&MD-AIRL&RAIRL&MD-AIRL&RAIRL&MD-AIRL\\\midrule
    {\scriptsize \Sh{}}&$2.55\pm1.59$&$\mathbf{2.28\pm1.20}$&$140.3\pm87.5$&$\mathbf{125.3\pm61}$&-&-\\
    {\footnotesize \Ts{}}&$0.21\pm0.13$&$\mathbf{0.11\pm0.04}$&$0.55\pm0.13$&$\mathbf{0.24\pm0.03}$&$4.95\pm2.3$&$\mathbf{4.21\pm0.2}$\\
    $\exp$&$0.27\pm0.17$&$\mathbf{0.13\pm0.06}$&$0.55\pm0.12$&$\mathbf{0.23\pm0.03}$&$5.06\pm2.4$&$\mathbf{4.97\pm0.7}$\\
    $\cos$&$0.05\pm0.04$&$\mathbf{0.02\pm0.01}$&$0.03\pm0.02$&$\mathbf{0.01\pm0.01}$&$0.21\pm0.6$&$\mathbf{0.05\pm0.1}$\\
    $\sin$&$0.34\pm0.25$&$\mathbf{0.12\pm0.04}$&$3.82\pm3.46$&$\mathbf{1.07\pm0.75}$&$8.12\pm3.8$&$\mathbf{7.59\pm1.0}$\\
    \bottomrule
  \end{tabular}
\end{adjustbox}
\end{minipage}\hfill
\begin{minipage}[b]{0.41\linewidth}
\newcommand*{\banditfigheight}{65pt}
\begin{adjustbox}{width=0.99\textwidth,center}
  \begin{tikzpicture}[tight background, every node/.style={inner sep=0,outer sep=0},on grid]
  \clip (-0.9,-1.25) rectangle+(8.18,2.4);
  \node at (0.1,0) {\includegraphics[height=\banditfigheight]{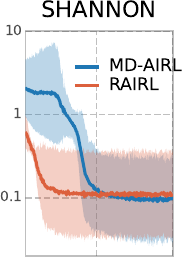}};
  \node at (1.64,0) {\includegraphics[height=\banditfigheight]{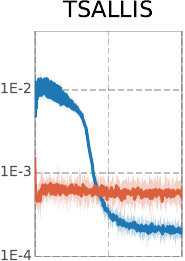}};
  \node at (3.25,0) {\includegraphics[height=\banditfigheight]{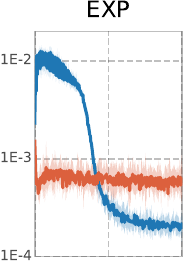}};
  \node at (4.85,0) {\includegraphics[height=\banditfigheight]{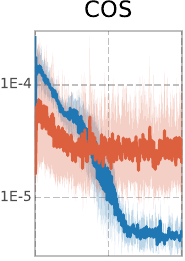}};
  \node at (6.47,0) {\includegraphics[height=\banditfigheight]{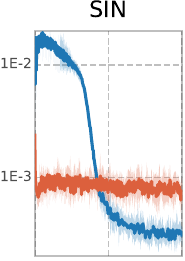}};
  \node[rotate=90] at (-0.8,0){\tiny\textsf{\Bg{} div.}};
  \node[scale=0.7] at (0.58,-1.19){\tiny\textsf{0.3M}};
  \node[scale=0.7] at (2.24,-1.19){\tiny\textsf{0.3M}};
  \node[scale=0.7] at (3.86,-1.19){\tiny\textsf{0.3M}};
  \node[scale=0.7] at (5.5,-1.19){\tiny\textsf{0.3M}};
  \node[scale=0.7] at (7.1,-1.19){\tiny\textsf{0.3M}};
  \end{tikzpicture}
\end{adjustbox}
\vskip-6pt
\captionof{figure}{The cost $D_\Omega(\piT\Vert\piE)$ on the log-scale at $\lvert\cA\vert=\mathtt{10^3}$. The shade represents $95\%$ confidence interval.}
\vskip-26pt
\label{fig:bandit}
\end{minipage}

\medskip
\tab{tab:bandit} shows that MD-AIRL achieved overall lower \Bg{} divergence on average when three different cardinalities and five regularizers were considered. \fig{fig:bandit} shows that the \Bg{} divergence was large for MD-AIRL at the early training phase, because we chose the initial step size $\eta_1$ to be greater than $1$ ($\alpha_1=0.5$). MD-AIRL exceeded the discriminative performance of RAIRL after certain steps, while the progression of RAIRL mostly stopped at local minima. MD-AIRL outperformed RAIRL in four cases by choosing an effectively low step size at $\etaT$ to be less than $1$ ($\alphaT=2$). These results match the properties of MD algorithms and our convergence analyses. Therefore, we argue that a constrained update rule with appropriate step sizes is necessary for robust reward acquisition and imitation for situations when the total number of data samples is limited.

\begin{figure}[h]
  \vskip-4pt
  \centering
  \begin{adjustbox}{width=0.98\columnwidth,center}
  \begin{tikzpicture}[tight background, every node/.style={inner sep=0,outer sep=0},on grid]
  \clip (-4.6, -1.68) rectangle+(16.05, 4.5);
  \node[scale=1.2] at (-4.5,2.53) {\textsf{a}};
  \node[scale=1.2] at (-0.7,2.56) {\textsf{b}};
  \node at (-2.8,0.5 ) {\includegraphics[height=102pt]{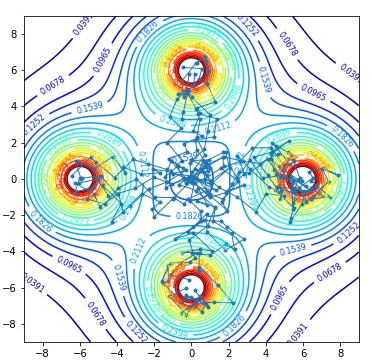}};
  \node at (1.3,0.52) {\includegraphics[height=85pt]{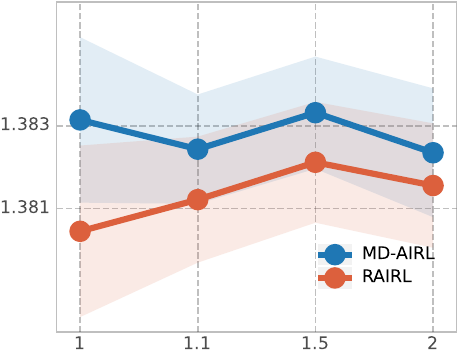}};
  \node[scale=0.9, rotate=90] at (-0.76,0.6){\scriptsize\textsf{Goal entropy}};
  \node[scale=0.9] at (1.5,2.2){\footnotesize\textsf{Reaching multigoals}};
  \node[scale=0.9] at (1.5,-1.15){\tiny\textsf{$q$ from \Ts{} entropies}};
  \begin{scope}[xshift=4.5cm, yshift=0.5cm]
    \node[scale=1.2] at (-1.1,2.13) {\textsf{c}};
    \begin{scope}
      \node at (-0.7,1.4) [rotate=90,scale=0.8] {\tiny\textsf{reg. reward}};
      \node at (-0.7,0) [rotate=90,scale=.8] {\tiny\textsf{MD reward}};
      \node at (-0.7,-1.5) [rotate=90,scale=.8] {\tiny$\mathsf{\pi(\cdot|s)}$};
      \node[scale=0.9] at (1.2, 2.12)  {\footnotesize\textsf{\Sh{} regularizer}};
      \node at (0.3, 0) {\includegraphics[height=108pt]{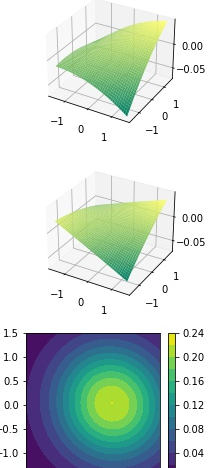}};
      \node at (2.1, 0) {\includegraphics[height=108pt]{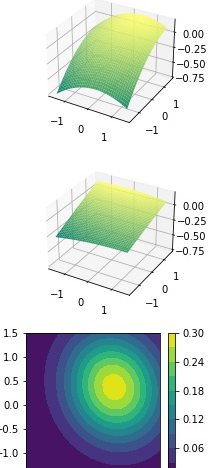}};
      \node at (0.25,-2.1) {\tiny\textsf{Step 5K}};
      \node at (2.05,-2.1) {\tiny\textsf{Step 300K}};
    \end{scope}
    \begin{scope}[xshift=4.cm]
      \node at (-0.7,1.4) [rotate=90,scale=0.8] {\tiny\textsf{reg. reward}};
      \node at (-0.7,0) [rotate=90,scale=.8] {\tiny\textsf{MD reward}};
      \node at (-0.7,-1.5) [rotate=90,scale=.8] {\tiny$\mathsf{\pi(\cdot|s)}$};
      \node[scale=0.9] at  (1.2, 2.12) {\footnotesize\textsf{\Ts{} regularizer}};
      \node at (0.3, 0) {\includegraphics[height=108pt]{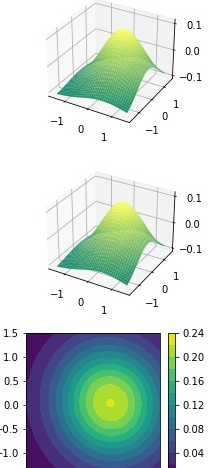}};
      \node at (2.1, 0) {\includegraphics[height=108pt]{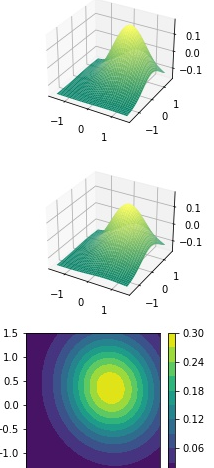}};
      \node at (0.25,-2.1) {\tiny\textsf{Step 5K}};
      \node at (2.05,-2.1) {\tiny\textsf{Step 300K}};
    \end{scope}
  \end{scope}
  \end{tikzpicture}
  \end{adjustbox}
  \vskip-4pt
  \caption{\subfigrefA{} Visualization of trajectories trained by MD-AIRL, and the ground-truth reward surface. \subfigrefB{} The entropies for the probabilities of achieving four goals. The x-axis indicates the $q$ value from the \Ts{} regularizers (the \Sh{} regularizer is considered by $q\!=\!1$ \citep{tsallis}). \subfigrefC{} The top and middle of each column show induced reward surfaces. The bottom shows the agent policy.}\label{fig:multigoal_result}
  \vskip-12pt
\end{figure}

\subsection{A continuous multigoal environment}\label{subsect:multigoal}
We then considered a multigoal environment. In this environment, an agent is a two-dimensional point mass initialized at the origin, and the four goals are located in the four cardinal directions. The objective of imitation learning is to go toward each direction evenly as possible where the expert model was trained by the SAC algorithm. To draw informative reward surfaces regarding stochastic actions, we considered the multivariate \Gs{} distribution policies parameterized with full covariance matrices instead of conventional diagonal \Gs{} policies (see Appendices~\ref{appsect:full}~and~\ref{appsect:implement}).

\fig{fig:multigoal_result}~(a) shows trajectories generated by the trained agent. \fig{fig:multigoal_result}~(b) shows that MD-AIRL achieved higher entropy for reaching the multiple goals. \fig{fig:multigoal_result}~(c) shows reward surfaces with regularizers, which were calculated by $\psi_\phi(s,a)+\varphi(\pi_\theta (a|s))$ for each point of $a\in\cA$ and $s=(5,-1)$. During the training, the MD reward was similar to the estimated ground truth using adversarial training. However, the surface of MD-AIRL became flatter than the ground-truth estimation when $\pi_t$ was sufficiently close to the expert behavior. As a result, we claim that a drastic change in the target distribution, which is one of the typical characteristics of adversarial frameworks, is prevented. We argue that these characteristics mitigate overfitting caused by unreliable discriminative signals.

\begin{figure}
  \newcommand*{\curveheight}{56pt}
  \newcommand*{\titlescale}{0.6}
  \newcommand*{\xaxistitlescale}{0.38}
  \newcommand*{\yaxistitlescale}{0.52}
  \newcommand*{\xaxistitle}{Time step}
  \newcommand*{\yaxistitle}{Score}
  \newcommand*{\legendlinewidth}{.6mm}
  \newcommand*{\legendmargin}{1.9pt}
  \newcommand*{\legendscale}{.52}
  \definecolor{mdcolor}{HTML}{1F77B4}
  \definecolor{regcolor}{HTML}{DC603D}
  \definecolor{randcolor}{HTML}{399A47}
  \definecolor{expcolor}{HTML}{4B4B4B}
  \definecolor{bccolor}{HTML}{BB67CA}

  \centering
  \begin{tikzpicture}[tight background, every node/.style={inner sep=0,outer sep=0},on grid]
  \clip (-1.88, -.15) rectangle+(13.85, 2.84);
  \begin{scope}[yshift=1.4cm]
    \node at (0,0.03) {\includegraphics[height=59pt]{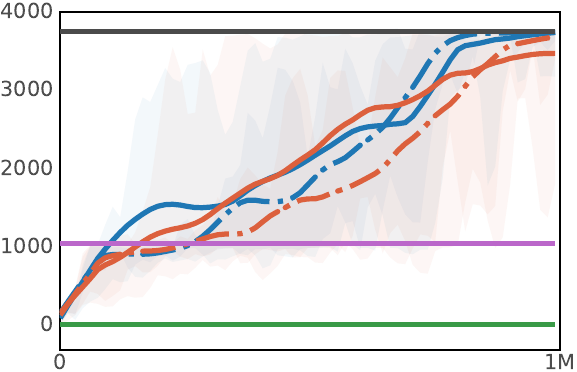}};
    \node at (3.4,0) {\includegraphics[height=\curveheight]{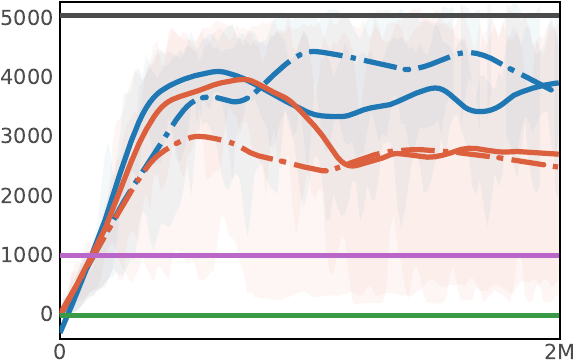}};
    \node at (6.8,0) {\includegraphics[height=\curveheight]{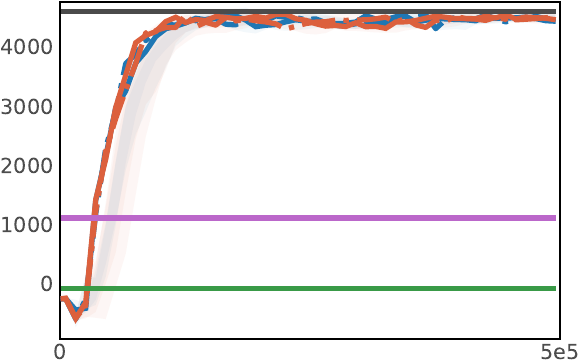}};
    \node at (10.2,0) {\includegraphics[height=\curveheight]{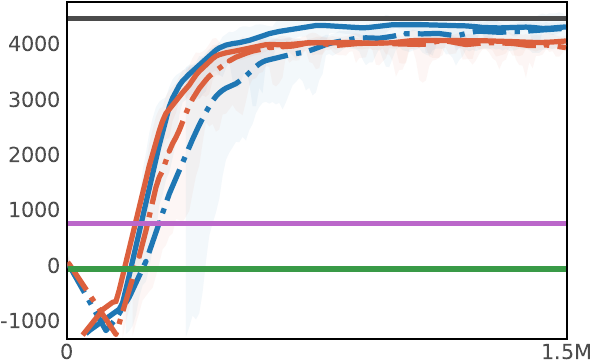}};
    \node[scale=\titlescale] at (0.12, 1.15) {\textsf{Hopper-v3}};
    \node[scale=\titlescale] at (3.52, 1.15) {\textsf{Walker2d-v3}};
    \node[scale=\titlescale] at (6.92, 1.15) {\textsf{HalfCheetah-v3}};
    \node[scale=\titlescale] at (10.32, 1.15) {\textsf{Ant-v3}};
    \node[scale=\xaxistitlescale] at (0.15,-1.05) {\textsf{\xaxistitle}};
    \node[scale=\xaxistitlescale] at (3.55,-1.05) {\textsf{\xaxistitle}};
    \node[scale=\xaxistitlescale] at (6.95,-1.05) {\textsf{\xaxistitle}};
    \node[scale=\xaxistitlescale] at (10.35,-1.05) {\textsf{\xaxistitle}};
    \node[rotate=90, scale=\yaxistitlescale] at (-1.72,0.04) {\textsf{\yaxistitle}};
  \end{scope}
  \begin{scope}[xshift=5.03cm, yshift=0.05cm]
    \draw[color=expcolor, line width=\legendlinewidth] (-6.1, 0) -- (-5.7, .0) node[scale=\legendscale, right=\legendmargin] {\textsf{expert}};
    \draw[color=mdcolor, line width=\legendlinewidth] (-4.7, .0) -- (-4.3, 0) node[scale=\legendscale, right=\legendmargin] {\textsf{MD-AIRL\hspace*{2pt}(\Sh{})}};
    \draw[color=mdcolor, dash pattern={on 3pt off 1.5pt on 1.5pt off 1.5pt}, line width=\legendlinewidth] (-2.1, .0) -- (-1.7, 0) node[scale=\legendscale, right=\legendmargin] {\textsf{MD-AIRL\hspace*{2pt}(\Ts{})}};
    \draw[color=regcolor, line width=\legendlinewidth] (0.15, .0) -- (0.55, .0) node[scale=\legendscale, right=\legendmargin] {\textsf{RAIRL (\Sh{})}};
    \draw[color=regcolor, dash pattern={on 3pt off 1.5pt on 1.5pt off 1.5pt}, line width=\legendlinewidth] (2.4, .0) -- (2.8, .0) node[scale=\legendscale, right=\legendmargin] {\textsf{RAIRL (\Ts{})}};
    \draw[color=bccolor, line width=\legendlinewidth] (4.4,0) -- (4.8,0) node[scale=\legendscale, right=\legendmargin] {\textsf{bc}};
    \draw[color=randcolor, line width=\legendlinewidth] (5.4,0) -- (5.8,0) node[scale=\legendscale, right=\legendmargin] {\textsf{random}};
    \draw[line width=0.1mm, black] (-6.3, -.17) rectangle+(13,.36);
  \end{scope}
  \end{tikzpicture}
  \vspace*{-4pt}
  \caption{Average scores for 5 runs with two different regularizers (\Sh{} and \Ts{} regularizer).  The agent and IRL reward functions were trained with 4 episodes of expert demonstrations.}\label{fig:curve}
  \vskip-6pt
\end{figure}

\begin{figure}
  \newcommand*{\summaryheight}{67.5pt}
  \newcommand*{\titlescale}{0.55}
  \newcommand*{\xaxistitlescale}{0.45}
  \newcommand*{\yaxistitlescale}{0.47}
  \newcommand*{\xaxistitle}{Time step}
  \newcommand*{\yaxistitle}{Score}
  \newcommand*{\legendlinewidth}{.6mm}
  \newcommand*{\legendmargin}{1.5pt}
  \newcommand*{\legendscale}{.47}
  \centering
  \begin{adjustbox}{width=.99\textwidth,center}
  \begin{tikzpicture}[tight background, every node/.style={inner sep=0,outer sep=0},on grid]

  \begin{scope}
    \node at (0,0) {\includegraphics[height=\summaryheight]{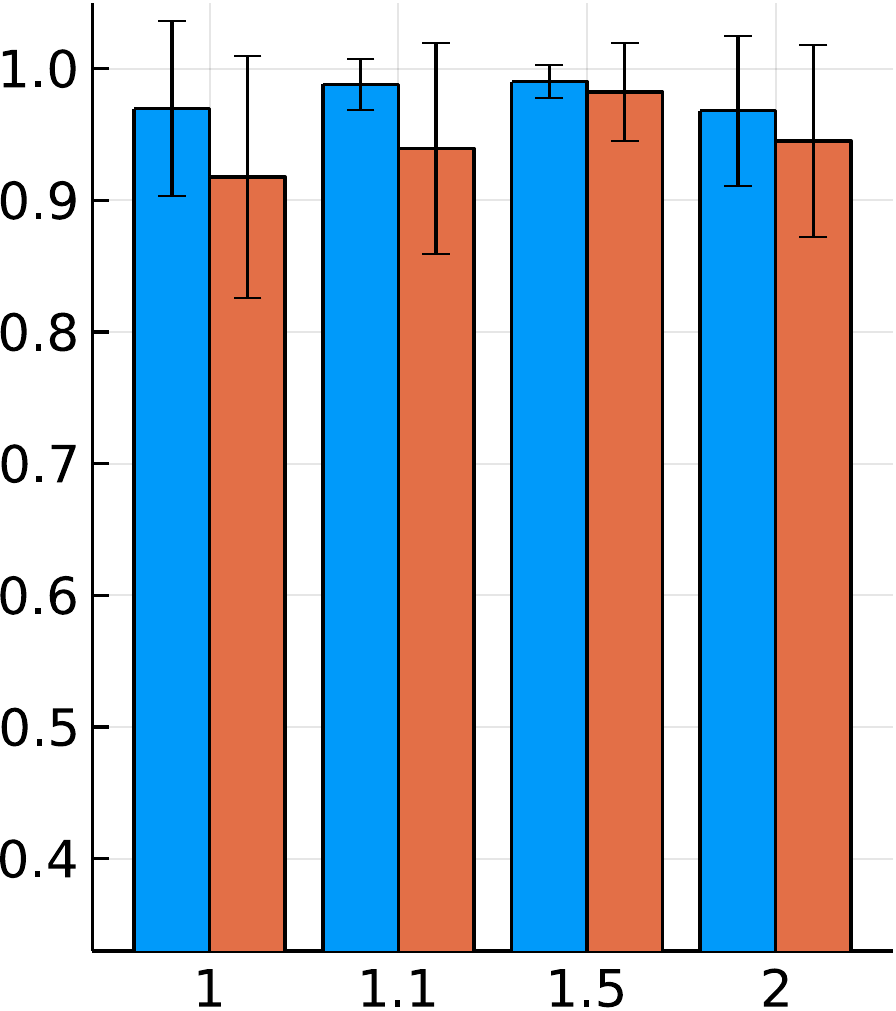}};
    \node at (2,0) {\includegraphics[height=\summaryheight]{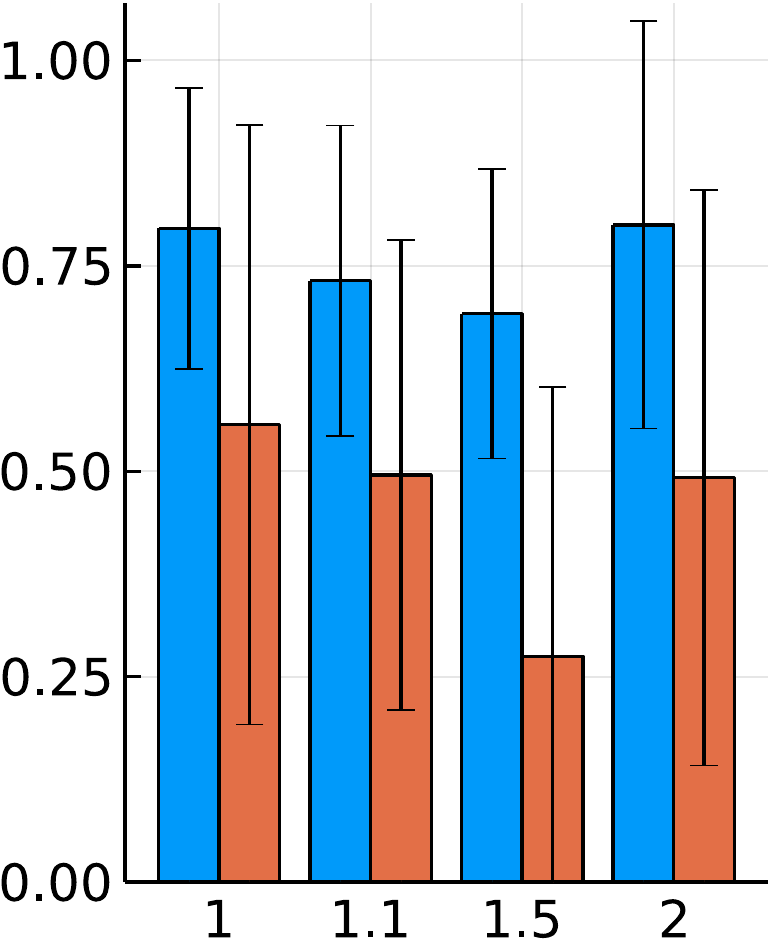}};
    \node at (4,0) {\includegraphics[height=\summaryheight]{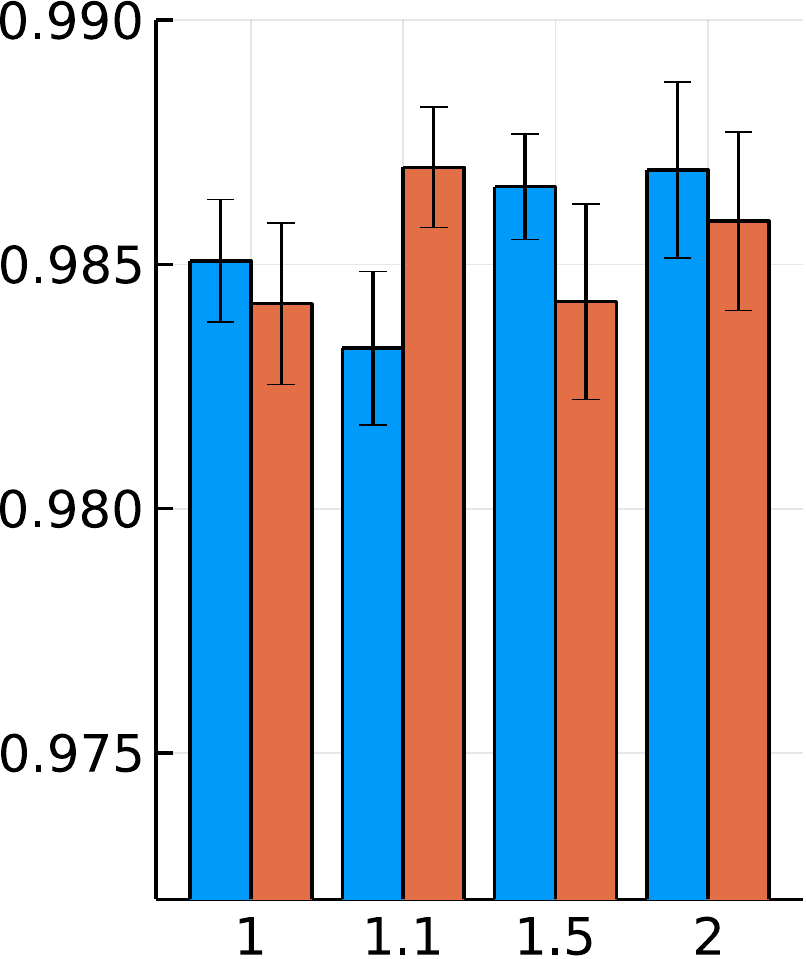}};
    \node at (6,0) {\includegraphics[height=\summaryheight]{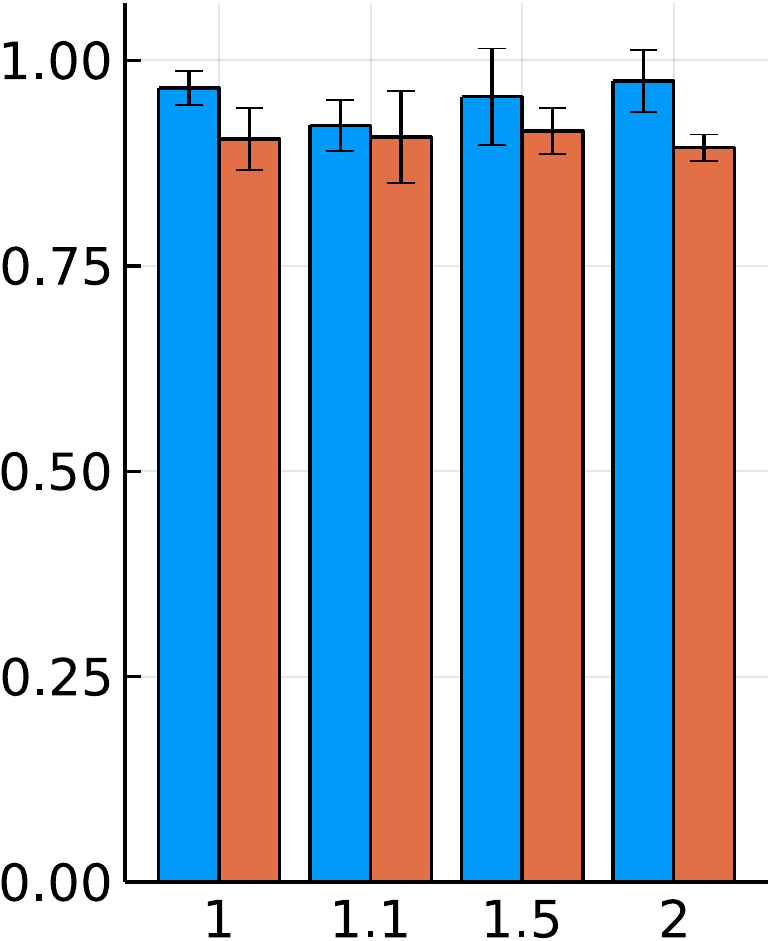}};
    \node[scale=\titlescale] at (0.1, 1.24) {\textsf{Hopper}};
    \node[scale=\titlescale] at (2.08, 1.24) {\textsf{Walker2d}};
    \node[scale=\titlescale] at (4.2, 1.24) {\textsf{HalfCheetah}};
    \node[scale=\titlescale] at (6.1, 1.24) {\textsf{Ant}};
  \end{scope}
  \begin{scope}[xshift=8.5cm]
    \node at (0,0) {\includegraphics[height=\summaryheight]{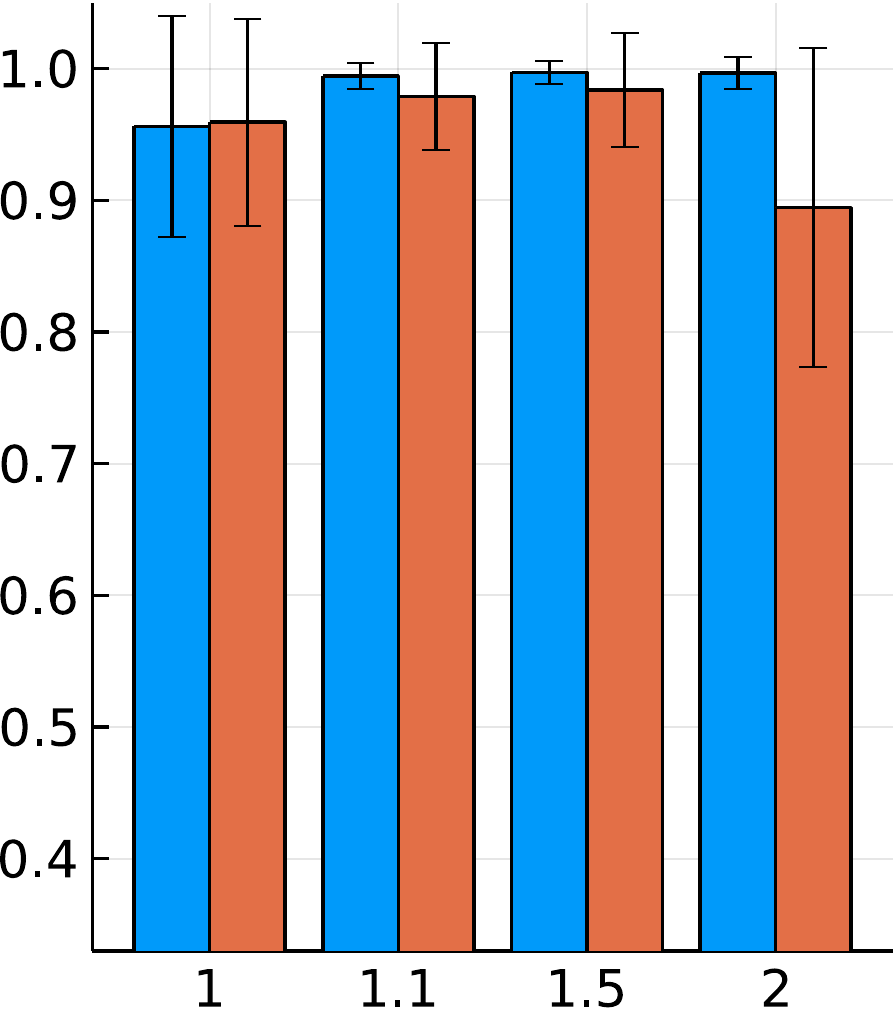}};
    \node at (2,0) {\includegraphics[height=\summaryheight]{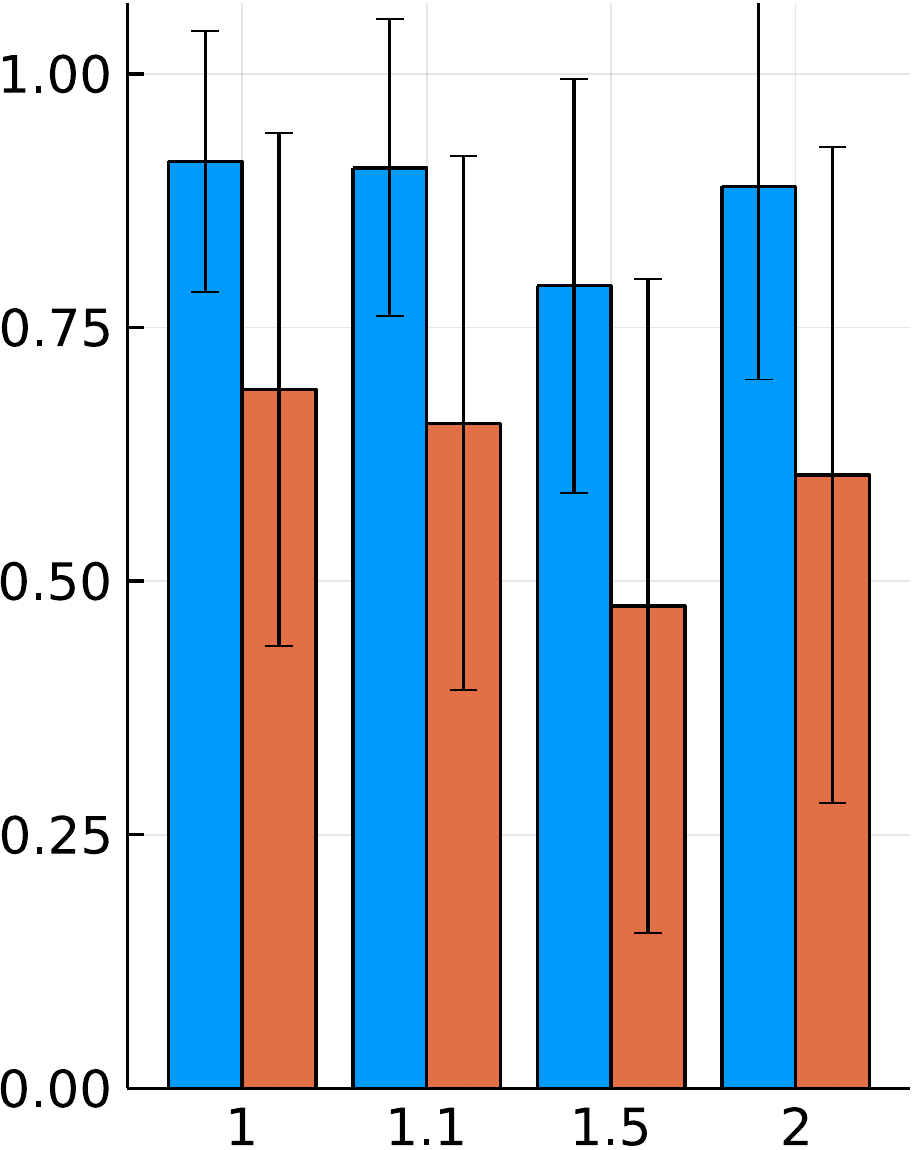}};
    \node at (4,0) {\includegraphics[height=\summaryheight]{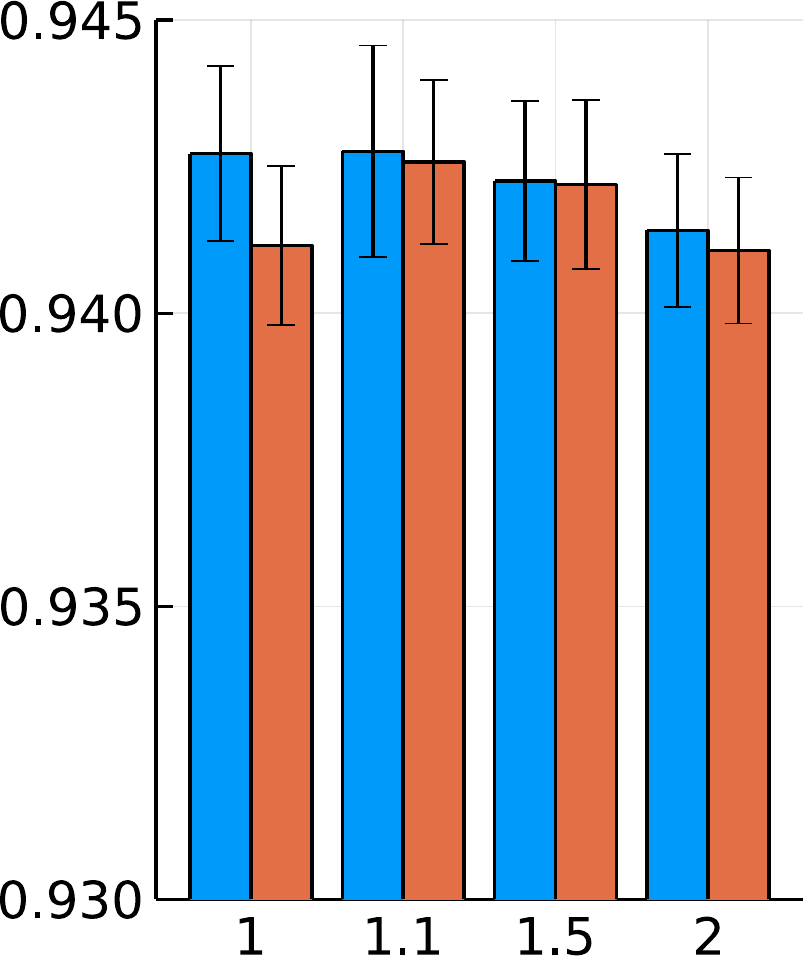}};
    \node at (6,0) {\includegraphics[height=\summaryheight]{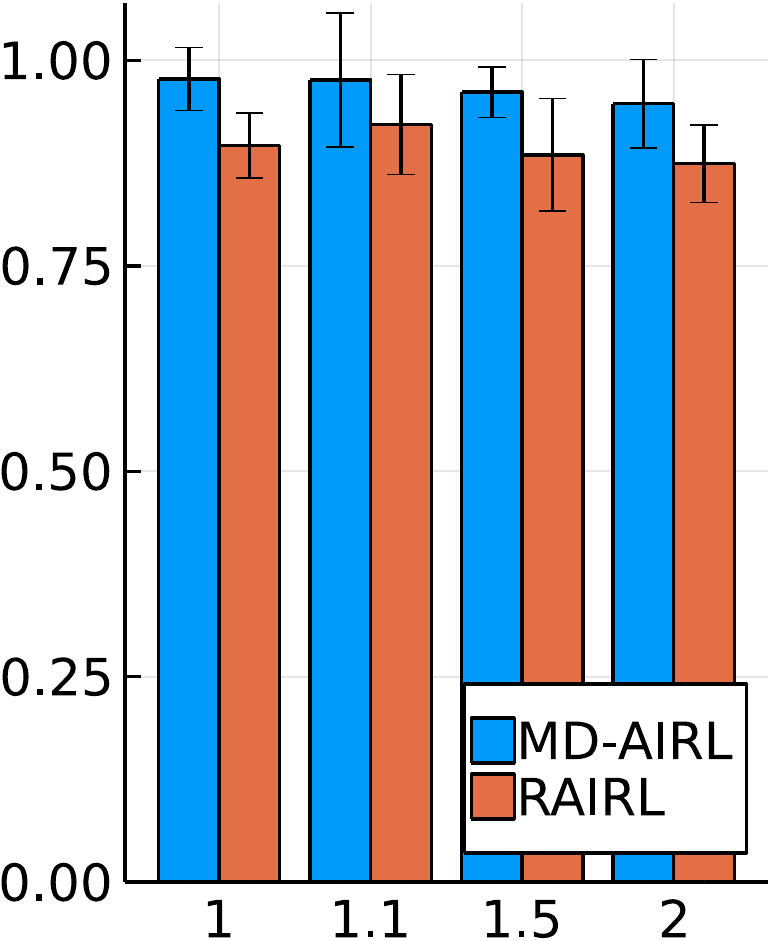}};
    \node[scale=\titlescale] at (0.1, 1.24) {\textsf{Hopper}};
    \node[scale=\titlescale] at (2.08, 1.24) {\textsf{Walker2d}};
    \node[scale=\titlescale] at (4.2, 1.24) {\textsf{HalfCheetah}};
    \node[scale=\titlescale] at (6.1, 1.24) {\textsf{Ant}};
  \end{scope}
  \end{tikzpicture}
  \end{adjustbox}
  \vskip-4pt
  \caption{Scores on the last $10^5$ steps in a total of $32$ different settings. The $x$-axis indicates the $q$ value of the \Ts{} regularizers. The scores are rescaled by considering the expert performance as $1$, and the error bars represent standard deviations. Left: 4 demonstrations. Right: 100 demonstrations.}\label{fig:summary}
  \vskip-14pt
\end{figure}

\subsection{A continuous control benchmark: \MJ{}}\label{subsect:mujoco}
Lastly, we validated MD-AIRL on the \MJ{} continuous control benchmark suite. We assumed full covariance \Gs{} policies for both learner's policy $\pi$ and expert policy $\piE$. We used the hyperbolized environment assumption \citep{rairl} where the action constraint is incorporated into the dynamics as a part of the environment using hyperbolic tangent activation.

\textbf{Sample efficiency.\topicquad}
For each task, we considered two different numbers of episodes collected by an expert policy. In \fig{fig:curve}, the performance of MD-AIRL, RAIRL, and behavior cloning (bc) algorithms \citep{bc} is shown with the expert and random agent performance. MD-AIRL was able to achieve consistent performance throughout the tasks and demonstration size. On the training curves, MD-AIRL showed high tolerance to the scarcity of data compared to RAIRL for 4 expert demonstrations. The plots in \fig{fig:summary} indicate that MD-AIRL showed higher average scores with lower variance compared to RAIRL, across 30 distinct cases among 32 configurations we have tested. MD-AIRL inherits the scalability of AIL, and it is highly stable with respect to limited sample sizes.

\begin{table}[h]
\vspace*{-14pt}
\caption{Scores on noisy demonstrations. The values of $\varepsilon$ represents scales of the Gaussian noises.}\label{tab:score}
\centering
\begin{adjustbox}{width=0.995\textwidth, center}
\small
\begin{tabular}{c|ccc}
\toprule
\multicolumn{2}{c}{\normalsize\textbf{Method}}&\normalsize$\varepsilon=0.01$&\normalsize$\varepsilon=0.5$\\
\midrule
\multirow{4}{*}{\rotatebox[origin=c]{90}{{\textbf{Hopper}}}}
  &RAIRL {\scriptsize(Shannon)}&$3636.03\pm391.09$&$3573.74\pm508.14$\\
  &MD-AIRL {\scriptsize(Shannon)}&$\mathbf{3669.25\pm177.78}$&$\mathbf{3653.31\pm267.87}$\\
\cmidrule{2-4}
  &RAIRL {\scriptsize(Tsallis)}&$3671.12\pm322.32$&$3576.17\pm515.75$\\
  &MD-AIRL {\scriptsize(Tsallis)}&$\mathbf{3730.14\pm63.09}$&$\mathbf{3701.24\pm205.68}$\\
\midrule
\multirow{4}{*}{\rotatebox[origin=c]{90}{\textbf{Walker2d}}}
  &RAIRL {\scriptsize(Shannon)}&$2856.56\pm939.9$&$2451.00\pm1392.6$\\
  &MD-AIRL {\scriptsize(Shannon)}&$\mathbf{3386.38\pm953.59}$&$\mathbf{3252.65\pm1395.7}$\\
\cmidrule{2-4}
  &RAIRL {\scriptsize(Tsallis)}&$2731.84\pm1058.7$&$2435.10\pm1555.2$\\
  &MD-AIRL {\scriptsize(Tsallis)}&$\mathbf{3624.00\pm992.63}$&$\mathbf{3093.54\pm963.96}$\\
\bottomrule
\end{tabular}
\hspace*{4pt}
\begin{tabular}{c|ccc}
\toprule
\multicolumn{2}{c}{\normalsize\textbf{Method}}&\normalsize$\varepsilon=0.01$&\normalsize$\varepsilon=0.5$\\
\midrule
\multirow{4}{*}{\rotatebox[origin=c]{90}{{\scriptsize\textbf{HalfCheetah}}}}
  &RAIRL {\scriptsize(Shannon)}&$4354.15\pm63.83$&$4216.99\pm661.17$\\
  &MD-AIRL {\scriptsize(Shannon)}&$\mathbf{4373.17\pm68.12}$&$\mathbf{4337.18\pm106.40}$\\
\cmidrule{2-4}
  &RAIRL {\scriptsize(Tsallis)}&$4364.13\pm68.09$&$4216.67\pm248.08$\\
  &MD-AIRL {\scriptsize(Tsallis)}&$\mathbf{4388.87\pm73.19}$&$\mathbf{4247.44\pm266.73}$\\
\midrule
\multirow{4}{*}{\rotatebox[origin=c]{90}{\textbf{A\hspace*{.5pt}n\hspace*{.5pt}t}}}
  &RAIRL {\scriptsize(Shannon)}&$4493.74\pm383.04$&$3777.78\pm505.78$\\
  &MD-AIRL {\scriptsize(Shannon)}&$\mathbf{4658.29\pm201.37}$&$\mathbf{4284.38\pm329.79}$\\
\cmidrule{2-4}
  &RAIRL {\scriptsize(Tsallis)}&$4359.62\pm168.46$&$3660.22\pm508.54$\\
  &MD-AIRL {\scriptsize(Tsallis)}&$\mathbf{4705.25\pm130.53}$&$\mathbf{4127.37\pm457.25}$\\
\bottomrule
\end{tabular}
\end{adjustbox}
\vskip-3pt
\end{table}

\textbf{Noisy demonstrations.\topicquad} \tab{tab:score} shows the results of imitation learning experiments for 100 expert demonstrations with two levels of \Gs{} additive noises, resulting in suboptimal demonstrations. MD-AIRL is highly tolerant to noisy data, consistently achieving higher performance. The experiment is closely related to the general case in the theory; the results suggest that the characteristics of MD-AIRL are in alignment with our analyses of the MD reward learning scheme.

\begin{wrapfigure}{r}{0.405 \textwidth}
  \newcommand*{\titlescale}{0.6}
  \newcommand*{\titleypos}{0.86}
  \newcommand*{\bgscale}{0.6}
  \newcommand*{\envnamescale}{0.44}
  \newcommand*{\envnamexshift}{-1.225cm}
  \newcommand*{\envnameyshift}{-0.44cm}
  \newcommand*{\subfigxshift}{2.47cm}
  \newcommand*{\bottomyshift}{-1.66cm}

  \newcommand{\xlabel}{
    \node[scale=0.5] at (0,-0.89) {\textsf{\Bg{} div.}}
  }
  \newcommand{\ylabel}{
    \begin{scope}[xshift=\envnamexshift, yshift=\envnameyshift]
      \node[scale=\envnamescale] at (-0.129, 1.02) {\textsf{Hopper}};
      \node[scale=0.37] at (-0.142, 0.68) {\textsf{Walker2d}};
      \node[scale=0.3] at (-0.17, 0.34) {\textsf{HalfCheetah}};
      \node[scale=\envnamescale] at (0, 0.0) {\textsf{Ant}};
    \end{scope}}

  \vskip-18pt
  \centering
  \begin{adjustbox}{width=.405\textwidth,center}

  \begin{tikzpicture}[tight background, every node/.style={inner sep=0,outer sep=0},on grid]
    \clip (-1.68, -2.43) rectangle+(5.30, 3.4);
    \begin{scope}
      \node[scale=\titlescale] at (-0.06, \titleypos){{\textsf{\Sh{} regularizer}}};
      \begin{scope}
        \node at (0, 0) {\includegraphics[height=43pt]{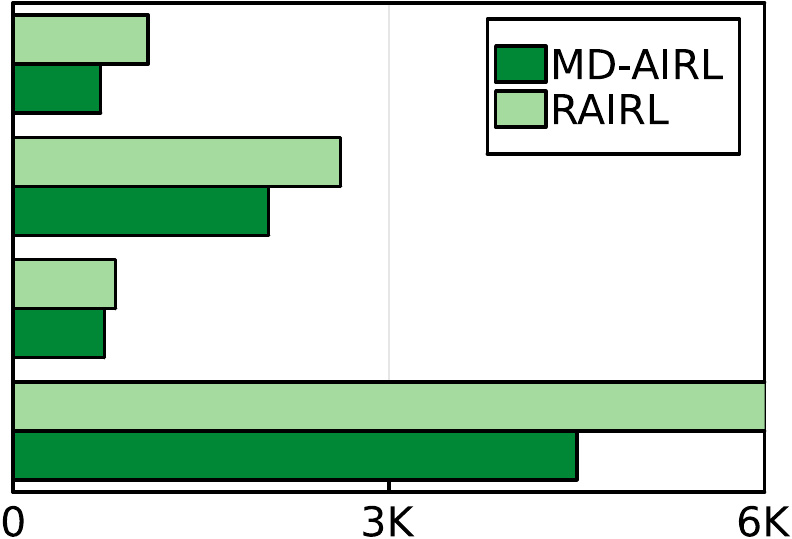}};
        \ylabel{};
      \end{scope}
      \begin{scope}[yshift=\bottomyshift]
        \node at (0.03, 0) {\includegraphics[height=43pt]{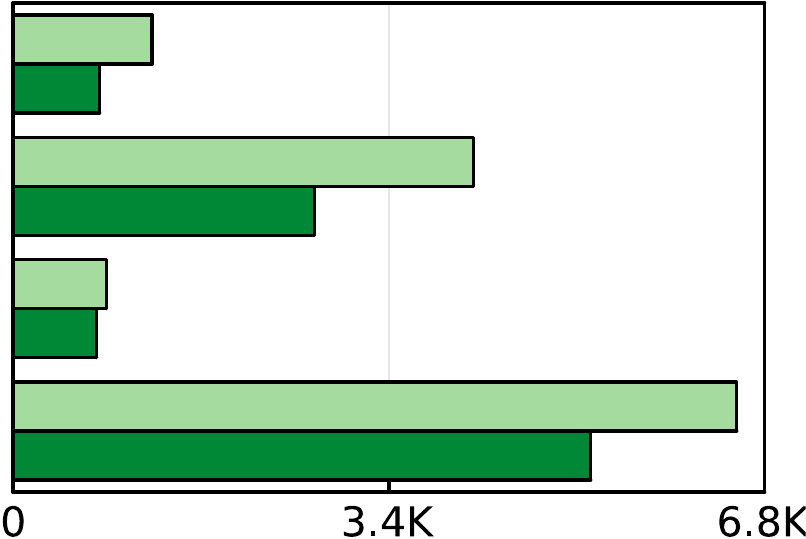}};
        \ylabel{};
      \end{scope}
    \end{scope}
    \begin{scope}[xshift=\subfigxshift]
      \node[scale=\titlescale] at (-0.06, \titleypos){{\textsf{\Ts{} regularizer}}};
      \begin{scope}
        \node at (0, 0) {\includegraphics[height=43pt]{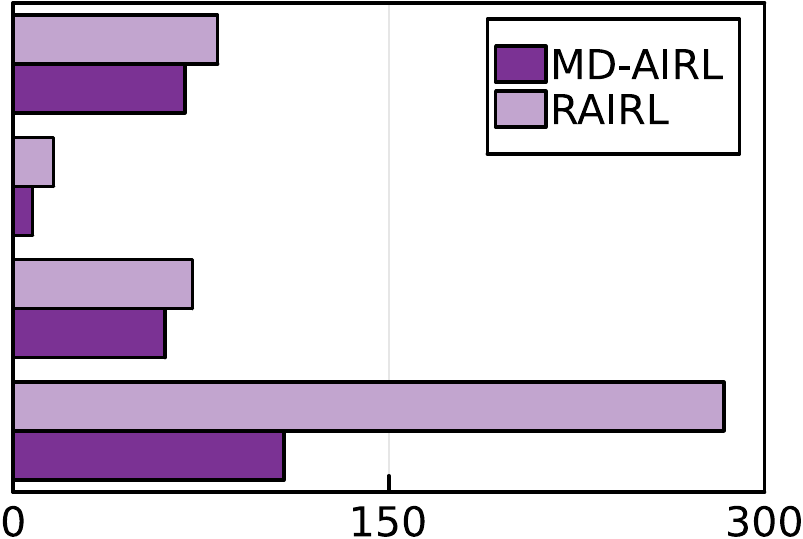}};
      \end{scope}
      \begin{scope}[yshift=\bottomyshift]
        \node at (0, 0) {\includegraphics[height=43pt]{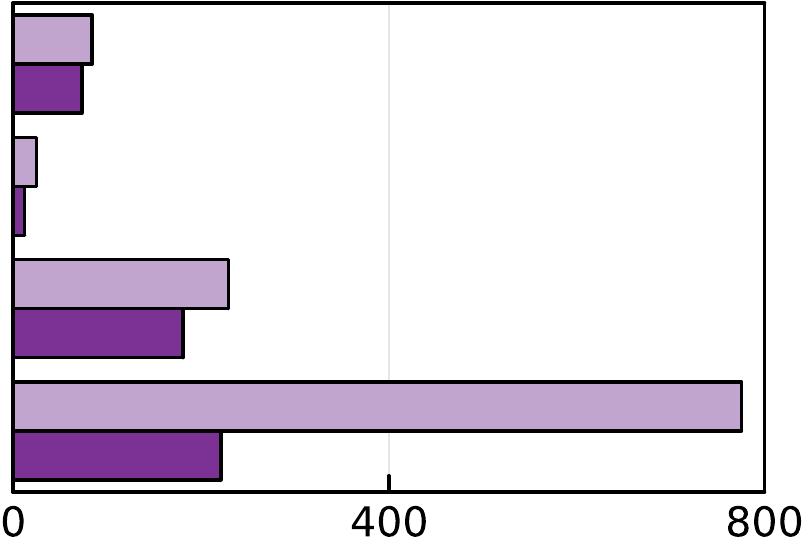}};
      \end{scope}
    \end{scope}
  \end{tikzpicture}
  \end{adjustbox}
  \vskip-6pt
  \caption{Divergences after imitation learning. \small \hskip4pt Top: $\varepsilon\!=\!0.01$. \hskip4pt Bottom: $\varepsilon\!=\!0.5$.}\label{fig:extra}
  \vskip-20pt
\end{wrapfigure}
We present a detailed analysis of the noisy demonstration experiments (\fig{fig:extra}). Let the \Bg{} divergence between agent and ground-truth expert policies be the error, and we measured these errors by increasing the given noise level for the expert trajectories. \fig{fig:extra} shows a general tendency that MD-AIRL has lower errors than RAIRL. With  \tab{tab:score} and \fig{fig:extra}, we were able to find the evident correlation between average \Bg{} divergence and performance since imitation learning convergence when the divergence is zero. Thus, this is another piece of empirical evidence that verifies our theoretical claims.

\section{Conclusions and Discussion}\label{sect:concl}
In this paper, we presented MD-AIRL, a practical AIL algorithm designed to solve the imitation learning problem in the real world. We proved that the proposed method has clear advantages over previous AIL methods in terms of robustness. We verified MD-AIRL in a variety of situations, including high-dimensional spaces, limited samples, and imperfect demonstrations. The empirical evidence showed that MD-AIRL outperforms previous methods on various benchmarks. We conclude that the rich foundation of optimization theories shows a promising direction for AIL studies.

Considering RL and IRL with geometric perspectives is vital for solving real-world problems. Although our work covers various imitation learning problems with the \Bg{} divergence, this does not include some other problems when the proximity term is of other statistical divergence families, such as the f-divergence \citep{firl}. If the relationship between these classes of divergences is studied in more detail, it is expected to proceed with applications to various subfields of machine learning. The assumptions on $\Omega$ in our analyses are usually justified by enforcing a specific policy space, but some outliers might have substantial meaning for certain tasks. Therefore, extensive analyses on these parameterizations remain as future works. The ``impurity'' of the MD-AIRL reward function compared to $\PsiOmega(\Pi)$ can be regarded as a limitation. To fully resolve this problem, all data must be treated as on-policy samples, which might require a sophisticated sampling mechanism.

\textbf{Societal impacts.}\topicquad{}
The evolution of imitation learning algorithms is expected to bring a structural shift in the labor market. The negative impact could be mitigated by diversification, unification, and redefinition of routine and manual jobs. The results of our work can be abused as a tool for analyzing individual data. Therefore, we stress that certain acts should be carefully regulated, such as collecting a substantial amount of individuals' data and aggressively tracking personal identity.

\section*{Acknowledgments}
The authors would like to thank the anonymous reviewers, Woosuk Choi, Jaein Kim, and Min Whoo Lee for their helpful discussion and comments. This work was partly supported by the IITP (2022-0-00951-LBA/25\%, 2022-0-00953-PICA/25\%, 2015-0-00310-SW.StarLab/10\%, 2021-0-02068-AIHub/10\%, 2021-0-01343-GSAI/10\%, 2019-0-01371-BabyMind/10\%) grant funded by the Korean government, and the CARAI (UD190031RD/10\%) grant funded by the DAPA and ADD.

\medskip
{
\small
\bibliographystyle{unsrtnat}
\bibliography{neurips_2022_mdirl.bib}
}

\ifdefined\checklist
\section*{Checklist}

\begin{enumerate}

\item For all authors...
\begin{enumerate}
  \item Do the main claims made in the abstract and introduction accurately reflect the paper's contributions and scope?
    \answerYes{}
  \item Did you describe the limitations of your work?
    \answerYes{} See \sect{sect:concl}.
  \item Did you discuss any potential negative societal impacts of your work?
    \answerYes{} See \sect{sect:concl}.
  \item Have you read the ethics review guidelines and ensured that your paper conforms to them?
    \answerYes{}
\end{enumerate}

\item If you are including theoretical results...
\begin{enumerate}
  \item Did you state the full set of assumptions of all theoretical results?
    \answerYes{} See \sect{sect:theory}.
  \item Did you include complete proofs of all theoretical results?
     \answerYes{} See \appsect{appsect:proof}.
\end{enumerate}

\item If you ran experiments...
\begin{enumerate}
  \item Did you include the code, data, and instructions needed to reproduce the main experimental results (either in the supplemental material or as a URL)?
    \answerYes{} Our empirical studies can be reproduced by from the detailed information in Appendices~\ref{appsect:full}~and~\ref{appsect:implement}.
  \item Did you specify all the training details (e.g., data splits, hyperparameters, how they were chosen)?
     \answerYes{} See Appendix~\ref{appsect:implement}.
        \item Did you report error bars (e.g., with respect to the random seed after running experiments multiple times)?
    \answerYes{} See \sect{sect:expr}.
        \item Did you include the total amount of compute and the type of resources used (e.g., type of GPUs, internal cluster, or cloud provider)?
    \answerNA{} In experiments, each algorithm was executed in CPU (a single thread).
\end{enumerate}

\item If you are using existing assets (e.g., code, data, models) or curating/releasing new assets...
\begin{enumerate}
  \item If your work uses existing assets, did you cite the creators?
    \answerYes{}
  \item Did you mention the license of the assets?
    \answerNA{} The MuJoCo simulator used in our experiments is freely available to everyone. See the site (https://mujoco.org).
  \item Did you include any new assets either in the supplemental material or as a URL?
    \answerNo{}
  \item Did you discuss whether and how consent was obtained from people whose data you're using/curating?
    \answerNA{}
  \item Did you discuss whether the data you are using/curating contains personally identifiable information or offensive content?
    \answerNA{}
\end{enumerate}

\item If you used crowdsourcing or conducted research with human subjects...
\begin{enumerate}
  \item Did you include the full text of instructions given to participants and screenshots, if applicable?
    \answerNA{}
  \item Did you describe any potential participant risks, with links to Institutional Review Board (IRB) approvals, if applicable?
    \answerNA{}
  \item Did you include the estimated hourly wage paid to participants and the total amount spent on participant compensation?
    \answerNA{}
\end{enumerate}

\end{enumerate}
\fi

\clearpage
\vbox{
    \hsize\textwidth
    \linewidth\hsize
    \vskip 0.08in
  \hrule height 3pt
  \vskip 0.218in
  \vskip -\parskip
    \centering
    {\LARGE\bf \normalsize Appendices for\vskip-2pt{\large Robust Imitation via Mirror Descent Inverse Reinforcement Learning}\par}
  \vskip 0.26in
  \vskip -\parskip
  \hrule height 1pt
  \vskip 0.09in
}

\appendix
\section{Proofs}\label{appsect:proof}
We denote the entire set of conditional distributions as $\Delta^\cS_\cA$, which is a vector space formed by a collection of $\lvert\cS\rvert$ elements of unit $(\lvert\cA\rvert-1)$-simplexes: $\Delta_\cA=\bigl\{\,x_1e_1+\dots+x_{\lvert\cA\rvert}e_{\lvert\cA\rvert}\,\big\vert\,\sum_{i=1}^{\lvert\cA\rvert} x_i=1\hspace*{5pt}\textrm{and}\hspace*{5pt}x_i \ge0\hspace*{5pt}\textrm{for}\hspace*{5pt}i\in\cA\,\bigr\}$. A trainable policy space is a subset of the entire conditional probability denoted as $\Pi\! \coloneqq\! [\Pis\hspace*{-.5pt}]_{\hspace*{-.8pt}\sincS}\! \subset\! \Delta^\cS_\cA$. We assume that $\Pis$ is a member of a specific \Bn{} space called \Lp{} space $(\bR^{\cA}, \lVert\cdot\rVert)$, where $\lVert\cdot\rVert$ is a $\rp$-norm on $\cA$. The dual space of \Lp{} space for $1 < \rp < \infty$ is \Lq{} space $(\bR^{\cA}, \lVert\cdot\rVertast\!)$, where $\lVert\cdot\rVert_{\!\ast}$ is defined as a $\rq$-norm ($\nicefrac{1}{\rp}+\nicefrac{1}{\rq}=1$). Here, the condition $1 < \rp \le 2$ is assumed for existence and convergence properties in the dual \Lq{} space.

We begin with the following preliminary definitions: the \Lip{} continuity and martingales.
\begin{definition}[\Lip{} constants]
  Given two metric spaces $(X,d_X\!)$ and $(Y,d_Y\!)$ where $d_X$ denotes the metric on set $X$ and $d_Y$ is the metric on set $Y$, a function $f:X\to Y$ is called \Lip{} continuous if there exists a real constant $k \ge 0$ such that, for all $x_1$ and $x_2$ in $X$,
\begin{equation}
  d_Y\hspace*{-1pt}\bigl(f(x_1),f(x_2)\bigr) \le k\cdot d_X\hspace*{-.5pt}(x_1,x_2).
\end{equation}
In particular, a function $f$ is called \Lip{} continuous if there exists a constant $k\ge 0$ such that,
\begin{equation}
  \bigl\lVert f(x_1)-f(x_2) \big\rVert_{\hspace*{-.5pt}\ast} \le k  \lVert x_1-x_2\rVert,\quad\forall x_1,x_2
\end{equation}
where norms $\lVert\cdot\rVert$ and $\lVert\cdot\rVertast$ are endowed with spaces $X$ and $Y$ respectively. For the smallest $L$ that substitutes $k$, $L$ is called the \Lip{} constant and $f$ is called a $L$-\Lip{} continuous function.
\end{definition}
\begin{definition}[Discrete-time martingales]
  If a stochastic process  $\{Z_t\}_{t\ge1}$  satisfies $\bE[\lvert Z_n\rvert] < \infty$ and
\begin{equation*}
\textstyle
\textrm{\enumone{} }\,\bE[Z_{n+1}\vert X_1,\dots,X_n]\le Z_n,\
\textrm{\enumtwo{} }\,\bE[Z_{n+1}\vert X_1,\dots,X_n]=Z_n,\
\textrm{\enumthree{} }\,\bE[Z_{n+1}\vert X_1,\dots,X_n]\ge Z_n,
\end{equation*}
the stochastic process $\{Z_t\}_{t\ge1}$ is called \enumone{} a submartingale, \enumtwo{} a martingale, and \enumthree{} a supermartingale, with respect to filtration $ \{X_t\}_{t\ge1}$.
\end{definition}

The following arguments and proofs follow the results that appeared in previous literature for general aspects  \citep{md,omd_univ,md_nonlin,md_info,omd_converge,mirrorless}. Our analyses extend existing theoretical results to imitation learning and IRL; they are also highly general to cover various online methods for sequential decision problems.

\subsection{Proof of Lemma 1}\label{subsect:proof_psi_unique}
\begin{proof}[Proof of Lemma~1]
The conjugate operator of $\psi_\pi^s$ satisfies the following identity (Lemma~1 of \citep{rairl})
\begin{equation*}
\begin{aligned}
  \Omega^\ast(\psi_\pi^s) &=\max_\tildepisinDeltacA\langle\tilde{\pi}^s,\psi^s_\pi\rangle_{\!\cA}-\Omega(\tilde{\pi}^s)\\
  &=\max_\tildepisinDeltacA\langle\tilde{\pi}^s, \nabla\Omega(\pi^s)\rangle_\cA -\bigl\langle \pi^s, \nabla\Omega(\pi^s)\big\rangle_{\!\cA}+ \Omega(\pi^s)-\Omega(\tilde{\pi}^s)\\
  &=\min_\tildepisinDeltacA \Omega(\tilde{\pi}^s)-\Omega(\pi^s)-\langle\nabla\Omega(\pi^s),\tilde{\pi}^s-\pi^s\rangle_{\!\cA}\\
  &=\min_\tildepisinDeltacA D_\Omega\hspace*{-1pt}\bigl(\hspace*{1pt}\tilde{\pi}^s\big\Vert\hspace*{1pt}\pi^s\bigr),
\end{aligned}
\end{equation*}
for every state $s\in\cS$. By the property of \Bg{} divergence and the convexity of $D_\Omega(\tilde{\pi}^s\Vert\pi^s)$ with respect to $\tilde{\pi}^s$, the optimal condition is obtained by the unique maximizing argument $\tilde{\pi}(\cdot|s)=\pi(\cdot|s)$. By taking gradient to both sides with respect to $\psi_\pi^s$ we yield $\pi^s=\nabla\Omega^\ast \hspace*{-1pt}(\psi^s_\pi\hspace*{-1pt})$.

If there is another $\tilde{\pi}\in\Delta^\cS_\cA$ that makes $\psi_{\tilde{\pi}}=\psi_\pi$, this contradicts the property of unique maximizing arguments for conjugates since $\pi\in\Pi$ and $\Pi\subset\Delta^\cS_\cA$. Therefore, $\psi_\pi$ is uniquely defined for each $\pi$ and $\nabla\Omega^\ast(\psi^s)\,\textrm{\raisebox{0.5pt}{$\in$}}\,\Pi^s$ for all $s\in\cS$.
\end{proof}

\subsection{Proof of Theorem 1}\label{subsect:proofhm1}
Consider the unique fixed point of $\pi_\ast$ as the solution of $\inf_{\!\piinPi}\bE[ f(\pi,\tau_t)]$ where the expectation indicates that we consider all outputs with respect to $\tau_t$ for $t\to\infty$ i.e., $\lim_{t\to\infty}\!\bE_{\tauonet}\hspace*{-1pt}[f(\pi, \tau_t)]$. By equating derivatives to zero, we write the condition of fixed point $\pi_\ast$ as $\nabla \Omega(\pi_\ast)= \lim_{t\to\infty}\bE[\nabla \Omega(\barpiEt)]$. This assumption is useful since this paper provides some general results, the case of $\inf_{\!\piinPi}\bE[f(\pi,\tau_t)]>0$ in particular, which means that the estimates $\{\barpiEt\}_{t=1}^\infty$ do not converge to the fixed point of $\pi_\ast$, hence { $\lim_{t\to\infty}\bE_{\tauonet}[\lVert\pi_\ast-\barpiEt\rVert] \ne 0$}. As a result, MD-based imitation learning algorithms allow many challenging settings, such as scarcity of data or imperfect demonstrations.

We first introduce a fundamental relationship regarding cumulative gradients in our online MD setting.
\begin{lemma}\label{lem:gradient_equiv} Let $\{\pi_t\hspace*{-.5pt}\}_{t=1}^\infty$, $\{\barpiEt\hspace*{-1pt}\}_{t=1}^\infty$, and $\{\eta_t\hspace*{-.5pt}\}_{t=1}^\infty$ be policy, estimate, and step size sequences, respectively. The subsequent policy $\pi_{t+1}$ in \eq{eq:mdobj} is obtained by an RL algorithm  using the derivation of $\psi_{t+1}$ in \eq{eq:mdobj},  resulting to the following equation:
\begin{equation}
  \pi_{t+1}(\hspace*{1pt}\cdot\hspace*{1pt}|s)=\argmin_{\pi^s\in\Pis} \eta_t D_\Omega\hspace*{-1pt}\bigl(\pi^s\big\Vert\barpiEt^s\bigr) +(1\!-\!\eta_t) D_\Omega\hspace*{-1pt}\bigl(\pi^s\big\Vert\pi^s_t\bigr)\quad \forall s\in\cS.
\end{equation} We have for $t\in\bN$,
\begin{equation}\label{eq:gradient_equiv}
  \eta_t\Bigl(\hspace*{-1pt}\nabla\Omega\bigl(\pi^s_t\bigr)-\nabla\Omega\bigl(\barpiEst\bigr)\!\Bigr) =
  \nabla\Omega\bigl(\pi^s_t\bigr)-\nabla\Omega\bigl(\pi^s_{t+1}\bigr)\quad \forall s\in\cS.
\end{equation}
\end{lemma}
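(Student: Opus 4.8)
The plan is to read the claimed identity directly off the first-order optimality condition of the strongly convex $\argmin$ problem that defines $\pi_{t+1}$, with no genuine estimation involved---only differentiation and rearrangement.

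First I would record the gradient of a Bregman divergence with respect to its first argument. From $D_\Omega(\pi^s\Vert\hat\pi^s)=\Omega(\pi^s)-\Omega(\hat\pi^s)-\langle\nabla\Omega(\hat\pi^s),\,\pi^s-\hat\pi^s\rangle_\cA$, the only $\pi^s$-dependent terms are $\Omega(\pi^s)$ and the linear term $-\langle\nabla\Omega(\hat\pi^s),\,\pi^s\rangle_\cA$, so $\nabla_{\pi^s}D_\Omega(\pi^s\Vert\hat\pi^s)=\nabla\Omega(\pi^s)-\nabla\Omega(\hat\pi^s)$. Applying this to the objective $g(\pi^s)=\eta_t D_\Omega(\pi^s\Vert\barpiEst)+(1-\eta_t)D_\Omega(\pi^s\Vert\pi^s_t)$ expresses $\nabla g(\pi^s)$ as the convex combination $\eta_t\bigl(\nabla\Omega(\pi^s)-\nabla\Omega(\barpiEst)\bigr)+(1-\eta_t)\bigl(\nabla\Omega(\pi^s)-\nabla\Omega(\pi^s_t)\bigr)$.

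Since $\Omega$ is strongly convex, $g$ is strongly convex and hence has a unique minimizer $\pi^s_{t+1}$. I would then set $\nabla g(\pi^s_{t+1})=0$ and collect the $\nabla\Omega(\pi^s_{t+1})$ terms, which yields the dual-averaging form $\nabla\Omega(\pi^s_{t+1})=\eta_t\nabla\Omega(\barpiEst)+(1-\eta_t)\nabla\Omega(\pi^s_t)$. Subtracting both sides from $\nabla\Omega(\pi^s_t)$ gives $\nabla\Omega(\pi^s_t)-\nabla\Omega(\pi^s_{t+1})=\eta_t\bigl(\nabla\Omega(\pi^s_t)-\nabla\Omega(\barpiEst)\bigr)$, which is exactly \eq{eq:gradient_equiv}. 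Since the derivation is pointwise in $s$, it holds for all $s\in\cS$.

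The step I expect to require the most care is justifying that the constrained minimization over $\Pis$ can be treated through the unconstrained stationarity condition $\nabla g(\pi^s_{t+1})=0$, so that no projection or boundary correction term appears. This is precisely where \lem{lem:psi_unique} enters: the dual point $\eta_t\nabla\Omega(\barpiEst)+(1-\eta_t)\nabla\Omega(\pi^s_t)$ is mapped back by $\nabla\Omega^\ast$, and the natural isomorphism---together with the distributivity $\Omega^\ast(y+\mathbf{1}c)=\Omega^\ast(y)+c$---guarantees that the resulting policy lands in $\Pi$. Hence the stationary point is feasible and coincides with the constrained minimizer, so the identity holds as an equality rather than a variational inequality.
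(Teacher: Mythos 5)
Your proposal is correct and follows essentially the same route as the paper's proof: both equate the derivative of the convex objective $\eta_t D_\Omega(\pi^s\Vert\barpiEst)+(1-\eta_t)D_\Omega(\pi^s\Vert\pi^s_t)$ to zero at $\pi^s_{t+1}$, obtaining $\nabla\Omega(\pi^s_{t+1})=\eta_t\nabla\Omega(\barpiEst)+(1-\eta_t)\nabla\Omega(\pi^s_t)$, and rearrange. Your closing remark on why the constrained minimizer over $\Pis$ satisfies the unconstrained stationarity condition is a point the paper glosses over, but it does not change the argument.
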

\begin{proof}[Proof of Lemma~\ref{lem:gradient_equiv}]
Since the optimization problem is convex with respect to each $\pi^s$, we equate the derivatives at $\pi_{t+1}$ to zero:
\begin{equation*}
  \eta_t\Bigl(\hspace*{-1pt}\nabla\Omega\bigl(\pi^s_{t+1}\bigr)-\nabla\Omega\bigl(\barpiEst\bigr)\!\Bigr)+(1\!-\!\eta_t)\Bigl(\nabla\Omega(\pi^s_{t+1})-\nabla\Omega\bigl(\pi^s_{t}\bigr)\!\Bigr)=0,\quad\forall\ s\in\cS.
\end{equation*}
Then, we derive \eq{eq:gradient_equiv} as
\begin{equation*}
  \begin{aligned}
  &\eta_t\Bigl(\hspace*{-1pt}\nabla\Omega\bigl(\pi^s_{t+1}\bigr)-\nabla\Omega\bigl(\barpiEst\bigr)\!\Bigr)+(1\!-\!\eta_t)\Bigl(\hspace*{-1pt}\nabla\Omega\bigl(\pi^s_{t+1}\bigr)-\nabla\Omega\bigl(\pi^s_{t}\bigr)\!\Bigr)=0\\
  &\Leftrightarrow\quad\nabla\Omega\bigl(\pi^s_{t+1}\bigr)-\eta_t \nabla\Omega\bigl(\barpiEst\bigr)-(1\!-\!\eta_t)\nabla\Omega\bigl(\pi^s_t\bigr)=0\\
  &\Leftrightarrow\quad\nabla\Omega\bigl(\pi^s_t\bigr)-\nabla\Omega\bigl(\pi^s_{t+1}\bigr)=\eta_t\Bigl(\hspace*{-1pt}\nabla\Omega\bigl(\pi^s_t\bigr)-\nabla\Omega\bigl(\barpiEst\bigr)\!\Bigr) \quad\forall\ s\in\cS.
  \end{aligned}
\end{equation*}
Therefore, the proof is complete.
\end{proof}
\lem{lem:gradient_equiv} indicates that the distances between dual maps are equivalent to $\eta_t\bigl\lVert\nabla\Omega(\barpiEst)-\nabla\Omega(\pi^s_t)\bigr\rVert_{\!\ast}$. Therefore, when the step size converges as $\lim_{t\to\infty}\eta_t=0$, the convergence in the dual space is induced as $\lim_{t\to\infty}\bigl\lVert\nabla\Omega(\pi^s_t)-\nabla\Omega(\pi^s_{t+1})\bigr\rVert_{\!\ast}=0$; thus, the convergence of associated reward functions for every state in \sect{sect:reward} is reasonable when $\Omega$ is strongly smooth.

In the following lemmas (Lemmas \ref{lem:three_identity}-\ref{lem:two_identity2}), we omit the given state for simplicity since they hold for $\forall s\in\cS$, hence one can write distributions $\pi_a = \pi^s_a$, $\pi_b = \pi^s_b$, and $\pi_c = \pi^s_c$ for a arbitrary given state $s$. First, we reintroduce the three-point identity as follows.
\begin{lemma}[Three-point identity]\label{lem:three_identity}
Let $\pi_a$, $\pi_b$, and $\pi_c$ be any policy distributions with a given state. We have the following identity:
\begin{equation*}
  \bigl\langle\nabla\Omega(\pi_a)\!-\!\nabla\Omega(\pi_b),\,\pi_c\!-\!\pi_b\bigr\rangle_{\!\!\scriptscriptstyle\cA}=D_\Omega\hspace*{-1pt}\bigl(\hspace*{1pt}\pi_c\hspace*{.5pt}\big\Vert\hspace*{1pt}\pi_b\hspace*{.5pt}\bigr)-D_\Omega\hspace*{-1pt}\bigl(\hspace*{1pt}\pi_c\hspace*{.5pt}\big\Vert\hspace*{1pt}\pi_a\hspace*{.5pt}\bigr)+D_\Omega\hspace*{-1pt}\bigl(\hspace*{1pt}\pi_b\hspace*{.5pt}\big\Vert\hspace*{1pt}\pi_a\bigr)
\end{equation*}
\end{lemma}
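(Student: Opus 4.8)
The plan is to prove the identity by direct algebraic expansion, since the three-point identity is a purely formal consequence of the definition of the Bregman divergence and needs no convexity, smoothness, or probabilistic structure beyond bilinearity of $\langle\cdot,\cdot\rangle_{\!\scriptscriptstyle\cA}$. I would start by writing out each of the three divergences appearing on the right-hand side using the definition $D_\Omega(\pi\Vert\hat\pi)=\Omega(\pi)-\Omega(\hat\pi)-\langle\nabla\Omega(\hat\pi),\,\pi-\hat\pi\rangle_{\!\scriptscriptstyle\cA}$, instantiated at the pairs $(\pi_c,\pi_b)$, $(\pi_c,\pi_a)$, and $(\pi_b,\pi_a)$ respectively.

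Next I would separate the potential terms $\Omega(\cdot)$ from the linear inner-product terms and track them independently. The potential contributions are $\Omega(\pi_c)-\Omega(\pi_b)$ from the first divergence, $-\Omega(\pi_c)+\Omega(\pi_a)$ from subtracting the second, and $\Omega(\pi_b)-\Omega(\pi_a)$ from the third; these six terms sum to zero, so all dependence on the convex potential $\Omega$ evaluated at the three points cancels identically. This cancellation is the structural reason the identity holds for any $\Omega$.

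The remaining work is to combine the three inner-product terms, which after the cancellation read $-\langle\nabla\Omega(\pi_b),\,\pi_c-\pi_b\rangle_{\!\scriptscriptstyle\cA}+\langle\nabla\Omega(\pi_a),\,\pi_c-\pi_a\rangle_{\!\scriptscriptstyle\cA}-\langle\nabla\Omega(\pi_a),\,\pi_b-\pi_a\rangle_{\!\scriptscriptstyle\cA}$. I would first merge the two terms carrying $\nabla\Omega(\pi_a)$ by bilinearity, obtaining $\langle\nabla\Omega(\pi_a),\,(\pi_c-\pi_a)-(\pi_b-\pi_a)\rangle_{\!\scriptscriptstyle\cA}=\langle\nabla\Omega(\pi_a),\,\pi_c-\pi_b\rangle_{\!\scriptscriptstyle\cA}$, where the $\pi_a$ offsets cancel. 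Combining this with the $\nabla\Omega(\pi_b)$ term, which shares the same factor $\pi_c-\pi_b$, yields $\langle\nabla\Omega(\pi_a)-\nabla\Omega(\pi_b),\,\pi_c-\pi_b\rangle_{\!\scriptscriptstyle\cA}$, which is exactly the left-hand side.

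There is no genuine obstacle here; the only thing to watch is the sign bookkeeping, in particular that the middle divergence enters with a minus sign and that the two $-\pi_a$ offsets inside the $\nabla\Omega(\pi_a)$ brackets must be seen to cancel against each other. I would close by remarking that, under the convention fixed just before the lemma, the identity is established pointwise for every state $s\in\cS$, so the suppressed state argument may be restored throughout without changing any step.
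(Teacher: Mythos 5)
Your proposal is correct and follows essentially the same route as the paper's own proof: expand each of the three Bregman divergences via the definition, observe that all $\Omega(\cdot)$ potential terms cancel, and combine the remaining inner products by bilinearity into $\bigl\langle\nabla\Omega(\pi_a)-\nabla\Omega(\pi_b),\,\pi_c-\pi_b\bigr\rangle_{\!\scriptscriptstyle\cA}$. No gaps; the sign bookkeeping you flag is indeed the only point of care.
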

\begin{proof}[Proof of Lemma~\ref{lem:three_identity}]
This can be derived using the definition of divergence as follows.
\begin{equation*}
\begin{aligned}
  D_\Omega\hspace*{-.5pt}(\pi_c\Vert\pi_b)-D_\Omega\hspace*{-.5pt}(\pi_c\Vert\pi_a)+D_\Omega\hspace*{-.5pt}(\pi_b\Vert\pi_a\hspace*{-.5pt})
  &=\Omega(\pi_c)-\Omega(\pi_b)-\bigl\langle \nabla\Omega(\pi_b),\,\pi_c\!-\pi_b\bigr\rangle_{\!\!\scriptscriptstyle\cA}\\
  &\quad-\Omega(\pi_c)+\Omega(\pi_a)+\bigl\langle \nabla\Omega(\pi_a),\,\pi_c\!-\pi_a\bigr\rangle_{\!\!\scriptscriptstyle\cA}\\
  &\quad+\Omega(\pi_b)-\Omega(\pi_a)-\bigl\langle \nabla\Omega(\pi_a),\,\pi_b\!-\pi_a\bigr\rangle_{\!\!\scriptscriptstyle\cA}\\
  &=\bigl\langle\nabla\Omega(\pi_a)\!-\!\nabla\Omega(\pi_b),\,\pi_c\!-\pi_b\bigr\rangle_{\!\!\scriptscriptstyle\cA}.
\end{aligned}
\end{equation*}
Therefore, the proof is complete.
\end{proof}

Then, we introduce two identities in Lemmas~\ref{lem:two_identity1}~and~\ref{lem:two_identity2} that are later used to address the progress of mirror descent updates in terms of \Bg{} divergences.
\begin{lemma}\label{lem:two_identity1}
Let $\pi_a$, $\pi_b$, and $\pi_c$ be any policy distributions with a given state. The following identity holds.
\begin{equation}\label{eq:identity1}
  D_\Omega\hspace*{-1pt}\bigl(\hspace*{1pt}\pi_c\hspace*{.5pt}\big\Vert\hspace*{1pt}\pi_b\hspace*{.5pt}\bigr)-D_\Omega\hspace*{-1pt}\bigl(\hspace*{1pt}\pi_c\hspace*{.5pt}\big\Vert\hspace*{.5pt} \pi_a\hspace*{.5pt}\bigr)=D_\Omega\hspace*{-1pt}\bigl(\hspace*{1pt}\pi_a\hspace*{.5pt}\big\Vert\hspace*{1pt}\pi_b\hspace*{.5pt}\bigr)+\bigl\langle \nabla\Omega(\pi_a)\!-\!\nabla\Omega(\pi_b),\,\pi_c\!-\!\pi_a\bigr\rangle_{\!\!\scriptscriptstyle\cA}
\end{equation}
\end{lemma}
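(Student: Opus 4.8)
The plan is to prove the identity by direct expansion of the \Bg{} divergence, in exactly the same spirit as the proof of \lem{lem:three_identity}. First I would write out each of the four divergence terms appearing in \eqref{eq:identity1} via the definition $D_\Omega(\pi\Vert\hat\pi)=\Omega(\pi)-\Omega(\hat\pi)-\langle\nabla\Omega(\hat\pi),\,\pi-\hat\pi\rangle_{\!\scriptscriptstyle\cA}$, namely $D_\Omega(\pi_c\Vert\pi_b)$, $D_\Omega(\pi_c\Vert\pi_a)$, and $D_\Omega(\pi_a\Vert\pi_b)$. On the left-hand side, subtracting the second expansion from the first immediately cancels the common term $\Omega(\pi_c)$, leaving $\Omega(\pi_a)-\Omega(\pi_b)$ together with the two inner products $-\langle\nabla\Omega(\pi_b),\,\pi_c-\pi_b\rangle_{\!\scriptscriptstyle\cA}$ and $+\langle\nabla\Omega(\pi_a),\,\pi_c-\pi_a\rangle_{\!\scriptscriptstyle\cA}$.

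Next I would expand the right-hand side in the same way: $D_\Omega(\pi_a\Vert\pi_b)$ contributes $\Omega(\pi_a)-\Omega(\pi_b)-\langle\nabla\Omega(\pi_b),\,\pi_a-\pi_b\rangle_{\!\scriptscriptstyle\cA}$, and the remaining term splits by bilinearity into $\langle\nabla\Omega(\pi_a),\,\pi_c-\pi_a\rangle_{\!\scriptscriptstyle\cA}-\langle\nabla\Omega(\pi_b),\,\pi_c-\pi_a\rangle_{\!\scriptscriptstyle\cA}$. The $\Omega(\pi_a)-\Omega(\pi_b)$ terms and the $\langle\nabla\Omega(\pi_a),\,\pi_c-\pi_a\rangle_{\!\scriptscriptstyle\cA}$ terms match the left-hand side verbatim, so the whole claim reduces to the single scalar identity $-\langle\nabla\Omega(\pi_b),\,\pi_c-\pi_b\rangle_{\!\scriptscriptstyle\cA}=-\langle\nabla\Omega(\pi_b),\,\pi_a-\pi_b\rangle_{\!\scriptscriptstyle\cA}-\langle\nabla\Omega(\pi_b),\,\pi_c-\pi_a\rangle_{\!\scriptscriptstyle\cA}$, which holds by linearity of $\langle\,\cdot\,,\cdot\,\rangle_{\!\scriptscriptstyle\cA}$ in its second slot since $(\pi_a-\pi_b)+(\pi_c-\pi_a)=\pi_c-\pi_b$. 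This closes the proof.

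As an alternative route, I could instead derive \eqref{eq:identity1} from the already-established \lem{lem:three_identity} by rearranging it into $D_\Omega(\pi_c\Vert\pi_b)-D_\Omega(\pi_c\Vert\pi_a)=\langle\nabla\Omega(\pi_a)-\nabla\Omega(\pi_b),\,\pi_c-\pi_b\rangle_{\!\scriptscriptstyle\cA}-D_\Omega(\pi_b\Vert\pi_a)$ and then shifting the inner product from $\pi_c-\pi_b$ to $\pi_c-\pi_a$; the leftover term $\langle\nabla\Omega(\pi_a)-\nabla\Omega(\pi_b),\,\pi_a-\pi_b\rangle_{\!\scriptscriptstyle\cA}$ combines with $-D_\Omega(\pi_b\Vert\pi_a)$ to give $D_\Omega(\pi_a\Vert\pi_b)$ by the symmetrization relation $D_\Omega(\pi_a\Vert\pi_b)+D_\Omega(\pi_b\Vert\pi_a)=\langle\nabla\Omega(\pi_a)-\nabla\Omega(\pi_b),\,\pi_a-\pi_b\rangle_{\!\scriptscriptstyle\cA}$. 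I would likely prefer the direct expansion, since it is self-contained and parallels the preceding lemma's proof style.

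There is essentially no hard step here: the result is purely algebraic and the only thing to watch is sign bookkeeping and the correct use of bilinearity of $\langle\,\cdot\,,\cdot\,\rangle_{\!\scriptscriptstyle\cA}$. The one conceptual point worth stating up front, as the lemma's preamble already does, is that the state argument is suppressed because the identity holds pointwise for every $s\in\cS$, so no measure-theoretic or summation arguments over states are needed.
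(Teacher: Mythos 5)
Your proposal is correct. Your preferred route (direct expansion of all three divergences from the definition and cancellation via bilinearity of $\langle\,\cdot\,,\cdot\,\rangle_{\!\scriptscriptstyle\cA}$) checks out: the left-hand side reduces to $\Omega(\pi_a)-\Omega(\pi_b)-\langle\nabla\Omega(\pi_b),\,\pi_c-\pi_b\rangle_{\!\scriptscriptstyle\cA}+\langle\nabla\Omega(\pi_a),\,\pi_c-\pi_a\rangle_{\!\scriptscriptstyle\cA}$, which matches the expanded right-hand side once $\pi_c-\pi_b$ is split as $(\pi_a-\pi_b)+(\pi_c-\pi_a)$. The paper instead takes exactly what you describe as your alternative route: it starts from the rearranged \lem{lem:three_identity}, splits the inner product over $\pi_c-\pi_b$ into the $\pi_a-\pi_b$ and $\pi_c-\pi_a$ pieces, and absorbs $-D_\Omega(\pi_b\Vert\pi_a)+\langle\nabla\Omega(\pi_a)-\nabla\Omega(\pi_b),\,\pi_a-\pi_b\rangle_{\!\scriptscriptstyle\cA}$ into $D_\Omega(\pi_a\Vert\pi_b)$ via the symmetrization identity \eq{eq:breg_two}. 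The two arguments are equivalent in content; the paper's version reuses the already-proved three-point identity and so is shorter on the page, while your direct expansion is self-contained and avoids invoking \eq{eq:breg_two} as a separate fact. Either is acceptable, and your closing remark that the identity is pointwise in $s$ (so no state-summation issues arise) matches the paper's convention of suppressing the state argument.
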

\begin{proof}[Proof of Lemma~\ref{lem:two_identity1}]
By Lemma~\ref{lem:three_identity}, we have
\begin{equation*}
  D_\Omega\hspace*{-1pt}\bigl(\hspace*{1pt}\pi_c\hspace*{.5pt}\big\Vert\hspace*{1pt}\pi_b\hspace*{.5pt}\bigr)-D_\Omega\hspace*{-1pt}\bigl(\hspace*{1pt}\pi_c\hspace*{.5pt}\big\Vert\hspace*{1pt}\pi_a\hspace*{.5pt}\bigr)=-D_\Omega\hspace*{-1pt}\bigl(\hspace*{1pt}\pi_b\hspace*{.5pt}\big\Vert\hspace*{1pt}\pi_a\hspace*{.5pt}\bigr)+\bigl\langle \nabla \Omega\bigl(\pi_a)\!-\!\nabla \Omega(\pi_b),\,\pi_c\!-\!\pi_b\bigr\rangle_{\!\!\scriptscriptstyle\cA}.
\end{equation*}
Utilizing an identity of two \Bg{} divergences for arbitrary $(\pi,\tilde{\pi})$:
\begin{equation}\label{eq:breg_two}
  D_\Omega\hspace*{-.5pt}(\pi\hspace*{.5pt}\Vert\tilde{\pi})+D_\Omega\hspace*{-.5pt}(\tilde{\pi}\Vert\pi)=\bigl\langle \nabla\Omega(\pi)\!-\!\nabla\Omega(\tilde{\pi}),\,\pi-\tilde{\pi}\bigr\rangle_{\!\!\scriptscriptstyle\cA},
\end{equation}
we separate $\pi_c\!-\pi_b$ into $\pi_c\!-\pi_a$ and $\pi_a\!-\pi_b$ and write the rest of the derivation as follows.
\begin{equation*}
\begin{aligned}
  &D_\Omega\hspace*{-1pt}\bigl(\hspace*{1pt}\pi_c\hspace*{.5pt}\big\Vert\hspace*{1pt}\pi_b\hspace*{.5pt}\bigr)-D_\Omega\hspace*{-1pt}\bigl(\hspace*{1pt}\pi_c\hspace*{.5pt}\big\Vert\hspace*{1pt}\pi_a\bigr)\\
  &\qquad=\underbrace{-D_\Omega\hspace*{-1pt}\bigl(\hspace*{1pt}\pi_b\hspace*{.5pt}\big\Vert\hspace*{1pt}\pi_a\bigr) + \bigl\langle \nabla \Omega(\pi_a)\!-\!\nabla\Omega(\pi_b),\,\pi_a\!-\pi_b\bigr\rangle_{\!\!\scriptscriptstyle\cA}}_\textrm{\eqbrief{eq:breg_two}}+\hspace*{3pt}\bigl\langle \nabla \Omega(\pi_a)\!-\!\nabla \Omega(\pi_b),\,\pi_c\!-\pi_a\bigr\rangle_{\!\!\scriptscriptstyle\cA}\\
  &\qquad=D_\Omega\hspace*{-1pt}\bigl(\hspace*{1pt}\pi_a\hspace*{.5pt}\big\Vert\hspace*{1pt}\pi_b\hspace*{.5pt}\bigr)+\bigl\langle \nabla\Omega(\pi_a)\!-\!\nabla \Omega(\pi_b),\, \pi_c\!-\!\pi_a\big\rangle_{\!\!\scriptscriptstyle\cA}
\end{aligned}
\end{equation*}
Therefore, we achieve the desired identity.
\end{proof}
\begin{lemma}\label{lem:two_identity2}
Let $\pi_a$, $\pi_b$, and $\pi_c$ be any policy distributions with a given state. The following identity holds.
\begin{equation}\label{eq:identity}
  D_\Omega\hspace*{-1pt}\bigl(\hspace*{1pt}\pi_b\hspace*{.5pt}\big\Vert\hspace*{1pt}\pi_a\bigr) - D_\Omega\hspace*{-1pt}\bigl(\hspace*{1pt}\pi_c\hspace*{.5pt}\big\Vert\hspace*{1pt}\pi_a\hspace*{.5pt}\bigr) =  - \bigl\langle \nabla \Omega(\pi_c)\!-\!\nabla \Omega(\pi_a),\,\pi_c\!-\!\pi_b\bigr\rangle_{\!\!\scriptscriptstyle\cA} + D_\Omega\hspace*{-1pt}\bigl(\hspace*{1pt}\pi_{b}\hspace*{.5pt}\big\Vert \hspace*{1pt}\pi_c\bigr)
\end{equation}
\end{lemma}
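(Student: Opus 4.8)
The identity is a purely algebraic consequence of the definition of the \Bg{} divergence, so the plan is to establish it by direct bookkeeping rather than by anything analytic. The cleanest route is to expand both sides using $D_\Omega(\pi\Vert\hat\pi)=\Omega(\pi)-\Omega(\hat\pi)-\langle\nabla\Omega(\hat\pi),\pi-\hat\pi\rangle_{\scriptscriptstyle\cA}$. On the left-hand side the two $\Omega(\pi_a)$ terms and the two appearances of $\nabla\Omega(\pi_a)$ combine to give $\Omega(\pi_b)-\Omega(\pi_c)-\langle\nabla\Omega(\pi_a),\pi_b-\pi_c\rangle_{\scriptscriptstyle\cA}$, so that all $\pi_a$-dependence collapses into a single linear term. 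I would then expand the right-hand side similarly: write $D_\Omega(\pi_b\Vert\pi_c)=\Omega(\pi_b)-\Omega(\pi_c)-\langle\nabla\Omega(\pi_c),\pi_b-\pi_c\rangle_{\scriptscriptstyle\cA}$ and flip the sign of the inner product using $\pi_c-\pi_b=-(\pi_b-\pi_c)$; the two $\langle\nabla\Omega(\pi_c),\cdot\rangle_{\scriptscriptstyle\cA}$ contributions then cancel, leaving $\Omega(\pi_b)-\Omega(\pi_c)-\langle\nabla\Omega(\pi_a),\pi_b-\pi_c\rangle_{\scriptscriptstyle\cA}$, which matches the left-hand side.

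Alternatively, to stay in the style of the preceding lemmas, I would derive it from \lem{lem:three_identity} together with the two-divergence identity \eqbrief{eq:breg_two}. Applying the three-point identity under the relabeling $(\pi_a,\pi_b,\pi_c)\mapsto(\pi_c,\pi_a,\pi_b)$ yields an expression for $\langle\nabla\Omega(\pi_c)-\nabla\Omega(\pi_a),\pi_b-\pi_a\rangle_{\scriptscriptstyle\cA}$, while splitting $\pi_c-\pi_b=(\pi_c-\pi_a)-(\pi_b-\pi_a)$ lets me evaluate $\langle\nabla\Omega(\pi_c)-\nabla\Omega(\pi_a),\pi_c-\pi_a\rangle_{\scriptscriptstyle\cA}$ through \eqbrief{eq:breg_two}. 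Subtracting the two produces $\langle\nabla\Omega(\pi_c)-\nabla\Omega(\pi_a),\pi_c-\pi_b\rangle_{\scriptscriptstyle\cA}=D_\Omega(\pi_c\Vert\pi_a)+D_\Omega(\pi_b\Vert\pi_c)-D_\Omega(\pi_b\Vert\pi_a)$, which is exactly the claimed identity after transposing terms.

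There is no genuine analytic obstacle: strong convexity of $\Omega$ and the \Lip{} hypotheses are not needed, since this is an exact equation valid for arbitrary $\pi_a,\pi_b,\pi_c$. The only point requiring care is sign bookkeeping: the inner product in the statement carries a leading minus sign and its second slot is $\pi_c-\pi_b$ (not $\pi_b-\pi_c$), and when invoking \lem{lem:three_identity} the label substitution must be tracked precisely so that the gradient difference and the vector slot correspond to the same relabeling. As a consistency check I would confirm that both sides independently reduce to $\Omega(\pi_b)-\Omega(\pi_c)-\langle\nabla\Omega(\pi_a),\pi_b-\pi_c\rangle_{\scriptscriptstyle\cA}$.
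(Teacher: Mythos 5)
Your proof is correct, and both of your routes check out. Your primary route — expanding both sides from the definition of the \Bg{} divergence and verifying that each reduces to $\Omega(\pi_b)-\Omega(\pi_c)-\langle\nabla\Omega(\pi_a),\,\pi_b-\pi_c\rangle_{\scriptscriptstyle\cA}$ — is more elementary than what the paper does: the paper instead starts from \lem{lem:three_identity}, which gives $D_\Omega(\pi_b\Vert\pi_a)-D_\Omega(\pi_c\Vert\pi_a)=-D_\Omega(\pi_c\Vert\pi_b)+\langle\nabla\Omega(\pi_a)-\nabla\Omega(\pi_b),\,\pi_c-\pi_b\rangle_{\scriptscriptstyle\cA}$, and then splits the \emph{gradient} difference $\nabla\Omega(\pi_a)-\nabla\Omega(\pi_b)$ into $\nabla\Omega(\pi_a)-\nabla\Omega(\pi_c)$ plus $\nabla\Omega(\pi_c)-\nabla\Omega(\pi_b)$, absorbing the latter piece via \eqbrief{eq:breg_two} to produce $D_\Omega(\pi_b\Vert\pi_c)$. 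Your alternative route is the mirror image of this: you split the \emph{displacement} $\pi_c-\pi_b$ rather than the gradient difference, but you invoke exactly the same two ingredients (the three-point identity and the symmetrized two-point identity), so it is essentially the paper's argument up to which slot of the inner product gets decomposed. The direct expansion buys self-containedness and makes it transparent that no convexity or \Lip{} hypotheses are used; the lemma-based route buys stylistic consistency with the proof of \lem{lem:two_identity1}. Your observation that the identity holds for arbitrary distributions with no analytic assumptions is accurate and worth stating.
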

\begin{proof}[Proof of Lemma~\ref{lem:two_identity2}]
By Lemma~\ref{lem:three_identity}, we have
\begin{equation*}
  D_\Omega\hspace*{-1pt}\bigl(\hspace*{1pt}\pi_b\hspace*{.5pt}\big\Vert\hspace*{1pt}\pi_a\bigr) - D_\Omega\hspace*{-1pt}\bigl(\hspace*{1pt}\pi_c\hspace*{.5pt}\big\Vert\hspace*{1pt}\pi_a\hspace*{.5pt}\bigr) = - D_\Omega\hspace*{-1pt}\bigl(\hspace*{1pt}\pi_c\hspace*{.5pt}\big\Vert \hspace*{1pt}\pi_b\hspace*{.5pt}\bigr) + \bigl\langle \nabla\Omega(\pi_a)\!-\!\nabla \Omega(\pi_b),\,\pi_c\!-\!\pi_b\bigr\rangle_{\!\!\scriptscriptstyle\cA}.
\end{equation*}
We separate $\nabla\Omega(\pi_a)\!-\!\nabla\Omega(\pi_b)$ into $\nabla\Omega(\pi_a)\!-\!\nabla\Omega(\pi_c)$ and $\nabla\Omega(\pi_c)\!-\!\nabla \Omega(\pi_b)$ and write the rest of the derivation as follows.
\begin{equation*}
\begin{aligned}
  &D_\Omega\hspace*{-1pt}\bigl(\hspace*{1pt}\pi_b\hspace*{.5pt}\big\Vert\hspace*{1pt}\pi_a\hspace*{.5pt}\bigr) - D_\Omega\hspace*{-1pt}\bigl(\hspace*{1pt}\pi_c\hspace*{.5pt}\big\Vert\hspace*{1pt}\pi_a\hspace*{.5pt}\bigr)\\
  &\qquad =\underbrace{-D_\Omega\hspace*{-1pt}\bigl(\hspace*{1pt}\pi_c\hspace*{.5pt}\big\Vert\hspace*{1pt} \pi_b\hspace*{.5pt}\bigr)  + \bigl\langle \nabla \Omega(\pi_c)\!-\! \nabla \Omega(\pi_b),\,\pi_c\!-\!\pi_b\bigr\rangle_{\!\!\scriptscriptstyle\cA}}_\textrm{\eqbrief{eq:breg_two}} + \hspace*{3pt}\bigl\langle \nabla \Omega(\pi_a)\!-\!\nabla \Omega(\pi_c),\,\pi_c\!-\!\pi_b\bigr\rangle_{\!\!\scriptscriptstyle\cA}\\
  &\qquad = D_\Omega\hspace*{-1pt}\bigl(\hspace*{1pt}\pi_b\hspace*{.5pt}\big\Vert\hspace*{1pt}\pi_c\hspace*{.5pt}\bigr) + \bigl\langle \nabla\Omega(\pi_a)\!-\! \nabla \Omega(\pi_c),\,\pi_c\!-\!\pi_b\bigr\rangle_{\!\!\scriptscriptstyle\cA}
\end{aligned}
\end{equation*}
Therefore, we achieve the desired identity.
\end{proof}

Combining above lemmas, we show a key argument to prove \thm{thm:step_size} in the following lemma.
\begin{lemma}\label{lem:assume}
  Assume $\inf_{\!\piinPi}\bE[f(\pi,\tau_t)] > 0$. Assume that $\Omega$ is $\omega$-strongly convex and $\nabla\Omega$ is $L$-\Lip{} continuous for $\omega\ge0$ and $L\ge0$. If $\lim_{t\to\infty} \bE_{\tauonet}\bigl[\sum_{i=0}^\infty  \gamma^i D_\Omega\bigl(\pi_t(\,\cdot\,|s_i)\big\Vert\piE(\,\cdot\,|s_i)\bigr)\bigr]=0\ $ for $\piE\in\Pi$, then $\{\eta_t\}_{t=1}^\infty$ satisfies \eq{eq:step_cond}. Furthermore, if $\Omega$ is strongly smooth, then \thm{thm:step_size}~(a) holds with some constants $n \in \bN$ and $c > 0$.
\end{lemma}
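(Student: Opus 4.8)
The plan is to push everything through the dual recursion of \lem{lem:gradient_equiv} and reduce the whole statement to a single scalar variance sequence. Fix a state $s$ and abbreviate the dual iterate $y_t = \nabla\Omega(\pi_t^s)$, the dual estimate $z_t = \nabla\Omega(\barpiEst)$, and the dual fixed point $y_\ast = \nabla\Omega(\piE^s)$ (which equals $\lim_t\bE[z_t]$ since $\pi_\ast=\piE$). Then \lem{lem:gradient_equiv} reads $y_{t+1}-y_\ast = (1-\eta_t)(y_t-y_\ast)+\eta_t(z_t-y_\ast)$, an affine stochastic-approximation recursion. The first thing I would record is the two-sided duality: since $\nabla\Omega$ is $L$-\Lip{}, $\Omega^\ast$ is $\tfrac1L$-strongly convex, so $D_\Omega(\pi_t^s\Vert\piE^s)=D_{\Omega^\ast}(y_\ast\Vert y_t)\ge\tfrac{1}{2L}\lVert y_t-y_\ast\rVert_\ast^2$; symmetrically, $\omega$-strong convexity of $\Omega$ makes $\Omega^\ast$ $\tfrac1\omega$-strongly smooth, so $D_\Omega(\piE^s\Vert\barpiEst)=D_{\Omega^\ast}(z_t\Vert y_\ast)\le\tfrac{1}{2\omega}\lVert z_t-y_\ast\rVert_\ast^2$. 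Hence the hypothesis $\bE[\sum_i\gamma^iD_\Omega(\pi_t\Vert\piE)]\to 0$ forces $V_t:=\bE\lVert y_t-y_\ast\rVert_\ast^2\to 0$ on the covered states, while $\inf_\pi\bE[f]>0$ (attained at $\pi_\ast=\piE$) forces $\sigma_t^2:=\bE\lVert z_t-y_\ast\rVert_\ast^2$ to stay above a positive floor $\sigma_0^2$ determined by $\omega$ and $\inf_\pi\bE[f]$, because the duality bounds turn the irreducible cost floor into a spread floor.

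Squaring the recursion and taking expectations kills the cross term — here I use that the fresh estimate $z_t$ is conditionally unbiased ($\bE[z_t\mid\text{history}]=y_\ast$, exactly or up to a bias that is $o(1)$ and enters only at lower order) and independent of the history-measurable $y_t$, so $\bE\langle y_t-y_\ast,\,z_t-y_\ast\rangle=0$ — leaving the exact scalar recursion $V_{t+1}=(1-\eta_t)^2V_t+\eta_t^2\sigma_t^2$. This one identity carries both halves of \eq{eq:step_cond}. For the divergent-series condition I argue by contraposition: if $\sum_t\eta_t<\infty$ then $\eta_t\to 0$ and $\prod_t(1-\eta_t)^2$ converges to a strictly positive limit $c_\infty^2$, whence unrolling gives $V_t\ge c_\infty^2\bigl(V_1+\sum_{k<t}\eta_k^2\sigma_k^2\bigr)\ge c_\infty^2\,\eta_1^2\sigma_0^2>0$, contradicting $V_t\to 0$; thus $\sum_t\eta_t=\infty$. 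For the vanishing-step condition, if $\eta_t\not\to 0$ then along a subsequence $\eta_t\ge\delta>0$, and dropping the first nonnegative term gives $V_{t+1}\ge\eta_t^2\sigma_t^2\ge\delta^2\sigma_0^2>0$ there, again contradicting $V_t\to 0$. Together these give \eq{eq:step_cond}.

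For the ``furthermore'' clause I would reuse the same recursion, now read forward under $\eta_t\to 0$. Dropping the homogeneous term yields the lower bound $V_T\ge\sigma_0^2\sum_{k<T}\eta_k^2\prod_{s=k+1}^{T-1}(1-\eta_s)^2$: the persistent noise $\eta_k^2\sigma_0^2$ injected at every step cannot be forgotten faster than the geometric factors allow, and a stationary-level estimate of this weighted sum gives $V_T\ge c'\eta_T$ for large $T$, so $\bE[\sum_i\gamma^iD_\Omega(\pi_\ast\Vert\pi_T)]\ge\tfrac{1}{2L}V_T$ is floored by a quantity of order $\eta_T$. For the canonical decay $\eta_t\asymp 1/t$ this is exactly the $c/(T-n)$ floor of \thm{thm:step_size}(a), and it matches the $\cO(1/T)$ upper bound obtained by running the recursion the other way with $\bar g_t\ge\bar R_t$. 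The per-state remainder $\bE[\sum_i\gamma^iD_\Omega(\pi_t\Vert\pi_{t+1})]$ I suppressed is $\Theta(\eta_t^2)$ by strong smoothness and strong convexity, so it does not change the order.

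The main obstacle is the probabilistic bookkeeping that makes the clean scalar recursion legitimate: justifying that the cross term vanishes (a martingale/independence statement about the estimation process $\{\barpiEt\}$ and the asymptotic unbiasedness built into the fixed point $\pi_\ast$), and converting the abstract hypothesis $\inf_\pi\bE[f]>0$ into the uniform spread floor $\sigma_0^2>0$ via the two-sided duality. A secondary but fiddly point is the state aggregation: the recursion is per state $s$, whereas the cost discounts and expects over trajectories $\tau_t$ visiting different states, so I would invoke the coverage-of-$\cS$ assumption to pass from per-state control to the summed quantities uniformly (e.g. through $A_t=\sup_{s}\bE[D_\Omega(\piE^s\Vert\pi_t^s)]$), and only then read off the precise $1/(T-n)$ form rather than a bare $\cO(\eta_T)$ floor.
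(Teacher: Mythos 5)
Your dual-space reduction is a genuinely different route from the paper's (which runs on Bregman three-point identities, the contraction factor $(1\!-\!a\eta_t)$ with $a=2L/\omega$, and an exponential-product lower bound), but it rests on a recursion you cannot legitimately write down in this setting. The identity $V_{t+1}=(1-\eta_t)^2V_t+\eta_t^2\sigma_t^2$ needs two things the paper never grants: (i) the dual norm $\lVert\cdot\rVertast$ must come from an inner product so that squaring the affine recursion produces exactly a cross term plus two squared norms — but the ambient space is an \Lq{} space with $\rq\ge2$, where this expansion is only an inequality with constants; and (ii) the estimate must be conditionally unbiased, $\bE[z_t\mid\mathcal{F}_t]=y_\ast$, to kill the cross term. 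The process $\{\barpiEt\}$ is an \emph{arbitrary} estimation algorithm; the only quantitative handle on it is the aggregate floor $\inf_{\piinPi}\bE[f(\pi,\tau_t)]>0$, which controls a first moment, not a conditional mean. You name this as ``the main obstacle'' but then build both halves of \eq{eq:step_cond} on the unproved identity. The paper avoids second moments entirely: it gets $\eta_t\to0$ from the scalar relation $\eta_t\,\ell\le\bE_{\tauonett}[\lVert\nabla\Omega(\pi^s_t)-\nabla\Omega(\pi^s_{t+1})\rVert_{\ast}]$ with $\ell=\inf\bE[\lVert\nabla\Omega(\pi_t)-\nabla\Omega(\barpiEt)\rVertast]>0$, where the right side vanishes because $\pi_t\to\piE$ forces the dual images to converge (via \Lip{} continuity of $\nabla\Omega$); and it gets $\sum_t\eta_t=\infty$ from the one-sided inequality $\bE[D_\Omega(\pi^s_{t+1}\Vert\barpiEstt)]\ge\exp(-2a\eta_t)\,\bE[D_\Omega(\pi^s_t\Vert\barpiEst)]$, unrolled into $\exp(-2a\sum\eta_t)$. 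Your $\eta_t\to0$ step could be salvaged with a triangle-inequality patch, but as written the argument does not go through under the stated hypotheses.

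The ``furthermore'' clause has a second, independent gap. \thm{thm:step_size}~(a) asserts the floor $c/(T-n)$ for \emph{every} sequence with $\eta_t\to0$, whereas your stationary-level estimate delivers $V_T\gtrsim\eta_T$, which you then match to $1/(T-n)$ only ``for the canonical decay $\eta_t\asymp1/t$.'' For rapidly vanishing step sizes (say $\eta_t=1/t^2$) the bound $c'\eta_T$ is strictly weaker than $c/(T-n)$ and does not prove the claim. The paper's proof closes this with a \CaSc{} argument on the weighted sum, $\sum_{t=n+1}^{T}\eta^2_t\prod_{k=t+1}^T(1-a\eta_k)\ge\eta^2_{n+1}/(T-n)$, obtained by telescoping $\sum_t[\prod_{k=t+1}^T(1-a\eta_k)-\prod_{k=t}^T(1-a\eta_k)]$; that step is what makes the $1/(T-n)$ rate uniform over all admissible schedules, and it has no counterpart in your sketch.
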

\begin{proof}[Proof of Lemma~\ref{lem:assume}]
First, we show the condition of $\lim_{t\to\infty}\eta_t=0$. Assuming all states are decomposable\footnote{The decomposability condition; Definition~B.1 of \citet{airl}}, the condition $\lim_{t\to\infty}\bE_{\tauonet}\bigl[\sum_{i=0}^\infty \gamma^i D_\Omega\bigl(\pi_t(\,\cdot\,|s_i)\big\Vert\piE(\,\cdot\,|s_i)\bigr)\bigr]=0$ implies $\lim_{t\to\infty} \bE_{\tauonet}\bigl[\lVert \pi_t- \piE\rVert\bigr]=0$, where $\lVert\cdot\rVert$ is the \textit{matrix} norm induced by the $\rp$-norm on $\cA$. Then, our aim is to show that the gradient of the strong convex function for $\pi_t$ converges to $\nabla\Omega(\piE)$, i.e.
\begin{equation}\label{eq:claim}
  \lim_{t\to\infty}\bE_{\tauonet}\!\bigl[\bigl\lVert\nabla\Omega(\pi_t)-\nabla\Omega(\piE)\hspace*{-.5pt}\bigr\rVert_{\!\ast}\hspace*{-.5pt}\bigr]=0,
\end{equation}
where $\lVert\cdot\rVertast$ is the matrix norm induced by the $\rq$-norm and  $\nabla\Omega(\pi)$ is a shorthand notation for $[\nabla\Omega(\pi^s_t)]_{\sincS}$. To prove this argument, we use the continuity of $\nabla\Omega$ at $\piE$; this means for any $\varepsilon > 0$, there exists some $0 <\delta \le 1$ such that $\lVert   \nabla \Omega(\pi)-\nabla\Omega(\piE) \rVertast < \varepsilon$ whenever $\lVert\pi-\piE\rVert < \delta$.

When $\lVert\pi-\piE \rVert \ge \delta$, we apply the $L$-\Lip{} continuity assumption to find
\begin{equation}\label{eq:Llip}
  \bigl\lVert \nabla\Omega(\pi)-\nabla\Omega(\piE) \bigrVertast \le L \lVert \pi-\piE\hspace*{-.5pt} \rVert,
\end{equation}
where $\lVert\cdot\rVertast$ is a matrix norm induced by the $q$-norm.  Combining \eq{eq:claim} and \eq{eq:Llip}, we know that
\begin{equation}\label{eq:epslip}
  \bE_{\tauonet}\!\bigl[\,\bigl\lVert \nabla\Omega(\pi_t)-\nabla\Omega(\piE)  \bigr\rVert_{\!\ast}\bigr] \le \varepsilon+L\cdot\bE_{\tauonet}\bigl[\lVert \pi_t-\piE \rVert\bigr].
\end{equation}
Since $\lim_{t\to\infty}\bE_{\tauonet}\!\bigl[\lVert \piE-\pi_t\rVert\bigr]=0$ ensures the existence of some $n\in\bN$ such that for $t>n$, it holds that $\bE_{\tauonet}\!\bigl[\lVert \piE-\pi_t \rVert\bigr] < \nicefrac{\varepsilon}{L}$. Applying this inequality to \eq{eq:epslip}, we have $\bE_{\tauonet}\!\bigl[\,\bigl\lVert \nabla\Omega(\pi_t)-\nabla\Omega(\piE) \bigr\rVert_{\!\ast}\bigr] < 2\varepsilon$ for some $t > n$.

\medskip
For temporal estimations, let us define the infimum of the expectation throughout the time as
\begin{equation*}
  \ell=\inf_{\pi\in\Delta^\cS_\cA}\bE\bigl[
\bigl\lVert\nabla\Omega(\pi_t)-\nabla\Omega(\barpiEt)\bigrVertast\bigr] > 0.
\end{equation*}
From Lemma~\ref{lem:gradient_equiv}, we have $\eta_t \bigl(\hspace*{-.5pt}\nabla\Omega(\pi^s_t)-\nabla\Omega(\barpiEst)\bigr)=\nabla\Omega(\pi^s_t)-\nabla\Omega(\pi^s_{t+1})$ for every $s$. Taking the expectations, for every state $s$, the following inequality holds:
\begin{equation*}
  \eta_t\ell \le \eta_t\bE_{\tauonett}\!\bigl[\,\bigl\lVert \nabla\Omega(\pi^s_{t+1})-\nabla\Omega(\barpiEst)\big\rVert_{\ast}\bigr]=\bE_{\tauonett}\!\bigl[\,\bigl\lVert \nabla\Omega(\pi^s_t)-\nabla\Omega(\pi^s_{t+1}) \bigr\rVert_{\!\ast}\bigr]\quad \forall s\in\cS.
\end{equation*}
Hence the convergence of the point $[\nabla\Omega(\pi^s_{t})]_{s\in\cS}$ is confirmed by taking the limit: $\lim_{t\to\infty}\eta_t=0$.

Next, we show $\sum_{t=1}^\infty \eta_t=\infty$. By the $\omega$-strong convexity by the $L$-\Lip{} continuity of $\Omega$, we can find inequalities as
\begin{equation}\label{eq:ineqq}
  \bigl\langle \nabla\Omega(\pi^s)\!-\!\nabla\Omega(\tilde{\pi}^s),\, \pi^s\!-\!\tilde{\pi}^s\bigranglecA \le L \lVert \pi^s-\tilde{\pi}^s \rVert^2 \le \frac{2L}{\omega}D_\Omega\hspace*{-.5pt}(\pi^s\Vert\tilde{\pi}^s)\qquad\forall s\in\cS.
\end{equation}
We note that $\lVert \pi^s_{t+1}-\barpiEst\rVert \le \lVert \pi^s_{t}-\barpiEst\rVert $ so that there is a constant $\varepsilon$ that satisfies $\bE[\lVert \pi^s_{t+1}-\barpiEstt\rVert] \ge \bE[\lVert \pi^s_{t+1}-\barpiEst\rVert]+\varepsilon$. Therefore, taking expectations in \eq{eq:identity} (and setting $\pi_a=\barpiEst$, $\pi_b=\pi^s_{t+1}$, and $\pi_c=\pi^s_t$) from \lem{lem:two_identity2}, for the strongly convex $\Omega$, we can find
\begin{align}
  &\bE_{\tauonett}\!\bigl[D_\Omega(\pi^s_{t+1}\Vert\barpiEstt)\bigr]\ge\bE_{\tauonett}\!\bigl[D_\Omega(\pi^s_{t+1}\Vert\barpiEst)\bigr]+ \varepsilon^\prime\nonumber\\
  &\qquad\ge (1\!-\!a\eta_t)\,\bE_{\tauonet}\bigl[D_\Omega(\pi^s_t\Vert\barpiEst )\bigr]+\bE_{\tauonett}\bigl[D_\Omega(\pi^s_{t+1}\Vert \pi^s_t)\bigr]+\varepsilon^\prime &\textit{ \eqbrief{eq:identity}}\nonumber\\
  &\qquad\ge (1\!-\!a\eta_t)\,\bE_{\tauonet}\bigl[D_\Omega(\pi^s_t\Vert\barpiEst )\bigr] +\varepsilon^{\dprime} \qquad \forall s \in\cS, \label{eq:plug}
\end{align}
for some $t$ and $0<\varepsilon^\prime<\varepsilon^{\dprime}$ when for $\lim_{t\to\infty}\eta_t=0$. The positive constant $a\coloneqq\nicefrac{2 L}{\omega}$ is derived by the inequalities in \eq{eq:ineqq}.

Since $\lim_{t\to\infty}\eta_t=0$, we can also find a constant $n\in\bN$ such that $\eta_t \le (3a)^{-1}$ for $t\ge n$. Applying the inequality $1-x > \exp(-2x)$ for $x\in(0,1/3]$, we derive another inequality
\begin{equation}\label{eq:ineq}
  \bE_{\tauonett}\!\bigl[\hspace*{1pt}D_\Omega\hspace*{-1pt}\bigl(\pi^s_{t+1}\big\Vert \barpiEstt\bigr)\bigr] \ge \exp(-2a\eta_t)\,\bE_{\tauonet}\!\bigl[D_\Omega\hspace*{-1pt}\bigl( \pi^s_t\big\Vert\barpiEst\bigr)\bigr],\quad\forall\ t\ge n\quad \forall s \in \cS.
\end{equation}
Applying this for $t=T\!-\!1,\dots,n$ yields
\vspace*{-5pt}
\begin{equation}\label{eq:yield}
\begin{aligned}
  \bE_{\tauoneT}\!\bigl[\hspace*{1pt}D_\Omega\bigl(\hspace*{1pt}\piT^s\hspace*{.5pt}\big\Vert\hspace*{1pt}\barpiET^s\bigr)\bigr] &\ge \Biggl(\,\prod_{t=n+1}^T\!\!\exp\bigl(-2a\eta_t\bigr)\!\!\Biggr)\,\bE_{\tauonen}\!\bigl[\hspace*{1pt}D_\Omega\bigl(\pi^s_n\big\Vert\barpiEsn \bigr)\bigr]\\
  &=\exp\hspace*{-1pt}\Biggl(\hspace*{-1pt}-2a\cdot\hspace*{-6pt}\sum_{t=n+1}^T \!\eta_t\!\Biggr)\, \bE_{\tauonen}\!\bigl[D_\Omega\bigl(\pi^s_n\big\Vert \barpiEsn\bigr)\bigr].
\end{aligned}
\end{equation}
Using \eq{eq:yield}, we conclude $\bE_{\tauonen}\!\bigl[D_\Omega(\pi^s_{n}\Vert\barpiEsn)\bigr] >0$ for some $s$. Otherwise, we have that
\begin{equation*}
  \bE_{\tauonen}\!\bigl[D_\Omega(\pi^s_n\Vert\barpiEsn)\bigr] =\bE_{\tauonenn}\!\bigl[D_\Omega(\pi^s_{n+1}\Vert\barpiEsnn)\bigr]=0\quad\forall s \in\cS,
\end{equation*}
which leads to $\bE_{\tauonen}\!\bigl[\lVert\pi_n\!-\!\barpiEn \rVert^2\bigr]=\bE_{\tauonenn}\!\bigl[\lVert\pi_{n+1}\!-\!\barpiEnn\rVert^2\bigr]=0$, according to \eq{eq:ineq}. This implies $\pi_n=\barpiEn=\pi_{n+1}$ almost surely, leading to $\bE[f(\pi_t,\tau_t)]=0$. Essentially, this is a contradiction to the previous assumption $\inf_{\pi\in\Delta^\cS_\cA}\bE[f(\pi,\tau_t)] > 0$; thus, $\bE_{\tauonenn} \bigl[D_\Omega(\pi_{n+1}\Vert\barpiEnn)\bigr] >0$. Let us assume the ideal case in which the estimation process learns the exact $\piE$ in $t\to\infty$. To satisfy the limit $\lim_{T\to\infty} \bE_{\tauoneT}\bigl[D_\Omega(\piT\Vert \barpiET)\bigr]=0$ we see from \eq{eq:yield} that $\sum_{t=1}^\infty \eta_t=\infty$.

Now, we show that \thm{thm:step_size}~(a) holds. Since $\Omega$ is $\omega$-strongly convex, so basically $\Omega^\ast$ is $(\omega^{-1}\hspace*{-1pt})$-strongly smooth with respect to $\lVert\cdot\rVertast$. On the other hand, the $L$-\Lip{} continuity of $\nabla\Omega$ implies $L$-strong smoothness of $\Omega$; thus, $\Omega^\ast$ is $L$-strongly convex.

Since $\lim_{t\to\infty}\bigl\lVert\nabla\Omega(\pi^s_t)-\nabla\Omega(\pi^s_{t+1})\bigr\rVert_{\!\ast}=0$ for $\forall t\ge n$ and $\forall s \in \cS$, the condition $\eta_t \le (3a)^{-1}$ induces
\begin{equation*}
\begin{aligned}
  &\bE_{\tauonett}\!\bigl[\hspace*{1pt}D_\Omega(\pi^s_{t+1}\Vert\barpiEstt)\bigr] \\
  &\qquad\ge(1\!-\!a\eta_t) \,\bE_{\tauonet}\!\bigl[\hspace*{1pt}D(\pi^s_t\Vert\barpiEst)\bigr]+(2L)^{-1} \bE_{\tauonett}\!\Bigl[\bigl\lVert\nabla\Omega(\pi_t)-\nabla\Omega(\pi_{t+1}) \bigr\rVert_{\!\ast}^{\!2}\Bigr],\\
  &\qquad\ge(1\!-\!a\eta_t)\,\bE_{\tauonet}\!\bigl[D_\Omega(\pi^s_t\Vert\barpiEst)\bigr]+(2L)^{-1} \eta_t^2\,\bE_{\tauonett}\!\Bigl[\bigl\lVert\nabla\Omega(\pi_t)-\nabla\Omega(\barpiEt) \bigr\rVert_{\!\ast}\Bigr]. \quad\textit{Lemma~\ref{lem:gradient_equiv}}
\end{aligned}
\end{equation*}
Using the \CaSc{} inequality, we obtain a lower bound of the last term as
\begin{equation*}
  \bE_{\tauonet}\!\Bigl[\bigl\lVert\nabla\Omega(\pi_t)-\nabla\Omega(\barpiEt)\bigr\rVert_{\!\ast}^{\!2}\Bigr]\ge\Bigl\{\bE_{\tauonet}\!\Bigl[\bigl\lVert\nabla\Omega(\pi_t)-\nabla\Omega(\barpiEt)\bigr\rVert_{\!\ast}\Bigr]\Bigr\}^{\!2}\ge \ell^{\,2}.
\end{equation*}
Thus, we obtain the final inequality for all $s\in\cS$ as
\begin{equation*}
  \bE_{\tauonett}\!\bigl[\hspace*{1pt}D_\Omega(\pi^s_{t+1}\Vert\barpiEstt)\bigr] \ge (1-a\eta_t)\,\bE_{\tauonet}\!\bigl[\hspace*{1pt}D_\Omega(\pi^s_t\Vert\barpiEst)\bigr]+(2L)^{-1}(\eta_t\ell)^2,\quad\forall\ t \ge n.
\end{equation*}

Applying this inequality from $t=T\ge n+1$ to $t=n+1$, we achieve
\vspace*{-5pt}
\begin{equation*}
\begin{aligned}
  \bE_{\tauoneTT}\!\bigl[\hspace*{1pt}D_\Omega(\pi^s_{\!{\scriptscriptstyle T}+1}\Vert \barpiEsTT\!)\bigr]&\ge \bE_{\tauonen}\!\bigl[D_\Omega(\pi^s_n\Vert\barpiEsn)\bigr]\prod_{t=n+1}^T\hspace*{-3pt}(1-a\eta_t)\\&\qquad+ (2L)^{-1}\ell^{\,2}\hspace*{-2pt}\sum_{t=n+1}^{T}\hspace*{-3pt}\eta^2_t\prod_{k=t+1}^T\hspace*{-3pt}(1-a\eta_k)\\
  &\ge(2L)^{-1}\ell^{\,2}\sum_{t=n+1}^{T}\hspace*{-3pt}\eta^2_t\prod_{k=t+1}^T\hspace*{-3pt}(1-a\eta_k).
\end{aligned}
\end{equation*}
By the \CaSc{} inequality and our bound $0<1-a\eta_k\le 1$ for $k\ge n$, we have
\begin{equation*}
  \sum_{t=n+1}^T\hspace*{-3pt} \eta_t \prod_{k=t+1}^T \hspace*{-3pt}(1-a\eta_k) \le \Biggl\{\sum_{t=n+1}^T\hspace*{-3pt} \eta^2_t \prod_{k=t+1}^{T}\hspace*{-3pt} (1-a\eta_k) \Biggl\}^{\!1/2}\hspace*{-6pt}(T-n)^{1/2}.
\end{equation*}

Hence
\begin{equation*}
\begin{aligned}
  \sum_{t=n+1}^{T}\hspace*{-3pt} \eta^2_t \prod_{k=t+1}^{T}\hspace*{-3pt}(1-a\eta_k) &\ge \frac{1}{a^2(T-n)}\Biggl(\sum_{t=n+1}^{T} \hspace*{-3pt}a\eta_t \prod_{k=t+1}^{T} \hspace*{-3pt}(1-a\eta_k)\!\Biggr)^2\\
  &=\frac{1}{a^2(T-n)}\Biggl(\sum_{t=n+1}^{T} (1-(1-a\eta_t)) \prod_{k=t+1}^T\hspace*{-3pt} (1-a\eta_k)\!\Biggr)^2\\
  &=\frac{1}{a^2(T-n)}\Biggl(\sum_{t=n+1}^{T}\hspace*{-3pt}\Bigg[\prod_{k=t+1}^T \hspace*{-3pt}(1-a\eta_k)- \prod_{k=t}^T (1-a\eta_k)\Biggr]\!\Biggr)^2\\
  &\ge\frac{1}{a^2(T-n)}\Biggl(\sum_{t=n+1}^{T}\hspace*{-3pt} 1-\prod_{k=t}^{T}(1-a\eta_k)\!\Biggr)^2\\
  &\ge\frac{1}{a^2(T-n)}\bigl(1-(1\!-\!a\eta_{n+1})\bigr)^{\!2}=\frac{\eta^2_{n+1}}{T-n}\\
\end{aligned}
\end{equation*}
Therefore, we obtain the lower bound of
\begin{equation*}
  \bE_{\tauoneT}\!\bigl[\hspace*{1pt}D_\Omega(\pi_{\!{\scriptscriptstyle T}+1}\Vert\barpiETT)\bigr]\ge\frac{\eta_{n+1}^2(2L)^{-1}\ell^2}{T-n}.
\end{equation*}
Since the \Bg{} divergence is assumed to be bounded for all states, the sequence $\{\gamma^i D_\Omega(\pi^{s_i}_t \Vert \pi^{s_i}_t) \}$ will converge as $i\to\infty$. Applying the monotone convergence theorem, we can interchange expectation and summation, which yields
\begin{equation*}
\begin{aligned}
  \bE_{\tauoneT}\!\Biggl[\hspace*{2pt}\sum_{i=0}^\infty \gamma^i D_\Omega\hspace*{-1pt}\Bigl(\piT\bigl(\,\cdot\,\big|s_i\bigr)\Big\Vert\hspace*{1.5pt}\barpiET\hspace*{-.5pt}\bigl(\,\cdot\,\big|s_i\hspace*{-.5pt}\bigr)\hspace*{-1.2pt}\bigr)\Biggr] &=\sum_{i=0}^\infty\bE_{\tauoneT}\!\Bigl[\hspace*{1pt}\gamma^i D_\Omega\bigl(\piT\bigl(\,\cdot\,\big|s_i)\big\Vert\barpiET(\cdot|s_i)\bigr)\Bigr]\\
  &=\sum_{i=0}^\infty \gamma^i \bE_{\tauoneT} \Bigl[D_\Omega\bigl(\piT(\cdot| s_i)\big\Vert\barpiET(\cdot|s_i)\bigr)\Bigr]\\
  &\ge \frac{\eta_{n+1}^2(2L-2L\gamma)^{-1}\ell^2}{T-n},\quad\forall\ T\ge n.
\end{aligned}
\end{equation*}
This verifies \thm{thm:step_size}~(a) with the constant $c=\eta^2_{n+1} (2L-2L\gamma)^{-1}\ell^2$.
\end{proof}

Lastly, we show convergence to a unique fixed point of $\pi_\ast$ using the particular form of $\eta_t$ in \eq{eq:step_cond}.
\begin{lemma}\label{lem:further}
  If $\{\eta_t\}_{t=1}^\infty$ satisfies \eq{eq:step_cond}, $\lim_{t\to\infty}\bE_{\tauonet}\!\bigl[\sum_{i=0}^\infty \gamma^i D_\Omega(\pi^s_\ast \Vert \pi^s_t)\bigr]=0$. Furthermore, if the step size takes the form $\eta_t=\frac{4}{t+1}$, then $\textstyle{\bE_{\tauoneT}\!\bigl[\sum_{i=0}^\infty \gamma^{i} D_\Omega\bigl(\pi_\ast^{s_i}\big\Vert\piT^{s_i}\bigr)\bigr]=\cO\bigl(1/T)}$.
\end{lemma}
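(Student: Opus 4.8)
The plan is to reduce the update to a convex combination in the dual (gradient) space and then run a one-step Bregman recursion with a vanishing remainder. By \lem{lem:gradient_equiv} the MD update rewrites as $\nabla\Omega(\pi^s_{t+1})=(1-\eta_t)\nabla\Omega(\pi^s_t)+\eta_t\nabla\Omega(\barpiEst)$ for every $s$, so the dual iterate slides along the segment toward $\nabla\Omega(\barpiEst)$. Since $D_\Omega(\pi^s_\ast\Vert\pi^s_t)=D_{\Omega^\ast}(\nabla\Omega(\pi^s_t)\Vert\nabla\Omega(\pi^s_\ast))$ and the fixed point obeys $\nabla\Omega(\pi^s_\ast)=\lim_{t\to\infty}\bE[\nabla\Omega(\barpiEst)]$, the divergence $D_\Omega(\pi^s_\ast\Vert\pi^s_t)$ is the natural Lyapunov quantity; strong convexity of $\Omega$ together with $L$-\Lip{} continuity of $\nabla\Omega$ makes $\Omega^\ast$ both strongly smooth and strongly convex, which I use to trade divergences for norms when estimating remainder terms.

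First I would derive the one-step recursion. Applying \lem{lem:two_identity1} with $(\pi_c,\pi_b,\pi_a)=(\pi^s_\ast,\pi^s_t,\pi^s_{t+1})$, substituting $\nabla\Omega(\pi^s_{t+1})-\nabla\Omega(\pi^s_t)=\eta_t(\nabla\Omega(\barpiEst)-\nabla\Omega(\pi^s_t))$ from \lem{lem:gradient_equiv}, splitting $\pi^s_\ast-\pi^s_{t+1}=(\pi^s_\ast-\pi^s_t)+(\pi^s_t-\pi^s_{t+1})$, and using \eq{eq:breg_two} on $\langle\nabla\Omega(\pi^s_\ast)-\nabla\Omega(\pi^s_t),\pi^s_\ast-\pi^s_t\rangle_\cA$, the nonnegative divergences $D_\Omega(\pi^s_{t+1}\Vert\pi^s_t)$ and $D_\Omega(\pi^s_t\Vert\pi^s_\ast)$ can be discarded to leave $D_\Omega(\pi^s_\ast\Vert\pi^s_{t+1})\le(1-\eta_t)D_\Omega(\pi^s_\ast\Vert\pi^s_t)+\eta_t R^s_t$, where $R^s_t=-\langle\nabla\Omega(\barpiEst)-\nabla\Omega(\pi^s_\ast),\pi^s_\ast-\pi^s_t\rangle_\cA-\langle\nabla\Omega(\barpiEst)-\nabla\Omega(\pi^s_t),\pi^s_t-\pi^s_{t+1}\rangle_\cA$. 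The second piece is second order: by \lem{lem:gradient_equiv} and $(\nicefrac{1}{\omega})$-\Lip{} continuity of $\nabla\Omega^\ast$ one has $\lVert\pi^s_t-\pi^s_{t+1}\rVert\le\nicefrac{\eta_t}{\omega}\lVert\nabla\Omega(\pi^s_t)-\nabla\Omega(\barpiEst)\rVert_\ast$, so boundedness of $D_\Omega$ (hence of these gradient gaps) forces this contribution to $R^s_t$ to be $\cO(\eta_t)$.

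Setting $a^s_t=\bE_{\tauonet}[D_\Omega(\pi^s_\ast\Vert\pi^s_t)]$ and taking expectations gives $a^s_{t+1}\le(1-\eta_t)a^s_t+\eta_t b_t+C\eta_t^2$ uniformly in $s$, where $b_t$ bounds the expected bias term. Here the martingale structure is essential: conditioning on the past (with respect to which $\pi^s_t$ and $\pi^s_\ast$ are measurable) and using that the estimates are conditionally asymptotically unbiased for $\nabla\Omega(\pi^s_\ast)$, the bias collapses to $\bE[\langle\nabla\Omega(\barpiEst)-\nabla\Omega(\pi^s_\ast),\pi^s_\ast-\pi^s_t\rangle_\cA]\le\lVert\beta_t\rVert_\ast\cdot\mathrm{diam}(\Pi)\to0$, with $\beta_t$ the vanishing conditional bias. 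For the first claim, \eq{eq:step_cond} supplies $\eta_t\to0$ and $\sum_t\eta_t=\infty$, so the residual $\eta_t b_t+C\eta_t^2=o(\eta_t)$; a standard Chung-type deterministic lemma then yields $\sup_s a^s_t\to0$. Interchanging $\bE$ with $\sum_i\gamma^i$ by monotone convergence (as in \lem{lem:assume}, using boundedness of $D_\Omega$) transfers this to $\lim_{t\to\infty}\bE_{\tauonet}[\sum_i\gamma^i D_\Omega(\pi^s_\ast\Vert\pi^s_t)]=0$.

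For the rate I would take $\eta_t=\nicefrac{4}{t+1}$, so that once the bias is absorbed the recursion reads $a^s_{t+1}\le(1-\eta_t)a^s_t+C'\eta_t^2$. Guessing $a^s_t\le\nicefrac{K}{t}$ and using $1-\eta_t=\nicefrac{(t-3)}{(t+1)}$, the induction step requires $\nicefrac{16C'}{(t+1)}\le\nicefrac{3K}{t}$, which holds once $K\ge\max(a^s_1,\nicefrac{16C'}{3})$; hence $a^s_t=\cO(\nicefrac{1}{t})$ uniformly in $s$, and the geometric sum only adds a factor $\nicefrac{1}{(1-\gamma)}$, giving $\bE_{\tauoneT}[\sum_i\gamma^i D_\Omega(\pi^{s_i}_\ast\Vert\pi^{s_i}_T)]=\cO(\nicefrac{1}{T})$. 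The main obstacle is the stochastic remainder $R^s_t$: cleanly separating the conditionally vanishing bias from the bounded noise so that the residual is genuinely $o(\eta_t)$ for the first part and $\cO(\eta_t^2)$ for the rate. This is exactly where the martingale definitions, the $L$-\Lip{} continuity of $\nabla\Omega$, and the boundedness of $D_\Omega$ are all needed, and the factor $4$ in $\eta_t$ is chosen precisely so that the contraction slack $\nicefrac{3K}{t}$ dominates the $\cO(\eta_t^2)$ error.
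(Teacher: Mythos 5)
Your proposal is correct and arrives at the same master recursion $A_{t+1}\le(1-c\,\eta_t)A_t+\cO(\eta_t^2)$ as the paper, but it gets there by a genuinely different decomposition. The paper applies \lem{lem:two_identity1} so that the one-step change equals $\eta_t\bigl\langle\nabla\Omega(\pi^s_t)-\nabla\Omega(\barpiEst),\,\pi^s_\ast-\pi^s_t\bigr\rangle$ plus a dual divergence $D_{\Omega^\ast}$, then invokes strong smoothness of $\Omega^\ast$, the bound $\lVert a+b\rVert^2\le2\lVert a\rVert^2+2\lVert b\rVert^2$, and cocoercivity of $\nabla\Omega$ to convert the squared gradient gap back into $\bigl\langle\nabla\Omega(\pi^s_\ast)-\nabla\Omega(\pi^s_t),\,\pi^s_\ast-\pi^s_t\bigr\rangle$; this yields the contraction $1-\eta_t/2$ (once $\eta_t\le\omega/(2L)$) with the noise constant $z$. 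You instead split the inner product directly and discard the nonnegative terms $D_\Omega(\pi^s_{t+1}\Vert\pi^s_t)$ and $\eta_t D_\Omega(\pi^s_t\Vert\pi^s_\ast)$, which gives the cleaner contraction $1-\eta_t$ without any appeal to cocoercivity, at the price of a residual $R^s_t$ whose second piece you correctly control via \lem{lem:gradient_equiv} and the $(1/\omega)$-\Lip{} continuity of $\nabla\Omega^\ast$. For the rate you run an induction on the ansatz $K/t$ (your arithmetic with $1-\eta_t=(t-3)/(t+1)$ checks out), where the paper telescopes $t(t+1)A_{t+1}\le(t-1)tA_t+16z$; both give $\cO(1/T)$, and the final interchange of expectation and discounted sum is the same monotone-convergence step. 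The one place you are not quite done is the bias piece $b_t=\bE\bigl[\langle\nabla\Omega(\barpiEst)-\nabla\Omega(\pi^s_\ast),\,\pi^s_\ast-\pi^s_t\rangle\bigr]$: for the first claim $b_t\to0$ suffices for your Chung-type lemma, but the $\cO(1/T)$ rate needs $b_t=\cO(\eta_t)$, which you describe as ``absorbed'' rather than prove. The paper makes the same move --- it silently drops this cross term when passing to \eq{eq:imply}, in effect assuming $\bE[\nabla\Omega(\barpiEst)]=\nabla\Omega(\pi^s_\ast)$ conditionally on the past --- so you are no worse off, but you should state that unbiasedness assumption explicitly instead of leaving it as an open obstacle.
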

\begin{proof}[Proof of Lemma~\ref{lem:further}]
According to Lemma~\ref{lem:two_identity1} and the fundamental identity of \Bg{} divergence for the convex conjugate $\Omega^\ast$, the one-step progress regarding $\barpiEst$  can be written as
\begin{equation}\label{eq:written}
\begin{aligned}
 & D_\Omega\hspace*{-.5pt}(\pi^s_\ast \Vert \pi^s_{t+1}\hspace*{-.5pt})- D_\Omega(\pi^s_\ast\Vert \pi^s_t) =\bigl\langle \nabla \Omega(\pi^s_t)\!-\!\nabla\Omega(\pi^s_{t+1}),\,  \pi^s_\ast\!-\!\pi^s_t\bigr\rangle_{\!\!\scriptscriptstyle\cA}+D_\Omega\hspace*{-.5pt}(\pi^s_{t}\Vert \pi^s_{t+1})\\
  &\hspace*{60pt} =\eta_t \bigl\langle \nabla \Omega(\pi^s_t)\!-\!\nabla\Omega(\barpiEst),\,  \pi^s_\ast\!-\!\pi^s_t\bigr\rangle_{\!\!\scriptscriptstyle\cA}+D_{\Omega^\ast}\hspace*{-1.5pt}\bigl(\nabla\Omega(\pi^s_{t+1})\big\Vert\nabla\Omega(\pi^s_{t})\bigr),
\end{aligned}
\end{equation}
for all $s\in\cS$. As $\omega$-strong convexity of $\Omega$ implies the $(\omega^{-1})$-strong smoothness of $\Omega^\ast$, we have
\begin{equation}
  D_{\Omega^\ast}\hspace*{-1.5pt}\bigl(\nabla\Omega(\pi^s_{t+1})\big\Vert\nabla\Omega(\pi^s_{t})\bigr) \,\le\, \frac{1}{2\omega}\bigl\lVert  \nabla\Omega(\pi^s_{t+1})\!-\!\nabla\Omega(\pi^s_{t})\bigr\rVert_\ast^2\,=\,\frac{\eta^2_t}{2\omega}\bigl\lVert \nabla\Omega(\barpiEst) \!-\!\nabla\Omega(\pi^s_t)\bigr\rVert_{\ast}^2
\end{equation}

Then, we bound $\lVert\nabla\Omega(\barpiEst)\!-\!\nabla\Omega(\pi^s_t)\rVert_{\ast}^{2}$ by $2\lVert\nabla\Omega(\pi^s_t)\!-\!\nabla\Omega(\pi^s_\ast)\rVert_{\ast}^{2}+2\lVert \nabla\Omega(\pi^s_\ast)\!-\!\nabla\Omega(\barpiEst) \rVert_{\ast}^{2}$, following the work of \citet{omd_converge}. Since $\nabla\Omega$ is cocoercive with $\frac{1}{L}$ by the \Lip{} continuity of $\nabla\Omega$, we obtain
\begin{equation*}
  \bigl\lVert\nabla\Omega(\pi^s_t)\!-\!\nabla\Omega(\pi^s_\ast\hspace*{-1pt})\bigr\rVert_{\ast}^{2} \le L \bigl\langle \nabla\Omega(\pi^s_\ast)\!-\!\nabla\Omega(\pi^s_t), \pi^s_\ast-\pi^s_t \bigr\rangle
\end{equation*}
thus, using \eq{eq:written}, we get
\begin{equation}\label{eq:thus}
\begin{aligned}
  &D_\Omega(\pi^s_\ast\Vert\pi^s_{t+1})-D_\Omega(\pi^s_\ast\Vert \pi^s_t)\le \eta_t \langle \nabla\Omega(\pi^s_\ast)- \nabla\Omega(\barpiEst), \pi^s_\ast-\pi^s_t \rangle\\
  &\qquad -\biggl(  1- \frac{\eta_tL}{\omega}\biggr) \eta_t \langle \nabla\Omega(\pi^s_\ast)-\nabla\Omega(\pi^s_t), \pi^s_\ast-\pi^s_t \rangle+ \frac{\eta^2_t}{\omega}  \bigl(\lVert \nabla\Omega(\pi^s_\ast)-\nabla\Omega(\barpiEst) \rVertast^2\bigr).
\end{aligned}
\end{equation}
By taking expectation, it follows that there exists $n \in\bN$ such that $\eta_t \le \frac{\omega}{2L}$ for $t \ge n$ holds
\begin{align}
  \bE_{\tauonett}\!\bigl[\hspace*{1pt}D_\Omega\hspace*{-.5pt}(\hspace*{1pt}\pi^s_\ast \Vert \pi^s_{t+1}\hspace*{-1pt})\bigr]&\le \bE_{\tauonet}\!\biggl[ D_\Omega\hspace*{-1pt}\bigl(\hspace*{1pt}\pi^s_\ast \big\Vert \hspace*{1pt}\pi^s_t\bigr)-\frac{\eta_t}{2}D_\Omega\hspace*{-1pt}\bigl(\hspace*{1pt}\pi^s_\ast \big\Vert \hspace*{1pt} \pi^s_t\bigr)+\frac{\eta^2_t}{\omega}\bigl\lVert\nabla\Omega(\pi^s_\ast) - \nabla\Omega(\barpiEst) \bigrVertast^2\biggr],\nonumber\\
&\le \bE_{\tauonet}\!\biggl[ D_\Omega\hspace*{-1pt}\bigl(\hspace*{1pt}\pi^s_\ast \big\Vert \hspace*{1pt}\pi^s_t\bigr)-\frac{\eta_t}{2}D_\Omega\hspace*{-1pt}\bigl(\hspace*{1pt}\pi^s_\ast \big\Vert \hspace*{1pt} \pi^s_t\bigr)\biggr]+z\eta^2_t,\label{eq:imply}
\end{align}
where $z$ is the constant $z=\frac{1}{\omega}\bE[\lVert \nabla \Omega(\pi_\ast)-\Omega(\barpiEt)\rVertast^2]$. Let $\{A_t\}_{t=1}^\infty$ denote a sequence of $A_t=\sup_{s\in\cS}\bE_{\tauonet}\bigl[D_\Omega(\pi^s_\ast\Vert\pi^s_t)\bigr]$. Then we have
\begin{equation}\label{eq:combine1}
  A_{t+1} \le \biggl(1\!-\!\frac{\eta_t}{2}\biggr)A_t+z\eta^2_t,\quad\forall t \ge n.
\end{equation}

For a constant $h > 0$, we claim that $A_{t_1}<h$ for some $t_1>n^\prime$. Assume that this is not true, and we find some $t_2\ge t_1$ such that $A_t > h$, $\forall t \ge t_2$. Since $\lim_{t\to\infty}\eta_t=0$, there are some $t>t_3>t_2$ that $\eta_t \le \frac{h}{4b}$. However, \eq{eq:combine1} tells us that for $t\ge t_3$,
\begin{equation*}
  A_{t+1} \le \biggl(1\!-\!\frac{\eta_t}{2}\biggr)A_t+z\eta^2_t \le A_{t_3}-\frac{h}{4}\sum_{k=t^\prime_\gamma}^t \hspace*{-1pt}\eta_k \to -\infty\quad\textrm{(as $t\to\infty$)}.
\end{equation*}
This is a contradiction, which verifies $A_t<h$ for $t > n^\prime$. Since $\lim_{t\to\infty}\eta_t =0$, we can find some $\eta_t$ that makes $A_t$ monotonically decreasing. Then, we can conclude that the nonnegative sequence $\{A_t\}_{t=1}^\infty$ converges by iteratively applying the upper bounds.

We now prove \thm{thm:step_size}~(b) under the consideration of the condition $\eta_t=\frac{4}{t+1}$. The estimate becomes
\begin{equation*}
  A_{t+1} \le \biggl(1-\frac{2}{t+1}\biggr) A_t+\frac{16z}{(t+1)^2},\quad\forall t \ge n.
\end{equation*}
It follows the recurrence relation is
\begin{equation*}
  t(t+1) A_{t+1} \le (t-1) t A_t+16z,\quad\forall t \ge n.
\end{equation*}
Iteratively applying this relation, we obtain the general form of inequality.
\begin{equation*}
  (T-1)T A_T \le (n-1)n A_{n}+16z(T-n),\quad\forall T\ge n,
\end{equation*}
therefore we obtain the inequality as follows:
\begin{equation*}
  \bE_{\tauoneT}\!\bigl[\hspace*{1pt}D_\Omega(\pi^s_\ast \Vert \piT^s)\bigr]\le \frac{(n-1)n\bE_{\tauonen}\!\bigl[\hspace*{1pt}D_\Omega(\pi^s_\ast\Vert \pi^s_{n})\bigr]}{(T-1)T}+ \frac{16z}{T},\quad \forall T \ge n,\quad \forall s\in\cS.
\end{equation*}
By applying the monotone convergence theorem,  we can interchange expectation and summation, which yields similar result to formultion from Proof of Lemma~\ref{lem:assume}
\begin{equation*}
\bE_{\tauoneT}\!\Biggl[\hspace*{1pt}\sum_{i=1}^\infty \gamma^i D_\Omega\hspace*{-1pt}\Bigl(\pi_{\hspace*{-1pt}\ast}\hspace*{-1.2pt}\bigl(\,\cdot\,\big|\hspace*{1pt}s_i\bigr)\Big\Vert\hspace*{1pt}\piT(\,\cdot\,|\hspace*{1pt} s_i)\Bigr)\Biggr]=\cO\biggl(\frac{1}{T}\biggr).
\end{equation*}
Therefore, the proof is complete.
\end{proof}

\subsection{Proof of Theorem 2}
\textit{Necessity.\topicquad}First, we rewrite the inequality in \eq{eq:plug} as
\begin{equation}\label{eq:hence}
  \bE_{\tauonett}\hspace*{-2pt}\bigl[D_\Omega\bigl(\pi^s_{t+1}\big\Vert\barpiEstt \bigr)\hspace*{-.5pt}\bigr] \ge (1\!-\!2L\omega^{\!-1} \eta_t)\,\bE_{\tauonet}\hspace*{-2pt}\bigl[D_\Omega\bigl(\pi^s_t\big\Vert \barpiEst\bigr)\bigr],\quad\forall s\in\cS.
\end{equation}
Since we assume that $\eta_t$ converges to 0 from previous arguments, consider the step size sequence $0<\eta_t\le\frac{\omega}{(2+\kappa)L}$ for $\kappa>0$ and $t\ge n$ where $\forall n\in\bN$. Denote a constant $\tilde{a}=\frac{2+\kappa}{2}\log\frac{2+\kappa}{\kappa}$ and apply the elementary inequality
\begin{equation*}
  1-x\ge\exp\bigl(-\tilde{a} x\bigr),\quad\textrm{such that}\ 0<x\le\frac{2}{2+\kappa}
\end{equation*}
From \eq{eq:hence}, it can be obtained that
\begin{equation*}
  \bE_{\tauonett}\!\bigl[\hspace*{1pt}D_\Omega\bigl(\pi^s_{t+1}\big\Vert\barpiEstt\bigr)\hspace*{-.5pt}\bigr]\ge \exp\bigl(-2\tilde{a}L\omega^{-1}\eta_t)\bE_{\tauonet}\bigl[D_\Omega(\pi^s_t\Vert\barpiEst)\bigr].
\end{equation*}
Applying this inequality iteratively for $t=n,\dots,T-1$ gives
\begin{equation}\label{eq:gives}
\begin{aligned}
  \bE_{\tauoneT}\!\bigl[\hspace*{1pt}D_\Omega(\piT^s\Vert \barpiET^s)\hspace*{-.5pt}\bigr] &\ge \bE_{\tauonen}\!\bigl[\hspace*{1pt} D_\Omega\hspace*{-1pt}\bigl(\hspace*{1pt}\pi^s_n\big\Vert\hspace*{1pt}\barpiEsn\hspace*{-.5pt}\bigr)\bigr]\prod_{t=n}^{T-1}\exp\bigl(-2\tilde{a}L\omega^{-1}\eta_t\bigr)\\
  &=\exp\Biggl\{\!-2\tilde{a}L \omega^{-1}\hspace*{-2pt} \sum_{t=n}^{T-1}\hspace*{-1pt} \eta_t\!\Biggr\}  \bE_{\tauonen}\!\bigl[\hspace*{1pt} D_\Omega\hspace*{-1pt}\bigl(\hspace*{1pt}\pi^s_n\big\Vert\hspace*{1pt}\barpiEsn\hspace*{-.5pt}\bigr)\bigr]\quad\forall s\in\cS.
\end{aligned}
\end{equation}
From the assumption $\piE\ne \pi_n$, we have $D_\Omega\hspace*{-1pt}\bigl(\hspace*{1pt}\pi^s_n\big\Vert\hspace*{1pt}\barpiEsn\hspace*{-.5pt}\bigr)>0$ for some states. Therefore, by \eq{eq:gives}, the convergence $\lim_{t\to\infty}\bE_{\tauonet}\!\bigl[\hspace*{.5pt}D_\Omega(\pi^s_t\Vert \barpiEst )\hspace*{-.5pt}\bigr]=0$ for all states implies $\sum_{t=1}^\infty \eta_t=\infty$.

\medskip
\textit{Sufficiency.\topicquad}We use \eq{eq:combine1} in the proof of \lem{lem:further}. In the optimal case, $\lVert \nabla \Omega(\pi_\ast)-\Omega(\barpiEt)\rVertast=0$, so (\ref{eq:combine1}) takes the form (we can choose $n=1$ by \eq{eq:imply})
\begin{equation}\label{eq:derive}
  A_{t+1} \le \frac{\eta_t}{2}A_t,\quad\forall t\in\bN,
\end{equation}
where $A_t=\sup_{s\in\cS}\bE_{\tauonet}\bigl[D_\Omega(\pi^s_\ast\Vert\pi^s_t)\bigr]$ (and also $A_t = \sup_{s\in\cS}\bE_{\tauonet}\bigl[D_\Omega(\piE^s\Vert\pi^s_t)\bigr]$ for the specific parameterization of $\piE\in\Pi$).  Therefore, for any $0<h<1$, there must exist some $t_1 \in \bN$ such that $A_{t} \le h$ for $t\ge t_1$. Otherwise, $A_t > h$ for every $t \ge t_2$ with $t_2\ge t_1$, which leads to a contradiction:
\begin{equation*}
  A_{t+1} \le  A_{t_2}-\frac{h}{2} \hspace*{-.5pt}\sum_{k=t_1}^t\hspace*{-2pt} \eta_k \ \to\ -\infty\ \text{ (as $t\to\infty$)}.
\end{equation*}
\eq{eq:derive} also tells us that the sequence $\{A_t\}_{t=1}^\infty$ is monotonically decreasing. Hence $A_{t}\le h$ for every $t\ge t_1$, which proves the convergence with respect to the least upper bound of \Bg{} divergences by combining with \eq{eq:derive}
\begin{equation*}
  \lim_{t\to\infty} \sup_{s\in\cS} \bE_{\tauonet}\!\bigl[\hspace*{.5pt}D_\Omega(\pi^s_\ast\Vert\pi^s_t)\bigr]=\lim_{t\to\infty} A_t=0.
\end{equation*}
We now prove the second point in \thm{thm:conv_optimal} which is under the special condition of $\eta_t \equiv \eta_1$. It follows from \eq{eq:hence} that $A_T \ge (1-2L\omega^{\!-1} \eta_1)^{T\hspace*{-.5pt}-\hspace*{-.5pt}1} A_1$. Hence, Eq~(\ref{eq:derive}) translates to
\begin{equation*}
  A_{t+1} \le (1-\eta_1/2) A_t,
\end{equation*}
from which we find $A_{T} \le (1-\eta_1/2)^{T-1} A_1$ by iteration starting from $t=1$. Therefore, the second point is verified the theorem with $c_1=1\!-\!\frac{2L\eta_1}{\omega}$ and $c_2=1\!-\!\frac{\eta_1}{2}$.\hfill\qedsymbol

\subsection{Proof of Proposition 1}
The proof of Proposition~1 is based on the Doob's forward convergence theorem.
\begin{theorem}[Doob's forward convergence theorem]\label{thm:doob}
  Let $\{X_t\}_{t\in\bN}$ be a sequence of nonnegative random variables and let $\{\cF_t\}_{t\in\bN}$ be a filtration with $\cF_t\subset \cF_{t+1}$ for every $t\in\bN$. Assume that $\bE\bigl[X_{t+1}\vert\cF_t\bigr]\le X_t$ almost surely for every $t\in\bN$. Then the sequence $\{X_t\}$ converges to a nonnegative random variable $X_\infty$ almost surely.
\end{theorem}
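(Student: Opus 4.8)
The statement is the supermartingale convergence theorem: the hypothesis $\bE[X_{t+1}\mid\cF_t]\le X_t$ makes $\{X_t\}$ a nonnegative supermartingale, and integrability is automatic since nonnegativity plus the supermartingale property gives $\bE[X_t]\le\bE[X_{t-1}]\le\cdots\le\bE[X_1]<\infty$. The plan is to run the classical Doob argument via the \emph{upcrossing inequality}. For fixed reals $0\le a<b$ (intervals with $a<0$ are never relevant because $X_t\ge0$) and horizon $T$, I would introduce the number of upcrossings $U_T([a,b])$ of the interval $[a,b]$ by the finite path $X_1,\dots,X_T$, and establish
\[
  (b-a)\,\bE\bigl[U_T([a,b])\bigr]\ \le\ \bE\bigl[(X_T-a)^-\bigr]\ \le\ a,
\]
where $(y)^-=\max(-y,0)$ and the last bound uses $(X_T-a)^-=\max(a-X_T,0)\le a$, a bound uniform in $T$.

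The crux is the upcrossing inequality itself, which I would prove by a predictable \textbf{betting strategy}. Define a $\{0,1\}$-valued predictable process $H_t$ (each $H_t$ being $\cF_{t-1}$-measurable) that equals $1$ precisely while the path is in the middle of an attempted upcrossing: switched on the first time $X$ drops to or below $a$, and switched off the first subsequent time $X$ reaches or exceeds $b$. The discrete transform $(H\cdot X)_T=\sum_{t=2}^{T}H_t\,(X_t-X_{t-1})$ is again a supermartingale, being the stochastic integral of a bounded nonnegative predictable process against a supermartingale, so $\bE[(H\cdot X)_T]\le0$. Pathwise, each completed upcrossing contributes at least $b-a$ to $(H\cdot X)_T$ while the single possibly-incomplete final upcrossing subtracts at most $(X_T-a)^-$, giving $(H\cdot X)_T\ge(b-a)\,U_T([a,b])-(X_T-a)^-$; taking expectations yields the displayed inequality.

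Next I would let $T\to\infty$. Since $U_T([a,b])\uparrow U_\infty([a,b])$ monotonically, the monotone convergence theorem gives $(b-a)\,\bE[U_\infty([a,b])]\le a<\infty$, so $U_\infty([a,b])<\infty$ almost surely. The event that $\{X_t\}$ fails to converge in $[0,\infty]$ is $\{\liminf_t X_t<\limsup_t X_t\}$, which is contained in the countable union $\bigcup_{0\le a<b,\ a,b\in\mathbb{Q}}\{U_\infty([a,b])=\infty\}$ of null events and is therefore itself null. Hence $X_t\to X_\infty$ almost surely for some $X_\infty\in[0,\infty]$. Finally I would upgrade this to $X_\infty<\infty$ almost surely by Fatou's lemma: $\bE[X_\infty]=\bE[\liminf_t X_t]\le\liminf_t\bE[X_t]\le\bE[X_1]<\infty$, so $X_\infty$ is integrable, hence finite a.s., and it is nonnegative as a limit of nonnegative variables.

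The main obstacle is the upcrossing inequality, and within it two points require care: the pathwise bookkeeping showing that $(H\cdot X)_T\ge(b-a)U_T([a,b])-(X_T-a)^-$, and the verification that the predictable transform $(H\cdot X)$ inherits the supermartingale property from $X$. Everything downstream of the upcrossing bound — the monotone-convergence step, the countable union over rational pairs $(a,b)$, and the Fatou finiteness argument — is routine.
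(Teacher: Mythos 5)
The paper does not prove this statement at all: it is quoted as a classical result (Doob's forward/supermartingale convergence theorem) and invoked as a black box in the proof of Proposition~1, so there is no in-paper argument to compare against. Your proposal supplies the standard textbook proof, and its structure is correct: the predictable $\{0,1\}$-valued betting process $H$ is bounded, nonnegative and predictable, so the transform $(H\cdot X)$ is again a supermartingale with $\bE[(H\cdot X)_T]\le 0$; the pathwise bookkeeping $(H\cdot X)_T\ge (b-a)\,U_T([a,b])-(X_T-a)^-$ is the usual accounting of completed versus in-progress upcrossings; the bound $(X_T-a)^-\le a$ valid for nonnegative $X_T$ gives the $T$-uniform estimate; and monotone convergence plus the countable union over rational pairs $0\le a<b$ yields almost sure convergence in $[0,\infty]$.

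One claim needs repair. You write that ``integrability is automatic since nonnegativity plus the supermartingale property gives $\bE[X_t]\le\cdots\le\bE[X_1]<\infty$,'' but nonnegativity does \emph{not} imply $\bE[X_1]<\infty$; the final inequality is an unstated assumption, and the paper's hypotheses (mere nonnegativity, with $\bE[X_{t+1}\vert\cF_t]\le X_t$ interpreted via the conditional expectation of nonnegative variables) do not grant it. Fortunately your argument barely uses it: the upcrossing bound $(b-a)\,\bE[U_T([a,b])]\le a$ is entirely free of integrability, so convergence in $[0,\infty]$ stands as written. Only your last step --- Fatou applied unconditionally to conclude $\bE[X_\infty]\le\bE[X_1]<\infty$ --- breaks down when $X_1$ is not integrable. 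The fix is to condition: since $\bE[X_t\vert\cF_1]\le X_1<\infty$ almost surely for every $t$, conditional Fatou gives $\bE[X_\infty\vert\cF_1]\le\liminf_t\bE[X_t\vert\cF_1]\le X_1<\infty$ a.s., hence $X_\infty<\infty$ almost surely without any moment assumption on $X_1$. (In the paper's actual application the process is built from a bounded Bregman divergence plus a convergent series, so $X_t$ is bounded and the issue is moot there, but your proof should not assert the implication as automatic.)
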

We follow the proof of \lem{lem:further} and apply \eqbrief{eq:thus}. Since $\langle \pi^s_\ast-\pi^s_t, \nabla \Omega(\pi^s_\ast)-\nabla \Omega(\pi^s_t)\rangle \ge 0$ for all $s \in\cS$, \eq{eq:thus} implies: there exists $n\in\bN$ that
\begin{equation}\label{eq:implies2}
  \bE_{\tau_t}\!\bigl[\hspace*{1pt}D_\Omega(\pi^s_\ast \Vert\pi^s_{t+1})\hspace*{.5pt}\bigr] \le D_\Omega\hspace*{-1pt}\bigl(\hspace*{1pt}\pi^s_{\hspace*{-.5pt}\ast} \big\Vert \pi^s_t\bigr)+\frac{\eta_t^2}{\omega}\bE\Bigl[\bigl\lVert\hspace*{-.5pt}\nabla\Omega(\pi_{\hspace*{-.5pt}\ast})\!-\! \nabla\Omega(\barpiEt)\bigr\rVert_{\ast}^2\Bigr],\quad\forall\ t\ge n,\ \forall s \in\cS,
\end{equation}
and since the step size is scheduled as  $\lim_{t\to\infty}\eta_t = 0$, the following equation also holds:
\begin{equation}\label{eq:holds}
  \bE_{\tau_t}\!\biggl[\hspace*{1pt} \sup_{s\in\cS}D_\Omega(\pi^s_\ast \Vert\pi^s_{t+1})\hspace*{.5pt}\biggr] \le \sup_{s\in\cS}D_\Omega\hspace*{-1pt}\bigl(\hspace*{1pt}\pi^s_{\hspace*{-.5pt}\ast} \big\Vert \pi^s_t\bigr)+\frac{\eta_t^2}{\omega}\bE\Bigl[\bigl\lVert\hspace*{-.5pt}\nabla\Omega(\pi_{\hspace*{-.5pt}\ast})\!-\! \nabla\Omega(\barpiEt)\bigr\rVert_{\ast}^2\Bigr],\quad\forall\ t\ge n^\prime,
\end{equation}
for some $n^\prime\in\bN$. Then, the condition $\sum_{t=1}^\infty \eta_t^2 < \infty$ enables us to define a stochastic process $\{X_t\}$:
\begin{equation*}
  X_t\coloneqq \sup_{s\in\cS}\ D_\Omega(\pi^s_\ast \Vert \pi^s_{t+1})+\frac{1}{\omega}\bE\Bigl[\bigl\lVert \nabla\Omega(\pi^s_{\hspace*{-1pt}\ast})\!-\!\nabla\Omega(\barpiEst)\bigr\rVert_{\ast}^2\Bigr]\hspace*{-1pt}\sum_{i=t+1}^\infty\hspace*{-2pt} \eta^2_i.
\end{equation*}
Thus, by \eq{eq:holds}, it is straightforwardly derived that there exits $n\in\bN$ that $\bE_{\tau_t}[X_{t+1}]\le X_t$ for $t\ge n$. Since $X_t\ge0$, the stochastic process $\{X_t\}_{t-n+1\ge 1}$ is a submartingale (equivalently, $\{-X_t\}_{t-n+1\ge 1}$ is a supermartingale). By \thm{thm:doob}, the sequence $\{X_t\}_{t\ge 1}$ converges to a nonnegative random variable $X_\infty$ almost surely.  Therefore, $D_\Omega(\pi_\ast^s\Vert\pi_t^s )$ converges for every state.

According to Fatou's lemma, and using the convergence of $\lim_{t}\bE_{\tauonet}\bigl[\sum_{i=0}^\infty \gamma^i D_\Omega(\pi^s_\ast\Vert\pi^s_t)\bigr]=0$ proved by \lem{lem:further}, we obtain
\begin{equation*}
  \bE\Biggl[\lim_{t\to\infty}\sum_{i=0}^\infty\gamma^i D_\Omega\hspace*{-1pt}\Bigl(\pi_{\hspace*{-.5pt}\ast}\bigl(\,\cdot\,\big|s_i\hspace*{-.5pt}\bigr)\bigr)\Big\Vert\pi_t\bigl(\,\cdot\,\big|s_i\hspace*{-.5pt}\bigr)\Bigr)\Biggr] \le (1\!-\!\gamma)^{\hspace*{-.5pt}-1}\liminf_{t\to\infty} \bE_{\tauonet}\Biggl[ \sum_{i=0}^\infty \gamma^i D_\Omega(\pi^s_\ast\Vert\pi^s_t)\Biggr]=0.
\end{equation*}
Therefore, it can be concluded that the sequence of costs $\Bigl\{\sum\limits_{i=0}^\infty \gamma^i D_\Omega\hspace*{-.5pt}\bigl( \pi_\ast(\cdot|s_i)\big\Vert\pi_t(\cdot|s_i)\bigr)\!\Bigr\}_{\!t\in\bN}$ converges to $0$ almost surely.\hfill\qedsymbol

\section{\Ts{} Entropy and Associated \Bg{} Divergence Among Full Covariance Multivariate \Gs{} Distributions}\label{appsect:full}
This appendix \textbf{reintroduces} derivations of \Bg{} divergences and regularized reward functions for tractable computation when $\Omega$ is the \Ts{} entropy regularizer, which were previously proposed by \citet{renyi_tsallis} and \citet{rairl}. And then, we delineate a distinct parameterization used in this paper for modeling \Gs{} distribution policies equipped with full covariance matrices.

The standard form of the exponential family is represented as
\begin{equation}
  \exp\bigl\{\bigl\langle \theta, t(x)\bigr\rangle-F(\theta)+k(x) \bigr\}.
\end{equation}
The generalized parameterization of the multi-variate \Gs{} is defined as follows:
\begin{align*}
  \theta &=\begin{bmatrix}
 \Sigma^{-1}\mu \\
 -\frac{1}{2}\Sigma^{-1}
 \end{bmatrix}=\begin{bmatrix}
   \theta_1\\
   \theta_2
 \end{bmatrix},\\
  t(x) &=\begin{bmatrix}
    x\\
    xx^\sT
  \end{bmatrix},\\
  F(\theta) &=-\frac{1}{4}\theta^\sT_1\theta^{-1}_2\theta_1+\frac{1}{2}\ln\lvert -\pi \theta_2^{-1} \rvert=\frac{1}{2} \mu^\sT \Sigma^{-1}\mu+\frac{1}{2}\ln(2\pi)^d\lvert\Sigma\rvert,\\
  k(x) &=0,
\end{align*}
where we can analytically recover the \Gs{} distribution \citep{renyi_tsallis}
\begin{equation}
\begin{aligned}
  &\exp\bigl\{\bigl\langle \theta, t(x)\bigr\rangle-F(\theta)+k(x) \bigr\}\\
  &\qquad=\exp\biggl\{\mu^\sT\Sigma^{-1}x-\frac{1}{2}\tr\bigl(\Sigma^{-1}xx^\sT\bigr) -\frac{1}{2}\mu^\sT\Sigma^{-1}\mu+\frac{1}{2}\ln(2\pi)^d\lvert\Sigma\rvert\biggr\}\\
  &\qquad=\frac{1}{(2\pi)^{d/2}\lvert\Sigma\rvert^{1/2}}\exp\biggl\{\mu^\sT \Sigma^{-1} x-\frac{1}{2} x^\sT\Sigma^{-1}x -\frac{1}{2}\mu^\sT \Sigma^{-1} \mu\biggr\}\\
  &\qquad=\frac{1}{(2\pi)^{d/2}\lvert\Sigma\rvert^{1/2}}\exp\biggl\{\frac{1}{2}(x-\mu)^\sT \Sigma^{-1}(x-\mu)\biggr\}.
\end{aligned}
\end{equation}
For two distributions $\pi$ and $\hat{\pi}$ with $k(x)=0$, \citet{renyi_tsallis} proposed the function $I(\cdot)$:
\begin{equation*}
\begin{aligned}
  I(\pi,\hat{\pi};\alpha,\beta) =\int \pi(x)^\alpha \hat{\pi}(x)^\beta\ \rd x=\exp\Bigl\{F\bigl(\alpha\theta+\beta\hat{\theta}\bigr)-\alpha F(\theta)-\beta F(\hat{\theta})\Bigr\}
\end{aligned}
\end{equation*}
where the detailed derivation is as follows:
\begin{align*}
  &\int \pi(x)^\alpha \hat{\pi}(x)^\beta\ \rd x\\
  &\quad=\int\exp\Bigl\{\alpha\bigl\langle\theta,t(x)\bigr\rangle-\alpha F(\theta)+\beta\bigl\langle\hat{\theta},t(x)\bigr\rangle-\beta F(\hat{\theta})\Bigr\}\ \rd x\\
  &\quad=\int\exp\Bigl\{\bigl\langle\alpha\theta+\beta\hat{\theta},t(x)\bigr\rangle-F\bigl(\alpha\theta+\beta\hat{\theta}\bigr)\Bigr\}\exp\Bigl\{F\bigl(\alpha\theta+\beta\hat{\theta}\bigr)-\alpha F(\theta) -\beta F(\hat{\theta})\Bigr\}\ \rd x\\
  &\quad=\exp\Bigl\{F\bigl(\alpha\theta+\beta\hat{\theta}\bigr)-\alpha F(\theta)-\beta F(\hat{\theta})\Bigr\}\int\exp\Bigl\{\bigl\langle\alpha\theta+\beta\hat{\theta},t(x)\bigr\rangle- F\bigl(\alpha\theta+\beta\hat{\theta}\bigr)\Bigr\}\ \rd x\\
  &\quad=\exp\Bigl\{F\bigl(\alpha\theta+\beta\hat{\theta}\bigr)-\alpha F(\theta)-\beta F(\hat{\theta})\Bigr\}.
\end{align*}
\subsection{\Ts{} entropy of full covariance \Gs{} distributions}
For $\varphi(x;q)=\frac{1}{q-1} (x^{q-1}-1)$, the \Ts{} entropy can be written as
\begin{equation*}
\begin{aligned}
  \cT_q(\pi) &\coloneqq -\bE_{x\sim\pi}\varphi(x;q)=\int \pi(x) \frac{1-\pi(x)^{q-1}}{q-1}\ \rd x\\
  &=\frac{1-\int \pi(x)^q \,\rd x}{q-1}=\frac{1}{q-1}\bigl(1-I(\pi,\pi;q,0)\bigr)\\
  &=\frac{1-\exp\bigl(F(q\theta)-q F(\theta)\bigr)}{q-1}.
\end{aligned}
\end{equation*}
If $\pi$ is a multivariate \Gs{} distribution, we have
\begin{align*}
  F(q\theta) &=\frac{q}{2}\mu^\sT\Sigma^{-1}\mu+\frac{1}{2}\ln(2\pi)^d\lvert\Sigma\rvert-\frac{1}{2}\lnq^d.\\
\end{align*}

Since a covariance matrix is a symmetric positive semi-definite matrix, the LDL decomposition (a variant of \Ch{} decomposition) can be applied, which separates the covariance matrix into $\Sigma=L\diag\{\sigma^2_1,\dots,\sigma^2_d\}L^\sT$ where $L$ denotes a unit lower triangular matrix and $\diag\{\sigma^2_1,\dots,\sigma^2_d\}$ denotes a diagonal matrix with positive entries. Then we have
\begin{equation*}
\begin{aligned}
  F(q\theta) -q F(\theta) &=(1-q)\biggr\{\frac{d}{2}\ln2\pi+\frac{1}{2}\ln\lvert\Sigma\rvert-\frac{d\lnq}{2(1-q)}\biggr\}\\
  &=(1-q)\biggr\{\frac{d}{2}\ln2\pi+\frac{1}{2}\ln\prod_{i=1}^d\sigma^2_i -\frac{d\lnq}{2(1-q)}\biggr\}\\
  &=(1-q)\sum_{i=1}^d\biggl\{\frac{\ln2\pi}{2}+\ln\sigma_i-\frac{\lnq}{2(1-q)} \biggr\}.
\end{aligned}
\end{equation*}

\subsection{Tractable Form of \texorpdfstring{$\psi_\pi$}{psi\_pi}}
For separable $\Omega$, $\psi_\pi$ is written as \citep{rairl}
\begin{equation*}
  \psi_\pi(s,a)=-f^\prime(s,a)+\bE_{a\sim\pi}[f^\prime(\pi(a|s))-\varphi(a|s)]
\end{equation*}
where $\varphi(x)=\frac{k}{q-1}(1-x^{q-1})$ and accordingly $f(x)=x\varphi(x)$. For the gradient of $f(\cdot)$, we have
\begin{equation*}
\begin{aligned}
  f^\prime(x) &=\frac{k}{q-1} (1-q x^{q-1})\\
  &=\frac{k}{q-1}(q-q x^{q-1}-(q-1))\\
  &=\frac{qk}{q-1}(1-x^{q-1})-k\\
  &=q\varphi(x)-k.
\end{aligned}
\end{equation*}
Taking the expectation yields \Ts{} entropy as follows.
\begin{equation*}
  \bE_{x\sim\pi}\bigl[-f^\prime(x;\pi)+\varphi(x)\bigr]=\bE_{x\sim\pi}\bigl[k-q\varphi(x)+\varphi(x)\bigr]=(1-q)\cT^k_q(\pi)+k .
\end{equation*}
For a multivariate \Gs{} distribution $\pi$, the tractable form of $\bE_{x\sim\pi}\bigl[-f^\prime(x)+\varphi(x)\bigr]$ can be derived by using that of \Ts{} entropy $\cT^k_q(\pi)$ of $\pi$.
Thus $\psi_\pi$ can be rewritten as
\begin{equation*}
  \psi_\pi(s,a)=q\varphi(s)+(q-1)\cT^k_q(\pi)
\end{equation*}
In the special case of $q=1$ and $k=1$, we have $\psi_\pi(s,a)=\log\pi(a|s)$.

\subsection{\Bg{} Divergence with \Ts{} Entropy Regularization}
We consider the following form of the \Bg{} divergence:
\begin{equation*}
  \int\pi(x) \Bigl\{f^\prime\bigl(\hat{\pi}(x)\bigr)-\omega\bigl(\pi(x)\bigr)\Bigr\}\ \rd x-\int\hat{\pi}(x) \Bigl\{f^\prime\bigl(\hat{\pi}(x)\bigr)-\omega\bigl(\hat{\pi}(x)\bigr)\Bigr\}\ \rd x
\end{equation*}
For $\omega(x)=\frac{k}{q-1} (1-x^{q-1})$, $f^\prime(x)=\frac{k}{q-1} (1-q x^{q-1})=q\omega(x)-k$, and $k=1$, the above form is equal to
\begin{equation*}
\begin{aligned}
  &\int\pi(x) \biggl[\frac{1-q\hat{\pi}(x)^{q-1}}{q -1}\biggr] \ \rd x-\cT_q(\pi)-(q-1)\cT_q (\hat{\pi})+1\\
  &\quad=\frac{1}{q-1}-\frac{q}{q-1}\int\pi(x) \hat{\pi}(x)^{q-1}\ \rd x-\cT_q(\pi)-(q-1)\cT_q(\hat{\pi})+1\\
  &\quad=\frac{q}{q-1}-\frac{q}{q-1}\int\pi(x)\hat{\pi}(x)^{q-1}\ \rd x-\cT_q(\pi)-(q-1)\cT_q(\hat{\pi}).
\end{aligned}
\end{equation*}

Let us define two multivariate \Gs{} distributions as follows:
\begin{gather*}
  \pi(x)=\cN(x;\mu,\Sigma),\mu=[\mu_1,\cdots,\mu_d]^\sT,\Sigma=L\diag(\sigma^2_1,\cdots,\sigma^2_d)L^\sT,\\
  \hat{\pi}(x)=\cN(x;\hat{\mu},\hat{\Sigma}),\hat{\mu}=[\hat{\mu}_1,\cdots,\hat{\mu}_d]^\sT,\hat{\Sigma}=\hat{L}\diag(\hat{\sigma}^2_1,\cdots,\sigma^2_d)\hat{L}^\sT,
\end{gather*}
where $L$ and $\hat{L}$ denote unit lower triangular matrices. We have
\begin{equation*}
  \int \pi(x)\hat{\pi}(x)^{q-1}\ \rd x=I(\pi,\hat{\pi};1,q-1)=\exp\big\{F(\theta^\prime)-F(\theta)-(q-1)F(\hat{\theta})\big\},
\end{equation*}
where
\begin{align*}
  \theta &=\begin{bmatrix}\Sigma^{-1}\mu\\-\frac{1}{2}\Sigma^{-1}\end{bmatrix}\\
  \hat{\theta} &=\begin{bmatrix}\hat{\Sigma}^{-1}\mu\\-\frac{1}{2}\hat{\Sigma}^{-1}\end{bmatrix}\\
  \theta^\prime &=\theta+(q-1)\hat{\theta}=\begin{bmatrix}\Sigma^{-1}\mu+(q-1)\hat{\Sigma}^{-1}\mu\\-\frac{1}{2}(\Sigma^{-1}+(q-1)\hat{\Sigma}^{-1})\end{bmatrix}=\begin{bmatrix}\theta^\prime_1\\\theta^\prime_2\end{bmatrix}
\end{align*}
and
\begin{align*}
  F(\theta) &=\frac{1}{2}\mu^\sT\Sigma^{-1}\mu+\frac{1}{2}\ln(2\pi)^d\lvert\Sigma\rvert=\frac{1}{2}(\mu)^\sT \Sigma^{-1}\mu +\sum_{i=1}^d  \frac{\ln 2\pi}{2}+\ln\sigma_i,\\
  F(\hat{\theta}) &=\frac{1}{2}\hat{\mu}^\sT\hat{\Sigma}^{-1}\hat{\mu}+\frac{1}{2}\ln(2\pi)^d\lvert\hat{\Sigma}\rvert=\frac{1}{2}(\hat{\mu})^\sT \hat{\Sigma}^{-1}\hat{\mu} +\sum_{i=1}^d  \frac{\ln 2\pi}{2}+\ln\hat{\sigma}_i,\\
  F\bigl(\theta+(q-1)\hat{\theta}\bigr) &=-\frac{1}{4}(\theta^\prime_1)^\sT(\theta^\prime_2)^{-1}(\theta^\prime_1)+\frac{1}{2}\ln\lvert-\pi(\theta^\prime_2)^{-1}\rvert\\
\end{align*}

\subsection{Parameterization of the full covariance matrix using the LDL decomposition}\label{appsubsect:paraldl}
Computing the \Bg{} divergence for multi-variate \Gs{} distributions is challenging since the derivations involve inverses, determinants, and multiplications regarding $\Sigma$. Previous approaches did not address this issue and typically enforced $\Sigma$ to be a diagonal matrix with positive entries. Motivated by \citet{pourahmadi1999joint}, we propose to mitigate the computations regarding a covariance matrix using the LDL decomposition $\Sigma=L\diag\{\sigma^2_1,\dots,\sigma^2_d\}L^\sT$ at the parameterization level. It enables us to implement relatively simple and numerically safe computations such as
\begin{align}
 \Sigma^{-1}&=L^{\!-1}\diag(1/\sigma)(L^{\!-1})^\sT,\\
 \ln\lvert\Sigma\rvert&=2\sum_{i=1}^d\ln\sigma_i.
\end{align}
where $L$ is a unit lower triangular matrix. Finding an inverse matrix of a unit triangular matrix can be computed by $\cO(d^2)$ where the output is always a unit triangular matrix. Using the positive definiteness of $\Sigma$, the parameterization based on LDL decomposition allows a number of efficient computations for dealing with covariance matrices in practice while preserving the symmetry and the positive semi-definite matrix of $\Sigma$ on the parameterization level. We utilized these findings on implementing the full covariance \Gs{} policies and regularized reward functions.

\section{Implementation Details}\label{appsect:implement}
\subsection{Normalizing IRL rewards}
Unnormalized rewards of the IRL algorithm often mislead the agent to take  unnecessary awareness of \textit{termination} in finite-horizon MDPs \citep{dac}. For this point, IRL algorithms need to remove the difference between regarding steps depending on the MDP's time. Doob's optimal stopping theorem formally states that the expected value of a martingale at a stopping time is equal to its initial expectation. Assume a martingale makes the entire procedure a fair game on average, which means nothing can be gained by stopping the play.
\begin{theorem}[Doob's optimal stopping theorem]\label{thm:opt_stop}
Let a process $\{X_t\}^\infty_{t=1}$ be a martingale and $\tau$ be a stopping time with respect to filtration $\{\cF_t\}_{t\ge1}$. Assume that one of the conditions holds:
\vspace*{-5pt}
\begin{enumerate}[leftmargin=*,label=(\alph*),noitemsep,partopsep=0pt,topsep=0pt,parsep=0pt]
  \item $\tau$ is almost surely bounded, i.e., there exists a constant $c\in \mathbb{N}$ such that $\tau\le c$.
  \item $\tau$ has finite expectation and the conditional expectations of the absolute value of the martingale increments almost surely bounded, more precisely, $\bE[\tau] < \infty$ and there exists a constant $c$ such that $\bE\bigl[\lvert X_{t+1}-X_t\rvert\,\big\vert \cF_t\bigr] \le c$ almost surely on the event $\{\tau>t\}$ for all $t\ge0$.
  \item There exists a constant $c$ such that $\lvert X_{\min\{t,\tau\}}\rvert\le c$ almost surely for all $t\ge 0$.
Then $X_\tau$ is an almost surely well-defined random variable and  $\bE[X_\tau]=\bE[X_0]$.
\end{enumerate}
\vspace*{-5pt}
Then $X_\infty$ is integrable and $\bE[X_\infty]=\bE[X_0]$
\end{theorem}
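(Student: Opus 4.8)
The plan is to reduce the statement to a limiting argument about the \emph{stopped process} $X^\tau_t \coloneqq X_{\min\{t,\tau\}}$, which I claim is again a martingale for $\{\cF_t\}_{t\ge1}$. First I would record the telescoping increment identity
\[
  X^\tau_{t+1}-X^\tau_t = \bI\{\tau>t\}\,(X_{t+1}-X_t),
\]
and observe that $\{\tau>t\}=\{\tau\le t\}^{c}$ is $\cF_t$-measurable precisely because $\tau$ is a stopping time. Taking the conditional expectation given $\cF_t$ and invoking the martingale property $\bE[X_{t+1}-X_t\mid\cF_t]=0$ then gives $\bE[X^\tau_{t+1}-X^\tau_t\mid\cF_t]=0$, so $\{X^\tau_t\}$ is a martingale, and in particular its mean is constant: $\bE[X^\tau_t]=\bE[X_0]$ for every $t$.

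Next I would establish the almost-sure pointwise limit and reduce the theorem to a single interchange. On the event $\{\tau<\infty\}$ one has $X^\tau_t=X_\tau$ for all $t\ge\tau$, so $X^\tau_t\to X_\tau$ almost surely as soon as $\tau<\infty$ almost surely; this also makes $X_\tau$ a well-defined random variable. Each of the three hypotheses supplies $\tau<\infty$ a.s.\ (trivially under (a), from $\bE[\tau]<\infty$ under (b), and from the uniform bound together with martingale convergence under (c)). With this in hand the whole claim collapses to justifying
\[
  \bE[X_0]=\lim_{t\to\infty}\bE[X^\tau_t]=\bE\Bigl[\lim_{t\to\infty}X^\tau_t\Bigr]=\bE[X_\tau].
\]

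The main obstacle is exactly this interchange of limit and expectation, and I would dispatch the three hypotheses separately. Under (a), for $t\ge c$ the stopped variable equals $X_\tau$ identically, so no limiting argument is needed. Under (c), the uniform bound $\lvert X^\tau_t\rvert\le c$ lets the bounded convergence theorem apply directly. The delicate case is (b): here I would construct an integrable dominating envelope
\[
  G \coloneqq \sum\nolimits_{k\ge0}\bI\{\tau>k\}\,\lvert X_{k+1}-X_k\rvert,
\]
which dominates $\lvert X^\tau_t-X_0\rvert$ for every $t$, and control its mean via the tower property, $\bE\bigl[\bI\{\tau>k\}\,\bE[\lvert X_{k+1}-X_k\rvert\mid\cF_k]\bigr]\le c\,\Pr(\tau>k)$, combined with the tail-sum identity $\sum_{k\ge0}\Pr(\tau>k)=\bE[\tau]<\infty$, so that $\bE[G]\le c\,\bE[\tau]<\infty$. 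Dominated convergence against $G$ then closes the interchange and yields $\bE[X_\tau]=\bE[X_0]$. The one step requiring genuine care is verifying that $G$ is \emph{integrable} — not merely that each summand is finite — which is where the conditional-increment bound and $\bE[\tau]<\infty$ must be used together through the tail-sum identity above.
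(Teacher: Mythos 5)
The paper does not actually prove this statement: Doob's optional stopping theorem is quoted in Appendix C as a classical result (with a pointer to the probability literature) purely to motivate mean-normalizing the IRL rewards, so there is no in-paper argument to compare yours against. Taken on its own terms, your proof is the standard textbook argument and its structure is sound: the increment identity $X^\tau_{t+1}-X^\tau_t=\bI\{\tau>t\}(X_{t+1}-X_t)$ together with $\cF_t$-measurability of $\{\tau>t\}$ correctly shows the stopped process is a martingale with constant mean; case (a) is immediate; and in case (b) your dominating envelope $G$ with $\bE[G]\le c\,\bE[\tau]<\infty$ via the tower property and the tail-sum identity is exactly the right way to license dominated convergence -- you correctly identify the integrability of $G$ as the only delicate point there.

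The one genuine soft spot is your handling of case (c). You assert that $\tau<\infty$ almost surely follows ``from the uniform bound together with martingale convergence,'' but it does not: take $X_t\equiv 0$ and $\tau\equiv\infty$, which satisfies (c) while $\tau$ is never finite. What the uniform bound actually buys is that the stopped process $\{X^\tau_t\}$ is a bounded martingale, hence converges almost surely by the martingale convergence theorem; on $\{\tau=\infty\}$ one must then \emph{define} $X_\tau$ as that limit (this is precisely why the theorem's conclusion includes the clause that $X_\tau$ is ``an almost surely well-defined random variable''), after which bounded convergence gives $\bE[X_\tau]=\lim_t\bE[X^\tau_t]=\bE[X_0]$ exactly as you intend. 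So the conclusion survives, but the route to it in case (c) should go through almost-sure convergence of the stopped process rather than through a claim that $\tau$ is finite.
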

Doob's optimal stopping theorem states one of the necessary conditions of IRL reward of normalizing the reward measures and making them a martingale even for finite-horizon benchmarks. In addition to the analyses of \citep{airl,rairl} regarding reward shaping and normalization, mean-zero rewards for training agents have the additional property of preventing the termination awareness, as stated by the optimal stopping theorem. Therefore, we suggest normalizing with the moving mean of intermediate values of regularized rewards and updating the RL algorithm with mean-subtracted rewards.

\subsection{Transformation between \texorpdfstring{$\pi_\phi$}{pi\_phi} and \texorpdfstring{$\psi_\phi$}{psi\_phi}}
In general, the regularized reward operation $\PsiOmega(\Pi)$ (as well as the \Bg{} divergence) is intractable to be computed. However, some tractable computation methods have been discovered for specific $\Omega$ if the policy is a specific parametric model (e.g., exponential families), thanks to the aforementioned studies. In \sect{sect:mdirl}, the underlying concept in \eq{eq:mdairlobj} is that the bidirectional transformation between $\pi_\phi$ and $\psi_\phi$ implicitly occurs via its shared network parameters $\phi$ without extra computation costs. For example, in our implementation, both $\pi_\phi$ and $\psi_\phi$ are analytically drawn in closed-form expressions using the shared parameter $\phi$ by the following methods, respectively.

{\textbf{Discrete policies.}\quad}Let the agent policy for a state $s\in\cS$ be defined as $\pi_\phi(a\vert s) = p(a)$ for a discrete probability distribution on the action space, typically parameterized by a softmax distribution. Since the cardinality of action space is finite in this case, we can directly compute each output according to \defn{defn:rop}, i.e., $\psi_\phi(s,a) = \Omega(p) + \nabla_p \Omega(p) - \sum_{a\in\cA}$.

{\textbf{Continuous policies with the \Sh{} regularizer.}\quad}For both discrete and continuous policies, the following equation holds: $\psi_\phi(s,a) = \log \pi_\phi(a\vert s)$ (pp. 4, \citet{rairl}). For multivariate \Gs{}s, we can analytically compute the log-likelihood. As stated in Appendix~\ref{appsubsect:paraldl}, we applied the LDL decomposition on the covariance matrix $\Sigma$, which is a variant of \Ch{} decomposition that ensures invertibility and positive-definiteness of the covariance matrix. In our experiments, this particular parameterization usually had significantly low numerical errors, thanks to the TensorFlow linear algebra libraries specialized for variants of the LU decomposition.  See the work done by \citet{pourahmadi1999joint} for more details for the parameterization.

{\textbf{Continuous policy with the \Ts{} regularizer.}\quad} Computing the operator $\PsiOmega$ for an arbitrary continuous policy is usually intractable when $\Omega$ is a \Ts{} entropic regularizer except when the policy is constrained to be specific parametric models. In this work, we assumed the \Gs{} policy, and the analytic form of $\PsiOmega(\Pi)$ was initially discovered by \citet{renyi_tsallis}. The entire portion of \appsect{appsect:full} is dedicated to derivations of $\psi_\phi$ when $\Omega$ is a Tsallis entropic regularizer.

\subsection{Network architectures}
For all networks, we used 2-layer MLP with 100 hidden units. We considered the reward model with two separate neural networks $(\psi_\phi, d_\xi)$ for the proposed reward function for $\lambda \in \bR^+$:
\begin{equation*}
  r_\phi(s,a)=\psi^\lambda_\phi(s,a)=\lambda \psi_\phi(s,a)+d_\xi(s),
\end{equation*}
Motivated by RAIRL-DBM, we considered the reward models in Fig.~\ref{fig:arch}. The model outputs reward for proximal updates trained by mirror descent and state-only discriminator network. Discriminating state visitation by $d_\xi(\cdot)$ is required because the reward function needs to consider every state (especially the state that cannot be visited by $\piE$) until $D_\KL(\rho_\pi\Vert \rho_{\piE}) \approx 0$. Fig.~\ref{fig:arch}~(a) shows logits of the softmax distribution involved when calculating rewards when the action space is discrete. For continuous control (Fig.~\ref{fig:arch}~(b)), the architecture is similar, where the mean and covariance are used to compute a reward for a particular action.
\begin{figure}
  \centering
  \begin{adjustbox}{width=0.75\textwidth, center}
  \begin{tikzpicture}[tight background, auto, thick, node distance=2cm, >=triangle 45]
  \clip (0-1,-4.976) rectangle+(12.2,5.05);
  \coordinate (sbranch) at (2,-4.5);
  \draw
  node at (2,-4.89)[text width=0.15cm, name=s] {$s$}
  node at (2,-1) [sum, name=suma1,inner sep=0pt] {\suma}
  node [block, below=0.4cm of suma1] (f) {\dervone}
  node at (0.5,-2.8) [block, name=softmax] {softmax}
  node [block, right=1cm of softmax] (lin2) {linear}
  node [block, below=0.4cm of softmax] (fc1) {FC}
  node [block, below=0.4cm of lin2] (fc2) {FC}
  node at (2,-.15)[text width=1.1cm, name=r] {\rew};
  \draw(s)--(sbranch);
  \draw[-{Latex[length=2mm]}](sbranch) -| node {}(fc1);
  \draw[-{Latex[length=2mm]}](sbranch) -| node {}(fc2);
  \draw[-{Latex[length=2mm]}](fc1) -- node {} (softmax);
  \draw[-{Latex[length=2mm]}](fc2) -- node {}(lin2);
  \draw[-{Latex[length=2mm]}](softmax) |- node
  {$\pi_{\phi} (\cdot |s)$}(f);
  \draw[-{Latex[length=2mm]}](lin2) |- node[el,right] {\bfn}(suma1);
  \draw[-{Latex[length=2mm]}](f) -- node {}(suma1);
  \draw[-{Latex[length=2mm]}](suma1) -- node {}(r);
  \coordinate (s2branch) at (8,-4.5);
  \draw
  node at (8,-4.89)[text width=0.15cm, name=s2] {$s$}
  node at (8,-1) [sum, name=suma12, inner sep=0pt] {\suma}
  node [block, below=0.4cm of suma12] (f2) {\dervtwo}
  node at (5.5, -2.8)[text width=0.2cm, name=a] {$a$}
  node at (6.6,-2.8) [circle, name=softmax2,inner sep=2pt,draw]{}
  node at (9.5, -2.8) [block, name=lin1] {linear}
  node [block, below=0.4cm of lin1] (fc22) {FC}
  node [block, left=1cm of fc22] (fc12) {FC}
  node at (8,-.15)[text width=1.1cm, name=r2] {\rewtwo};
  \draw(s2)--(s2branch);
  \draw[-{Latex[length=2mm]}](s2branch) -| node {}(fc12);
  \draw[-{Latex[length=2mm]}](s2branch) -| node {}(fc22);
  \draw[-{Latex[length=2mm]}](a) -- node {} (softmax2);
  \draw[-{Latex[length=2mm]}]([xshift=-0.12cm]fc12.north) -- node[el,left] {$\mu_{\phi}(s)$} (softmax2);
  \draw[-{Latex[length=2mm]}]
  ([xshift=0.5cm]fc12.north) |- node[el,right] {$\Sigma_{\hspace*{-.5pt}\phi}\hspace*{-.5pt}(s\hspace*{.5pt})$} (softmax2);
  \draw[-{Latex[length=2mm]}](fc22) -- node {}(lin1);
  \draw[-{Latex[length=2mm]}](softmax2) |- node
  {$\pi_{\phi}(a|s)$}(f2);
  \draw[-{Latex[length=2mm]}](lin1) |- node[el,right] {\bfn}(suma12);
  \draw[-{Latex[length=2mm]}](f2) -- node {}(suma12);
  \draw[-{Latex[length=2mm]}](suma12) -- node {}(r2);
  \end{tikzpicture}
  \end{adjustbox}
  \caption{Schematic illustrations of MD-AIRL reward architectures for discrete (left) and continuous control (right)}\label{fig:arch}
\end{figure}
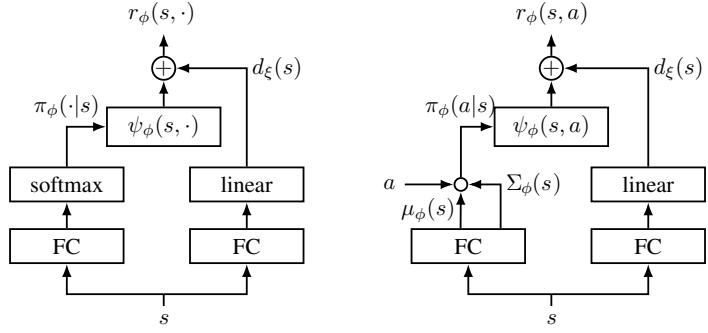

\subsection{Details on imitation learning data}
\begin{wrapfigure}{r}{0.398\textwidth}
  \vskip-25pt
  \centering
  \includegraphics[width=100pt]{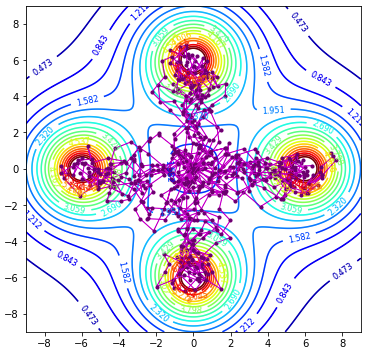}
  \vskip-2pt
  \caption{ Visualization of the multigoal environment and expert trajectories.}\label{fig:multigoal_expert}
  \vskip-14pt
\end{wrapfigure}
\textbf{The multigoal environment.\topicquad}Let the 2D coordinate denote the position of a point mass on the environment. In the multigoal environment, the agent, the point mass, is initially located according to the normal distribution $\cN(\mathbf{0},(0.1)^2 \rI)$. The four goals are located at $(6,0)$, $(-6,0)$, $(0,6)$, and $(0,-6)$, where the agent can move a maximum of 1 unit per time step for each coordinate. The ground-truth reward is given by the difference between successive values of a Gaussian mixture depicted as the contour plot in \fig{fig:multigoal_expert}.

\textbf{Collecting expert demonstrations.\topicquad}For the multigoal environment and as well as MuJoCo benchmarks, we trained an expert policy using the SAC algorithm \cite{sac2} and the demonstration data of IRL were collected from executing the trained RL expert. Only for the multigoal environment, the trajectories are post-processed to precisely capture optimal behavior (reaching each goal evenly). That is, we set the ratio of trajectories reaching each goal to exactly 25\%.

\subsection{Modeling policy with full covariance \Gs{} distributions}
We used the full covariance \Gs{} distribution in this experiment (as well as the toy experiment in \fig{fig:md_toy}). Note that the covariance matrix is positive-definite and symmetric. To achieve numerically stable computation, we applied LDL decomposition in Appendix~\ref{appsubsect:paraldl} to model covariance matrix using unit lower- and upper-triangle matrices, and a diagonal matrix. As a result, the policy network outputs a vector $[\bm{\mu}(s);\bm{\sigma}(s); \bm{l}(s)]^\sT$ for $s\in\cS$ where the additional vector $\bm{l}(s)$ denotes $\frac{d(d-1)}{2}$ entries of unit lower triangular matrix. Denote $\bm{L}(s)$ as a unit lower triangular matrix from $\bm{l}(s)$. For example, the covariance matrix can be reconstructed by
\begin{equation*}
  \bm{\Sigma}(s)=\bm{L}(s) [\diag(\bm{\sigma}(s)) ]\bm{L}(s)^\sT.
\end{equation*}
In this case, the action samples can be efficiently calculated by
\begin{equation}
  a=\bm{\mu}(s) +\bm{L}(s)(\bm{\sigma}(s)\cdot z)\quad z\sim\cN(\bm{0},\rI)
\end{equation}
Computing inverses, determinants and multiplications with unit triangular matrices and triangular and diagonal matrices can be efficiently performed by numerical libraries, where we used the accelerated linear algebra library from TensorFlow \citep{tf}. Therefore, we can efficiently model the \Bg{} divergence and reward using neural networks as provided in \appsect{appsect:full}.
We clipped the standard deviation as $\sigma_i(s) \in [\ln0.01, \ln2]$ using $\mathtt{tanh}$ for the stability. In \MJ{} experiments, instead of directly using squashed policies proposed in SAC \citep{sac}, we assumed the application of $\mathtt{tanh}$ as a part of the environment (known as \textit{hyperbolized} environments of RAIRL \citep{rairl}). Specifically, after an action $a$ is sampled from the policies, we passed $\mathtt{tanh}(a/1.01)*1.01$ to the environment. Then, we additionally clipped the hyperbolized actions to $1$, if the given environment is not tolerant to the excessive values of action.

\subsection{Hyperparameters}
\begin{center}
\begin{tikzpicture}[tight background, every node/.style={inner sep=0,outer sep=0},on grid]
  \begin{scope}[xshift=-.85cm]
    \node[scale=0.95] at (0, 0) {Table 4: The bandit environments.};
    \node[scale=0.85] at (0, -2.05) {
      \begin{tabular}{ll}
        \toprule
      Parameter&Value\\
        \midrule
      Learning rate (policy)&$1\cdot 10^{-3}$\\
      Learning rate (reward)&$1\cdot 10^{-3}$\\
      $\eta_1$&2.0\\
      $\eta_T$&0.5\\
      $\lambda$&1\\
      Discount factor ($\gamma$)&0.0\\
      Batch size&16\\
      Steps per update&50\\
      Total steps&300,000\\
        \bottomrule
      \end{tabular}};
  \end{scope}
  \begin{scope}[xshift=4.8cm]
    \node[scale=0.95] at (0, 0) {Table 5: The multigoal environment.};
    \node[scale=0.85] at (0, -2.2) {
      \begin{tabular}{ll}
        \toprule
      Parameter&Value\\
        \midrule
      Learning rate (policy)&$5\cdot 10^{-4}$\\
      Learning rate (reward)&$5\cdot 10^{-4}$\\
      Replay size &10,000\\
      $\eta_1$&1.0\\
      $\etaT$&0.1\\
  $\lambda$&1\\
  Discount factor ($\gamma$)&0.5\\
  Batch size&512\\
  Steps per update&50\\
  Total steps&300,000\\
    \bottomrule
  \end{tabular}};
  \end{scope}
  \begin{scope}[xshift=2cm,yshift=-4.8cm]
    \node[scale=0.95] at (0, 0) {Table 6: The \MJ{} environments.};
  \node[scale=.85] at (0,-2.37) {\begin{tabular}{lcccc}
    \toprule
  Parameter&Hopper-v3&Walker2d-v3&HalfCheetah-v3&Ant-v3\\
    \midrule
  Learning rate (policy)&$5\cdot 10^{-4}$&$5\cdot 10^{-4}$&$5\cdot 10^{-4}$&$5\cdot 10^{-4}$\\
  Learning rate (reward)&$5\cdot 10^{-4}$&$5\cdot 10^{-4}$&$5\cdot 10^{-4}$&$5\cdot 10^{-4}$\\
  Replay size &500,000&500,000&500,000&500,000\\
  $\eta_1$&1.0&1.0&1.0&1.0\\
  $\eta_T$&0.1&0.1&0.1&0.05\\
  $\lambda$&0.01&0.01&0.01&0.001\\
  Discount factor ($\gamma$)&0.99&0.99&0.99&0.99\\
  Batch size&256&256&256&256\\
  Steps per update&1,000&1,000&1,000&1,000\\
  Initial exploration&10,000&10,000&10,000&10,000\\
  Total steps&1,000,000&2,000,000&2,000,000&2,000,000\\
  \bottomrule
  \end{tabular}};
  \end{scope}
\end{tikzpicture}
\end{center}

\medskip
\section{Supplementary Experimental Results}\label{appsect:additional}
Guessing the optimal choice of scheduling $\eta_t$ for a short period of time is often challenging. \tab{tab:scheduling} provides extended results of the experiments depicted in \fig{fig:md_toy}. The table contains the performance of imitation learning varies by series of $\{\eta\}_{t=1}^{100}$ controlled by two hyperparameters $\alpha_1$ and $\alphaT$. These results substantially helped our hyperparameter choices of step sizes in the learning of MD-AIRL reward functions in \sect{sect:expr}.
\begin{table}[h]
\setcounter{table}{6}
\centering
\caption{Bregman divergences $D_\Omega(\pi_t\Vert\piE\hspace*{-1pt})$ after the final steps ($T=100$) with different step size scheduling ($10$ trials with different seeds).}\label{tab:scheduling}
  \begin{adjustbox}{width=0.99\textwidth, center}
\begin{tabular}{ c| c c c c }
  \toprule
  $(\eta_1, \etaT)$ & \Sh{} ($q=1$) & \Ts{} ($q=1.1$) & \Ts{} ($q=1.5$) & \Ts{} ($q=2$) \\
  \midrule
  $(2,2)$ & - & $1.19502\pm0.64091$ & $0.33996\pm0.26144$ & $0.68528\pm0.46490$\\
  $(1,1)$ & $0.08601\pm0.07951$ & $0.15432\pm0.25206$ & $0.22232\pm0.31760$ & $0.11193\pm0.16801$ \\
  $(0.5, 0.5)$ & $0.06707\pm0.06042$ & $0.07629\pm0.06931$ &$\mathbf{0.03801\pm0.04611}$& $0.06056\pm0.05854$\\
  $(0.2,0.2)$& $\mathbf{0.01051\pm 0.00920}$ & $\mathbf{0.03221\pm0.03239}$ & $1.21205\pm0.00011$&$\mathbf{0.01805\pm0.01587}$\\
  \midrule
  $(10,1)$&$0.09861\pm0.09546$&-&$1.09783\pm0.53399$&$0.65887\pm0.59846$\\
  $(1,0.1)$&$\mathbf{0.00706\pm0.00863}$&$\mathbf{0.00933\pm0.01089}$&$\mathbf{0.01660\pm0.01152}$&$\mathbf{0.02141\pm0.00899}$\\
  $(1,0.01)$&$0.01500\pm0.01510$&$0.01109\pm0.01405$&$0.02075\pm0.02099$&$0.03348\pm0.01901$\\
\bottomrule
\end{tabular}
\end{adjustbox}
\end{table}\\
The results indicate that scheduling $\eta_t$ with a harmonic progression $\eta_1=1$ and $\etaT=0.1$ shows the overall best results in this experiment. From these results and our theoretical arguments, MD is recommended to gradually lower $\eta_t$, but the rate of change has to be carefully considered, especially when $T$ is not significant. Thus, there are suitable scheduling ways of the step size $\eta_t$ in practice, depending on $\Omega$ and $T$. As a rule of thumb, we recommend setting the initial step size close to $1$ and scheduling to $\eta_{\scriptscriptstyle T} \approx 0$ when there is a reasonable amount of time for training.

\end{document}